\PassOptionsToPackage{authoryear}{natbib}
\documentclass[11pt, a4paper]{article}

\usepackage[utf8]{inputenc}
\usepackage[T1]{fontenc}
\usepackage{geometry}
\usepackage{amsmath, amssymb, amsthm, amsfonts}
\usepackage{mathtools}
\usepackage{physics}   
\usepackage{esint}
\usepackage{graphicx}
\usepackage{hyperref}
\usepackage{xcolor}
\usepackage{enumitem}
\usepackage{dsfont}
\usepackage[square, numbers, sort, compress]{natbib}
\newcommand{\R}{\mathbb{R}}
\newcommand{\E}{\mathbb{E}}

\DeclareMathOperator*{\argmin}{arg\,min}

\theoremstyle{plain}
\newtheorem{theorem}{Theorem}[section]
\newtheorem{lemma}[theorem]{Lemma}
\newtheorem{proposition}[theorem]{Proposition}
\newtheorem{corollary}[theorem]{Corollary}

\theoremstyle{definition}
\newtheorem{definition}{Definition}[section]
\newtheorem{assumption}{Assumption}

\theoremstyle{remark}
\newtheorem{remark}{Remark}[section]

\title{\textbf{Adversarial Robustness of NTK Neural Networks}}
\author{
Yuxuan Hou\textsuperscript{1} \quad
Jingfu Peng\textsuperscript{2} \quad
Yuhong Yang\textsuperscript{2,*}
\\[4pt]
\small \textsuperscript{1}\textit{Qiuzhen College, Tsinghua University}
\\
\small \textsuperscript{2}\textit{Yau Mathematical Sciences Center, Tsinghua University}
\\[3pt]
\small \textsuperscript{*}Corresponding author: \texttt{yyangsc@tsinghua.edu.cn}
}
\date{2026}

\begin{document}

\maketitle

\begin{abstract}
    Deep learning models are widely deployed in safety-critical domains, but remain vulnerable to adversarial attacks. In this paper, we study the adversarial robustness of NTK neural networks in the context of nonparametric regression. We establish minimax optimal rates for adversarial regression in Sobolev spaces and then show that NTK neural networks, trained via gradient flow with early stopping, can achieve this optimal rate. However, in the overfitting regime, we prove that the minimum norm interpolant is vulnerable to adversarial perturbations.
    
    \vspace{0.5em}
    \noindent \textbf{Keywords:} Adversarial Robustness, Neural Tangent Kernel, Nonparametric Regression, Minimax Rate, Sobolev Spaces.
\end{abstract}
\section{Introduction}

\subsection{Background}
Deep learning models have achieved remarkable success in various fields, including large language modeling (\cite{brown2020language}), protein structure prediction (\cite{jumper2021highly}), and high-resolution image generation (\cite{rombach2022high}), to name a few. Neural networks are at the core of modern machine learning, and researchers have devoted significant effort to understand their underlying principles, including algorithms (see, e.g.,~\cite{rumelhart1986learning, lecun1998gradient, he2016deep, vaswani2017attention, kingma2015adam, glorot2010understanding}), approximation ability (see, e.g., \cite{schmidthieber2019deep,bos2022convergence,barron2002universal,barron1994approximation}), and generalization ability (see, e.g., \cite{Kohler2021,SchmidtHieber2020,barron1994approximation}). Among all these, the Neural Tangent Kernel (NTK) framework (\cite{jacot2018neural}) has emerged as one of the most successful approaches for understanding overparameterized neural networks by providing an analytically tractable and simplified approximation of their training dynamics. Consequently, the generalization ability of such NTK networks has been widely explored (see, e.g., \cite{li2024eigenvalue,lai2023generalization,chen2024impacts}).

Modern deep learning architectures are typically highly overparameterized and even perfectly interpolating the training data, yet they generalize well to unseen test data---a phenomenon termed benign overfitting (\cite{bartlett2020benign}). As demonstrated by \cite{zhang2017understanding}, deep networks can fit data exactly while still achieving low test error on real datasets, challenging the classical bias--variance trade-off. \cite{belkin2019reconciling} characterizes the double descent risk curve where test error decreases again after the interpolation threshold. These works seem to establish that extreme overparameterization may enable the surprising generalization performance observed in modern deep learning.

However, while most studies focus on the generalization ability of neural networks under standard losses (e.g. $L^2$), a key issue remains: Overparameterized and overfitted neural networks are often vulnerable to adversarial attacks. As highlighted by foundational works (\cite{szegedy2014intriguing, goodfellow2015explaining}), these subtle,  imperceptible perturbations to input data ($X \to X'$) can drastically degrade the performance of the trained model. From the viewpoint of \cite{ilyas2019adversarial}, the vulnerability is inherently tied to networks' tendency to memorize non-robust features.

There is little theoretical understanding of the adversarial robustness of neural networks. This gap persists even within the highly tractable NTK regime. Some studies focus on adversarial training for NTK networks to minimize adversarial loss. For example, \cite{gao2019convergence} prove that the optimal robust parameters in a sufficiently small neighborhood of the initialized parameters can be attained with proper training time. In contrast, \cite{fu2025theoretical} show that prolonged adversarial training eventually degrades adversarial robustness. Despite these insights, analyses of standard-trained NTK models remain scarce.

Furthermore, existing literature on the adversarial robustness of overfitted estimators focuses almost exclusively on high-dimensional linear regression. \cite{donhauser2021interpolation} show that overfitted regression estimators are adversarially vulnerable even when there is no noise under a high-dimensional setting. \cite{hao2024surprising} demonstrate that adversarial risk can diverge due to overfitting. However, the analysis of \cite{hao2024surprising} relies on stringent assumptions of the true function that effectively reduce the NTK to a linear model. Currently, the adversarial robustness of standard-trained and overfitted NTK models remains largely unexplored.

In the above background, we aim to answer two fundamental questions regarding standard-trained neural networks in the deep NTK regime: First, does the standard training procedure inherently yield adversarially robust estimators? Second, does overfitting exacerbate adversarial vulnerability, or can it somehow escape from it?

\subsection{Contributions}
In this work, we provide a theoretical analysis of the adversarial robustness of NTK neural networks through the lens of nonparametric regression. Unlike classical nonparametric settings that often rely on H\"older spaces, we specifically analyze this problem in Sobolev spaces, as they naturally coincide with the Reproducing Kernel Hilbert Space (RKHS) of the NTK (which will be stated in Section~4). Our main contributions are summarized as follows:

\begin{itemize}
    \item \textbf{Minimax Optimal Rates for Adversarial Regression:} We establish the minimax learning rate under adversarial perturbations in Sobolev spaces. Furthermore, we design an algorithm based on Lepski's method that is adaptive to an unknown Sobolev index.

    \item \textbf{Adversarial Optimality of NTK Neural Networks with Early Stopping:} We first build the equivalence between the RKHS of the NTK kernel and a Sobolev space on $[0,1]^d$, then prove that NTK neural networks trained via gradient flow and early stopping can achieve the aforementioned minimax optimal adversarial risk.

    \item \textbf{Overfitting Harms Adversarial Robustness:} We not only show that the adversarial risk of the overfitted estimator diverges, but also precisely characterize when this occurs during the training process. Furthermore, we demonstrate that achieving standard $L^2$ consistency via exact data interpolation does not guarantee robustness, even in fixed dimensions. Specifically, we prove that the spiky kernel gradient flow estimator in \cite{haas2023mind} suffers from infinite adversarial risk after a finite training time, despite successfully achieving consistency.
\end{itemize}

\subsection{Other Related Works}
\textbf{Minimax Rates in Adversarial Nonparametric Regression.} \cite{zhao2024robust} and \cite{moradi2025adversarial} examine settings where the input training data is corrupted. \cite{peng2024minimax} and \cite{peng2024adversarial} establish the first nonparametric minimax rates of convergence for the H\"older class under future $X$ attacks. Building upon this framework, \cite{peng2026damage} further show that data interpolation schemes inherently degrade adversarial robustness for learning H\"older smooth functions.

\textbf{Adversarial Learning.} There are many methods to enhance adversarial robustness in practice. For example, empirical defenses predominantly rely on adversarial training via Projected Gradient Descent (PGD) (\cite{madry2018towards}), which formulates the training process as a min-max robust optimization problem, or employ regularization frameworks like TRADES (\cite{zhang2019theoretically}) to explicitly balance standard accuracy and adversarial risk. However, since empirical defenses are frequently circumvented by stronger adaptive attacks, the community is profoundly concerned with certified robustness. In this domain, Randomized Smoothing (\cite{rekavandi2024certified}) achieves certified robustness by convolving the base model's predictions with Gaussian noise (a concept we adapt for continuous regression in Section 5.3), while IBP (\cite{gowal2018effectiveness}) offers a scalable certified-training alternative by combining interval bounds with a tailored loss and an \(\epsilon\)-curriculum.

\section{Problem Setup}

\subsection{Adversarial Risk}
We consider the nonparametric regression model with true function $f^*$ and noise $\xi$, which consists of $n$ i.i.d. samples $(X_i,Y_i)_{i=1}^n$:
\begin{align*}
    Y = f^*(X) + \xi, \quad i=1,\dots,n,
\end{align*}
where $f^*$ is assumed to be in a function class $\mathcal{M}$.

We aim to construct an estimator which has good performance under the following adversarial risk.
\begin{definition}[Adversarial Risk]
    Consider an adversary capable of perturbing the input $X$ within an $r$-neighborhood $B(X,r) \triangleq \{x' \in [0,1]^d : \|x' - X\| \le r\}$ where $\|\cdot \|$ is the Euclidean norm. We define the adversarial risk of an estimator $\hat{f}$ relative to the true function $f^*$ as:
    \begin{align*}
        R_A(\hat{f}, f^*) \triangleq \mathbb{E}_{\mathcal{D}_n, X} \left[ \sup_{x' \in B(X,r) \cap [0,1]^d} |\hat{f}(x') - f^*(X)|^2 \right],
    \end{align*}
    where the expectation is taken over both the training data $\mathcal{D}_n = \{(X_i, Y_i)\}_{i=1}^n$ and the test point $X$.
\end{definition}
Note that when $r=0$, the adversarial risk reduces to the standard $L^2$ risk.

\subsection{Structure of Neural Network}
Neural networks can be used as a type of nonparametric estimator of $f^*$.
We consider a standard fully connected feedforward neural network with $L$ hidden layers. Let $x \in \mathbb{R}^d$ be the input value. We denote the width of the $l$-th hidden layer as $m_l$ for $l = 1, \dots, L+1$, and let $m_0 = d,m_{L+1}=1$. The network is parameterized by a collection of weights and biases $\theta = \{W^{(l)}, b^{(l)}\}_{l=0}^{L}$, where $W^{(l)} \in \mathbb{R}^{m_{l+1} \times m_l}$ and $b^{(l)} \in \mathbb{R}^{m_{l+1}}$.

The forward propagation of the standard neural network is defined recursively. Let the initial input be $\alpha^{(0)}(x) = x$. For each hidden layer $l = 1, \dots, L$, the pre-activation $z^{(l)}(x)$ and post-activation $\alpha^{(l)}(x)$ are computed as:
\begin{align*}
    z^{(l)}(x) &= W^{(l-1)} \alpha^{(l-1)}(x) + b^{(l-1)}, \\
    \alpha^{(l)}(x) &= \sigma\left(z^{(l)}(x)\right),
\end{align*}
where $\sigma(\cdot)$ is a non-linear activation function, which we specify as the ReLU function $\sigma(u) = \max(u, 0)$ applied element-wise. The final scalar output of the network $f^{NN}(x; \theta): \mathbb{R}^d \to \mathbb{R}$ is given by a linear transformation of the features from the last hidden layer:
\begin{align*}
    f^{NN}(x; \theta) = W^{(L)} \alpha^{(L)}(x) + b^{(L)}.
\end{align*}
We can approximate the true function $f^*$ by learning the parameters of $f^{NN}$.

To theoretically analyze these dynamics, we study an idealized NTK estimator; its detailed construction is in Section 4. In the overparameterized regime, this NTK construction serves as a simplification of the standard neural network defined above.

\subsection{Sobolev Spaces}
To study the performance of the NTK estimator in an adversarial setting, we assume $f^*$ is in a Sobolev space, which is closely related to the NTK kernel (for details see Section 4).
Let $\mathcal{F}$ and $\mathcal{F}^{-1}$ denote the Fourier transform and its inverse operator, respectively. For a function $f \in L^1(\mathbb{R}^d)$, the Fourier transform $\mathcal{F}\{f\}$ is defined by the integral
\begin{align*}
    \mathcal{F}(f)(\xi) = \frac{1}{(2\pi)^{d/2}} \int_{\mathbb{R}^d} f(x) e^{-i \langle x, \xi \rangle} \, dx, \quad \xi \in \mathbb{R}^d,
\end{align*}
where $\langle x, \xi \rangle = \sum_{k=1}^d x_k \xi_k$ denotes the standard Euclidean inner product on $\mathbb{R}^d$.
By a density argument, the Fourier transform operator can be uniquely extended to a unitary operator on $L^2(\R^d)$.

\begin{definition}[Sobolev Space $H^s$]
    Let $s \ge 0$ and $d \ge 1$. The Sobolev space $H^s(\R^d)$ is defined via the Fourier transform as:
    \begin{align*}
        H^s(\R^d) := \left\{ f \in L^2(\R^d) : \|f\|_{H^s(\R^d)}^2 := \int_{\R^d} (1+\abs{\xi}^2)^s \abs{\mathcal{F}({f})(\xi)}^2 \, d\xi < \infty \right\}.
    \end{align*}
\end{definition}
\begin{remark}
    For any non-negative integer $s$, the Sobolev space $H^s(\mathbb{R}^d)$ consists of all functions in $L^2(\mathbb{R}^d)$ whose weak partial derivatives up to order $s$ exist and belong to $L^2(\mathbb{R}^d)$.
\end{remark}
For a bounded domain $\Omega$, we define the Sobolev space $H^s(\Omega)$ as the restriction of functions in $H^s(\R^d)$ to $\Omega$. The space $H^s(\Omega)$ is equipped with the standard quotient norm:
\begin{align*}
    \|f\|_{H^s(\Omega)} := \inf \left\{ \|g\|_{H^s(\R^d)} : g \in H^s(\R^d) \text{ and } g|_{\Omega} = f \text{ a.e.} \right\}.
\end{align*}
We define the Sobolev ball on $[0,1]^d$ as $\mathcal{H}^{s}(L) = \{f^* \in H^s([0,1]^d) : \|f^*\|_{H^s([0,1]^d)} \le L\}$.

\subsection{Technical Assumptions}
We consider the nonparametric regression model defined in Section 2.1. Throughout most of this paper, we let $\mathcal{M}=\mathcal{H}^s(L)$ and impose the following assumptions on the covariates and random errors.
\begin{assumption}[Covariates]
    The covariates $X_i \in [0,1]^d$ are independent and identically distributed (i.i.d.) with a probability density function $f_X$ with respect to the Lebesgue measure. We assume the density is bounded away from zero and infinity, i.e., there exist constants $c, C > 0$ such that $c \le f_X(x) \le C$ for all $x \in [0,1]^d$.
\end{assumption}

\begin{assumption}[Noise]
    The noise terms $\xi_i$ are independent and identically distributed (i.i.d.) and independent of the covariates $\{X_i\}_{i=1}^n$. We assume that $\mathbb{E}[\xi_i] = 0$ and $\xi_i$ is $\sigma$-sub-Gaussian, i.e., its moment generating function satisfies $\mathbb{E}[\exp(\lambda \xi_i)] \le \exp(\lambda^2 \sigma^2 / 2)$ for all $\lambda \in \mathbb{R}$.
\end{assumption}

\subsection{Goal of the Paper}
Specifically, we seek to answer the following three questions:
\begin{enumerate}
    \item What is the minimax rate of learning a function in the Sobolev ball $\mathcal{H}^s(L)$ under adversarial risk? That is, we aim to determine $\inf_{\hat{f}} \sup_{f^* \in \mathcal{H}^{s}(L)} R_A(\hat{f}, f^*)$.
    \item Can neural networks operating in the NTK regime achieve this minimax rate?
    \item What happens to the model's adversarial robustness after it overfits (i.e., as the training time $t \to \infty$)?
\end{enumerate}
Those questions are answered in Sections 3, 4, and 5 respectively.
\section{Adversarial Minimax Rate of Sobolev Spaces}
\subsection{Main Minimax Theorem}

We begin by deriving the information-theoretic lower bound, which characterizes the minimum possible adversarial risk achievable by any estimator.

\begin{proposition}[Minimax Lower Bound]
    \label{thm:lower_bound}
    Suppose the regression model in section 2.1 follows the two assumptions about covariate and noise in Section 2.4, and $r\leq 1$. There exists a constant $c=c(s,d,L,\sigma) > 0$ such that the minimax adversarial risk over the Sobolev ball $\mathcal{H}^{s}(L)$ satisfies:
    \begin{align*}
        \inf_{\hat{f}} \sup_{f^* \in \mathcal{H}^{s}(L)} R_A(\hat{f}, f^*) \ge c(r^{2\min(1,s)} + n^{-\frac{2s}{2s+d}}).
    \end{align*}
\end{proposition}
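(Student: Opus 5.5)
We prove the bound by treating the two terms separately. Since $\max(a,b)\ge (a+b)/2$, it suffices to show that the minimax adversarial risk is at least a constant times $n^{-2s/(2s+d)}$, and also at least a constant times $r^{2\min(1,s)}$.

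\textbf{Statistical term.} Because $\sup_{x'\in B(X,r)\cap[0,1]^d}|\hat f(x')-f^*(X)|^2\ge |\hat f(X)-f^*(X)|^2$, the adversarial risk dominates the standard $L^2(P_X)$ risk. It therefore suffices to invoke the classical lower bound $n^{-2s/(2s+d)}$ for regression over $\mathcal{H}^s(L)$. We obtain it by Fano's method (or Assouad's lemma) in the standard way:
\begin{itemize}
\item Take a smooth compactly supported bump $\phi$.
\item Partition $[0,1]^d$ into $m^d$ cubes with $m\asymp n^{1/(2s+d)}$.
\item Set $f_\omega=\sum_j \omega_j\, c_0 L\, m^{-s}\phi(m(x-x_j))$ for $\omega\in\{0,1\}^{m^d}$.
\end{itemize}
The $H^s$ norm of $f_\omega$ is controlled by scaling, namely $\|\cdot\|_{H^s}^2\lesssim m^{d}\cdot m^{2s}m^{-2s}m^{-d}\cdot$const. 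Varshamov--Gilbert gives well-separated codewords. The KL divergence between the induced data laws is bounded using the Gaussian-noise instance (Gaussian noise is $\sigma$-sub-Gaussian, so it is an admissible model) together with the density bounds $c\le f_X\le C$. This step is routine.

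\textbf{Adversarial term.} We use a two-point argument that exploits the fact that the adversary perturbs the test input while $f^*(X)$ is evaluated at the clean point. For $s\le 1$, take a single function $f_0(x)=a\,\psi(x_1)$ varying in the first coordinate, for example $f_0(x)=a\,\eta\,\sin(\pi x_1/r)$ suitably smoothed, or more simply a bump of width $r$ and height $h=c\,L\,r^{s}$. Its $H^s([0,1]^d)$ norm scales as $h\,r^{-s}$ times a factor at most $1$ (one-dimensional oscillation in $x_1$, extended trivially), so it lies in the ball. We then need a lower bound valid for \emph{any} estimator, even one that knows $f^*=f_0$. The key observation is as follows. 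For each test point $X$, the set $\hat f(B(X,r))$ must be close to $f_0(X)$. But if $f_0$ oscillates with amplitude $h$ on scale $r$, then points $X$ and $X'$ with $|X-X'|\le r$ share neighborhoods yet have different $f_0$ values. Concretely, if $|f_0(X)-f_0(X')|\ge h$ and some $x''\in B(X,r)\cap B(X',r)$ exists, then
\[
\sup_{B(X)}|\hat f-f_0(X)|+\sup_{B(X')}|\hat f-f_0(X')|\ge |f_0(X)-f_0(X')|\ge h.
\]
Pairing up points via a measure-preserving map $X\mapsto X'$ (a shift by about $r$ in $x_1$, within a region of probability bounded below using $f_X\ge c$) and averaging gives $R_A\gtrsim h^2\asymp r^{2s}$. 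This holds for every estimator, even without data.

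For $s>1$, take instead the linear function $f_0(x)=c L x_1$, which has bounded $H^s([0,1]^d)$ norm. The same pairing argument with $X'=X+r e_1$ gives a separation of $cLr$, hence $R_A\gtrsim r^2$. The condition $r\le1$ keeps these shifted pairs inside the cube on a set of constant probability, for instance by restricting to points in a subinterval of $x_1$ of length $\gtrsim r$ or using reflections.

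The main obstacle is making the pairing in the adversarial step rigorous. We must choose the shift so that the intersection $B(X,r)\cap B(X',r)\cap[0,1]^d$ is nonempty, show that the map preserves measure up to the constants $c,C$, and verify the $H^s$ norm bound for the $r$-scale oscillation uniformly in $r\le1$ (including the case where $r$ is of constant order). A periodic construction in $x_1$ with period about $2r$ handles the uniform coverage most cleanly.
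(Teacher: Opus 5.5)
Your proposal is correct. For the statistical term it matches the paper, which simply notes $R_A\ge \mathbb{E}\|\hat f-f^*\|_{L^2}^2$ and appeals to the classical rate. For the $r^{2\min(1,s)}$ term, however, you take a genuinely different route.

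\textbf{The paper's route.} It uses the test functions of Peng et al.: a cusp profile with $(x-1/4)^s$ pieces repeated with period $8r$ when $0<s<1$, and a single wide smooth bump $g_0$ when $s>1$. It checks their $H^s$ norms uniformly in $r$ and then imports Peng et al.'s lemma that any $\hat f$ with $\|\hat f-h_0\|_2\le C_1 r^{1\wedge s}$ must satisfy $G_A(\hat f)\ge C_2 r^{1\wedge s}$. The bound follows from the dichotomy ``$\hat f$ is $L^2$-far from $h_0$'' versus ``$G_A(\hat f)$ is large'', using $R_A\ge\mathbb{E}\|\hat f-f^*\|_2^2$ and $R_A\gtrsim G_A^2$.

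\textbf{Your route.} Your pairing argument removes both the dichotomy and the external lemma. If $|X-X'|\le r$, then $X'$ lies in both neighbourhoods, so for every function $g$
\[
\sup_{B(X,r)}|g-f_0(X)|+\sup_{B(X',r)}|g-f_0(X')|\ge|f_0(X)-f_0(X')|.
\]
Integrating over a region and its translate, losing at most a factor $2$ if they overlap and converting to Lebesgue measure via $f_X\ge c$, gives the deterministic bound $\inf_g\mathbb{E}_X\sup_{B(X,r)}|g-f_0(X)|^2\gtrsim r^{2\min(1,s)}$.

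\textbf{Comparison.} Your argument is self-contained and holds for every realization of every estimator. It makes transparent that this term is a pure approximation obstruction, independent of the data. It also uses simpler test functions. A sine of period $2r$ and amplitude $h\asymp Lr^s$ has $H^s$ norm $\lesssim hr^{-s}$, by interpolating between $\|u\|_{L^2}\lesssim h$ and $\|u\|_{H^1}\lesssim h/r$ for a cut-off extension $u$; for $s>1$ a linear function suffices. It also covers $s=1$, which the paper's case split ($0<s<1$, $s>1$) skips. What the paper's route buys is reuse of the functional $G_A$, which it needs again in Section 5.

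\textbf{Points to tighten.}
\begin{itemize}
\item A single bump of width $r$ does not suffice. It only affects a set of measure $\asymp r$ and yields $r^{1+2s}$, so the periodic construction is necessary, not merely cleaner.
\item For $r$ of constant order, shift by $\delta=\min(r,1/2)$ and oscillate on scale $\delta$. This costs only constants since $\delta\ge r/2$; the paper's construction, which needs $\lfloor 1/(8r)\rfloor>1$, also ignores this regime.
\item As you note, the $n^{-2s/(2s+d)}$ term requires a non-degenerate noise instance such as Gaussian noise, because the $\sigma$-sub-Gaussian class contains $\xi\equiv 0$. The paper leaves this implicit.
\end{itemize}
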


The proof relies on the construction of a least-favorable hypothesis. Detailed derivations are provided in Appendix A.

To achieve this lower bound, we analyze a \textit{Projected and Truncated Kernel Ridge Regression} (PT-KRR) estimator.

It is a classical result that the Sobolev ellipsoid $\mathcal{H}^{s}(L)=\{ f\in H^s([0,1]^d) \mid \|f\|_{H^s}\leq L\}$ is a closed and convex subset of $L^{2}([0,1]^d)$ for any $s>0$. This geometric property allows for a well-defined projection operation.

\textbf{Case 1:} $s > d/2$.
Let $\tilde{f}$ be the standard Kernel Ridge Regression (KRR) estimator:
\begin{align*}
    \tilde{f} = \argmin_{g \in H^{s}([0,1]^d)} \left\{ \frac{1}{n}\sum_{i=1}^{n}(Y_{i}-g(X_{i}))^{2} + \lambda \|g\|_{H^s([0,1]^d)}^{2} \right\},
\end{align*}
where the regularization parameter is chosen as $\lambda \asymp n^{-\frac{2s}{2s+d}}$. While $\tilde{f}$ is optimal for standard $L^2$ loss, it can be modified to make it more robust. We define our robust estimator $\hat{f}$ as the $L_2$-projection of $\tilde{f}$ onto the Sobolev ball:
\begin{align*}
    \hat{f} = \mathcal{P}_{\mathcal{H}^{s}(L)}(\tilde{f}) \triangleq \argmin_{g \in \mathcal{H}^{s}(L)} \|g - \tilde{f}\|_{L_{2}([0,1]^d)}.
\end{align*}
\textbf{Case 2:} $s \leq d/2$.
In this low-smoothness regime, we elevate the penalty to a higher-order space. Take $H=H^d([0,1]^d)$, and let $\tilde{f}$ be the regularized KRR estimator:
\begin{align*}
    \tilde{f} = \argmin_{g \in H^{d}([0,1]^d)} \left\{ \frac{1}{n}\sum_{i=1}^{n}(Y_{i}-g(X_{i}))^{2} + \lambda \|g\|_{H^d([0,1]^d)}^{2} \right\},
\end{align*}
where the regularization parameter $\lambda$ is calibrated as $\lambda \asymp n^{-\frac{2d}{2s+d}}$. We first project $\tilde{f}$ onto the target Sobolev ball in the $L_2$ norm to obtain the intermediate estimator $\bar{f}$:
\begin{align*}
    \bar{f} = \mathcal{P}_{\mathcal{H}^{s}(L)}(\tilde{f}) \triangleq \argmin_{g \in \mathcal{H}^{s}(L)} \|g - \tilde{f}\|_{L_{2}}.
\end{align*}
We now apply a hard frequency truncation to define the final estimator $\hat{f}$:
Let $\chi \in C_c^\infty(\mathbb{R}^d)$ be a smooth radial cut-off function such that $0 \le \chi \le 1$, with
\[
\chi(x) =\chi(|x|)
\begin{cases}
1 & \text{if } |x| \le \frac{1}{2}, \\
0 & \text{if } |x| \ge 1.
\end{cases}
\]
Such a function can be constructed explicitly: we begin with the standard smooth non-negative function $f: \mathbb{R} \to \mathbb{R}$ defined by
\[
f(t) =
\begin{cases}
\exp\left(-\frac{1}{t}\right), & t > 0, \\
0, & t \le 0.
\end{cases}
\]
Next, we define a smooth transition function $g: \mathbb{R} \to [0,1]$ by
\[
g(t) = \frac{f(t)}{f(t) + f(1-t)}.
\]
We define $\chi: \mathbb{R}^d \to [0,1]$ explicitly as
\[
\chi(x)= 1 - g\left( \frac{4|x|^2 - 1}{3} \right),
\]
and $\chi_a(x)=\chi(ax)$.
By the Sobolev Extension Theorem (see, e.g., \cite{grafakos2014modern}), we can extend $\bar{f}$ to an element of $H^s(\mathbb{R}^d)$.
Now we construct our final estimator
\begin{align*}
    \hat{f} = \mathcal{F}^{-1} \left( \mathcal{F}(\bar{f}) \chi_r \right).
\end{align*}
Taking a restriction on $[0,1]^d$ gives the final estimator.
\begin{proposition}[Minimax Upper Bound]
    \label{thm:upper_bound}
    Under the two assumptions in section 2.4, the estimator $\hat{f}$ achieves the minimax optimal rate. That is, for a constant $C=C(s,d,L,\sigma) > 0$:
    \begin{align*}
        \sup_{f^* \in \mathcal{H}^{s}(L)} R_A(\hat{f}, f^*) \le C(r^{2\min(1,s)} + n^{-\frac{2s}{2s+d}}).
    \end{align*}
\end{proposition}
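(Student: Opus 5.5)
We split the adversarial risk at the true value at the perturbed point. For $x'\in B(X,r)\cap[0,1]^d$,
\[
|\hat f(x')-f^*(X)|^2\le 2\|\hat f-f^*\|_{L^\infty}^2+2|f^*(x')-f^*(X)|^2 ,
\]
so it suffices to bound an $L^\infty$ estimation error and the local oscillation of $f^*$. However, $\|\hat f-f^*\|_{L^\infty}^2$ is not of order $n^{-2s/(2s+d)}$ in general; it carries a logarithm, or fails outright when $s\le d/2$. We therefore use a finer three-term decomposition and treat the two cases separately.

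\textbf{Case 1 ($s>d/2$).} Write
\[
\hat f(x')-f^*(X)=\bigl[\hat f(x')-\hat f(X)\bigr]+\bigl[\hat f(X)-f^*(X)\bigr].
\]
The second bracket, squared and averaged over $X$ and the data, is at most $C\,\E\|\hat f-f^*\|_{L^2}^2$. Since $\mathcal H^s(L)$ is closed and convex in $L^2$ and contains $f^*$, the projection is nonexpansive, so $\|\hat f-f^*\|_{L^2}\le\|\tilde f-f^*\|_{L^2}$. The standard KRR rate with $\lambda\asymp n^{-2s/(2s+d)}$ in the Sobolev RKHS (covariate density bounded above and below, sub-Gaussian noise) then gives $\E\|\tilde f-f^*\|_{L^2}^2\lesssim n^{-2s/(2s+d)}$.

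For the first bracket we use that $\hat f\in\mathcal H^s(L)$ deterministically, which is the whole point of the projection. Sobolev embedding (via the extension to $\R^d$) gives $H^s\hookrightarrow C^{0,\min(1,s-d/2)}$, but this exponent is too weak. We want $\sup_{x'\in B(X,r)}|g(x')-g(X)|$ to have $L^2(X)$-norm at most $C L r^{\min(1,s)}$ for every $g$ in the ball. For $s\ge1$ this follows from $|g(x+h)-g(x)|\le\int_0^1|\nabla g(x+th)|\,|h|\,dt$, so the supremum over $|h|\le r$ is bounded by $r$ times a maximal function of $|\nabla g|$ along segments, which is $L^2$-bounded. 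For $d/2<s<1$ (only possible when $d=1$) we use the Fourier representation $g(x+h)-g(x)=\int(e^{i\langle h,\xi\rangle}-1)e^{i\langle x,\xi\rangle}\mathcal F g\,d\xi$ and a Littlewood--Paley split at frequency $1/r$. Low frequencies contribute $r\,|\xi|$ and high frequencies are controlled by the supremum bound $\sum_{|\xi|>1/r}$, giving $r^{s}$ since $s>d/2$. Adding the two brackets proves Case 1.

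\textbf{Case 2 ($s\le d/2$).} Now $\hat f=\bar f*\check\chi_r$ is band-limited to $|\xi|\le 1/r$, and we decompose
\[
\hat f(x')-f^*(X)=\bigl[\hat f(x')-\hat f(X)\bigr]+\bigl[\hat f(X)-\bar f(X)\bigr]+\bigl[\bar f(X)-f^*(X)\bigr].
\]
\emph{Third term.} Nonexpansiveness of the projection and KRR in $H^d$ with $\lambda\asymp n^{-2d/(2s+d)}$ (misspecified source condition of order $s/d$) give the rate $n^{-2s/(2s+d)}$.

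\emph{Second term.} This is a pure smoothing bias: $\|\bar f-\hat f\|_{L^2}^2=\int|1-\chi_r|^2|\mathcal F\bar f|^2\le (r/2)^{-2s}\cdot$ no—rather it is bounded by $\int_{|\xi|>1/(2r)}|\mathcal F\bar f|^2\le (2r)^{2s}L^2$, which is $\lesssim r^{2s}$.

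\emph{First term.} Bound the supremum of the increment via band-limitation. The key tool is a Plancherel--P\'olya / Bernstein-type local maximal inequality for functions with spectrum in the ball of radius $1/r$: $\sup_{|h|\le r}|\hat f(x+h)-\hat f(x)|$ is controlled by a local average of $r|\nabla\hat f|$ over a ball of radius $\sim r$. Its $L^2$ norm is then at most $C\,r\,\|\nabla\hat f\|_{L^2}\le C\,r\cdot r^{s-1}\|\bar f\|_{H^s}$, using $|\xi|\le 1/r$ and $s\le1$. This gives $r^{2s}$. For $1<s\le d/2$ the same argument with $\|\nabla\hat f\|_{L^2}\le L$ yields $r^2$. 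In both ranges this is $r^{2\min(1,s)}$.

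\textbf{Main obstacle.} The hardest step is this supremum-over-a-ball estimate in $L^2(X)$ without losing an $L^\infty$ factor. It is also the reason for using the band-limiting truncation in Case 2 rather than relying on continuity. We expect to do it with the estimate $|\hat f(y)|\lesssim r^{-d/2}\|\hat f\|_{L^2(B(y,2r))}+$ tails, obtained from a Schwartz reproducing kernel $\hat f=\hat f*\check\psi_r$. The remaining concerns are bookkeeping: restricting from $\R^d$ back to $[0,1]^d$, and the uniform constants of the extension operator.
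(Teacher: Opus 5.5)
Your treatment of the statistical terms (nonexpansive projection plus the KRR rate) and your whole Case~2 argument agree with the paper. Deriving the band-limited local maximal inequality from a Schwartz reproducing kernel is a fine substitute for the paper's Peetre-maximal-function lemma. The gap is in Case~1, where you must bound $\int\sup_{|h|\le r}|g(x+h)-g(x)|^2\,dx$ for $g=\hat f\in\mathcal{H}^s(L)$, which is \emph{not} band-limited.

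For $s\ge 1$ you dominate the increment by $r$ times the maximal average of $|\nabla g|$ along segments $\{x+t\omega:0\le t\le \rho\}$, $\rho\le r$, over all directions $\omega$. You then assert that this operator is $L^2$-bounded. That is true only for $d=1$. For $d\ge 2$ it is a Nikodym/Kakeya-type maximal operator, which is unbounded on every $L^p$, $p<\infty$. You can see directly that the step cannot be right. It uses nothing but $\|\nabla g\|_{L^2}$, so it would give $\|\sup_{|h|\le r}|g(\cdot+h)-g|\|_{L^2}\lesssim r\|\nabla g\|_{L^2}$ for every $g\in H^1(\mathbb{R}^d)$. For $d\ge2$ that is false: $H^1$ contains functions unbounded at a point, e.g.\ $\phi(x)\log\log(e/|x|)$, and for these the left side is infinite. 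The hypothesis $s>d/2$ has to enter here. The paper uses it through $H^s\hookrightarrow W^{1,p}$ with $p>d$, the ball-average Morrey bound $|g(x')-g(x)|\lesssim r\,[M(|\nabla g|^q)(x)]^{1/q}$ for $d<q<p$, and boundedness of $M$ on $L^{p/q}$.

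The case $d=1$, $1/2<s<1$ also fails as written. Your ``supremum bound'' on the high frequencies gives only $\|g_{>1/r}\|_{L^\infty}\lesssim r^{s-1/2}\|g\|_{H^s}$, since $\int_{|\xi|>1/r}|\xi|^{-2s}\,d\xi\asymp r^{2s-1}$. The low-frequency bound $|e^{i h\xi}-1|\le r|\xi|$, with the supremum over $h$ taken inside the integral, gives the same $r^{s-1/2}$. This is exactly the H\"older exponent $s-d/2$ you had already rejected, and after squaring it yields $r^{2s-1}$, not $r^{2s}$. A global sup-norm estimate cannot see that large increments occur only on a set of measure $O(r)$, so the missing factor $r$ must come from localization. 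The paper integrates the localized Morrey--Gagliardo inequality $|g(u)-g(v)|^2\lesssim r^{2s-1}\iint_{J_x\times J_x}|g(y)-g(z)|^2|y-z|^{-1-2s}\,dy\,dz$ over $x$ and applies Fubini. Each pair $(y,z)$ lies in $J_x\times J_x$ only for $x$ in a set of measure at most $2r$.

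Alternatively, your own Case~2 tool repairs both gaps at once. Apply it to the band-limited part $P_{\le 1/r}g$, which gives $r\|\nabla P_{\le 1/r}g\|_{L^2}\lesssim r^{\min(1,s)}\|g\|_{H^s}$. For each dyadic block $g_j$ at frequency $2^j>1/r$, use the Peetre bound $\|\sup_{|h|\le r}|g_j(\cdot+h)|\|_{L^2}\lesssim (2^jr)^{d/p}\|g_j\|_{L^2}$ with $p\in(d/s,2)$. These blocks sum to $r^{s}\|g\|_{H^s}$, which is precisely where $s>d/2$ is used.
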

The proof utilizes the properties of Sobolev spaces. The detailed analysis is presented in Appendix A.

Combining the lower and upper bounds, we characterize the exact minimax rate for adversarial nonparametric regression in Sobolev spaces.

\begin{theorem}[Minimax Rate under Adversarial Risk]
    The optimal minimax rate for regression in $H^s([0,1]^d)$ with adversarial perturbation radius $r$ satisfies:
    \begin{align*}
        \inf_{\hat{f}} \sup_{f^* \in \mathcal{H}^{s}(L)} R_A(\hat{f}, f^*)  \asymp
            r^{2\min(1,s)} + n^{-\frac{2s}{2s+d}}.
    \end{align*}
\end{theorem}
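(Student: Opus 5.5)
The theorem follows by pairing the two preceding results. Proposition~\ref{thm:lower_bound} gives, for every estimator $\hat f$, the bound $\sup_{f^*\in\mathcal{H}^s(L)} R_A(\hat f,f^*) \ge c\,(r^{2\min(1,s)}+n^{-\frac{2s}{2s+d}})$, with $c=c(s,d,L,\sigma)$. Taking the infimum over $\hat f$ keeps this bound, so it controls the minimax risk from below. For the upper bound, Proposition~\ref{thm:upper_bound} supplies a concrete estimator: the PT-KRR estimator, with Case~1 used when $s>d/2$ and Case~2 when $s\le d/2$. Its worst-case adversarial risk is at most $C(r^{2\min(1,s)}+n^{-\frac{2s}{2s+d}})$. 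The infimum over all estimators is no larger than the risk of this particular one, which gives the matching upper bound. Together, with constants depending only on $(s,d,L,\sigma)$, these give the asserted $\asymp$.

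Two bookkeeping points need checking. First, the lower bound was stated for $r\le 1$. For $r>1$ the ball $B(X,r)\cap[0,1]^d$ can be all of $[0,1]^d$, since the diameter is $\sqrt d$. There I would argue directly. The lower bound at $r=1$ carries over because the adversarial risk is monotone nondecreasing in $r$: the supremum is taken over a larger set. The upper bound holds as long as Proposition~\ref{thm:upper_bound} has no restriction on $r$. If needed, I would cap $r$ at $\sqrt d$ and absorb the resulting constants, since $r^{2\min(1,s)}\asymp 1$ once $1\le r\le\sqrt d$. Second, the constants must be uniform in $n$ and $r$, which both propositions already state.

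The real content therefore lies in the two propositions; this theorem is only their combination. The delicate step is the upper bound in the regime $s\le d/2$. There $f^*$ need not be continuous, so the adversarial supremum is controlled only through the frequency truncation at scale $1/r$. That truncation must be shown to cost $O(r^{2s})$ in bias while giving a locally Lipschitz or Hölder bound on the smoothed function. Since that proposition is taken as given, the proof of the theorem reduces to the combination above together with the range-of-$r$ remark.
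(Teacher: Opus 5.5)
Your proposal is correct and is the same argument as the paper's. The paper proves this theorem only by combining the lower bound of Proposition~\ref{thm:lower_bound} with the upper bound of Proposition~\ref{thm:upper_bound} for the PT-KRR estimator. Your remark on $r>1$ (monotonicity of $R_A$ in $r$, and capping $r$ at $\sqrt d$) is sound bookkeeping that the paper leaves implicit. Some such cap is genuinely necessary: taking $\hat f=0$ shows the minimax risk is at most $\sup_{f^*}\E f^*(X)^2\lesssim L^2$ for every $r$, whereas $r^{2\min(1,s)}\to\infty$.
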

\begin{remark}
Our result is similar to \cite{peng2024adversarial}, but we focus on different function classes. Specifically, the Sobolev space is a broader class in the sense that there exists a continuous embedding $C^\beta(\Omega) \hookrightarrow H^{s-\epsilon}(\Omega)$ for any $\epsilon>0$. Furthermore, Sobolev spaces are closely related to kernel methods in modern machine learning, whereas Hölder classes primarily appear in nonparametric estimation.
\end{remark}
\subsection{Adaptivity via Lepski's Method}

We first observe that achieving simultaneous adaptivity to the adversarial radius $r$ is fundamentally impossible when $s \le d/2$, which can be formulated in the following theorem.
\begin{theorem}
There exists a function $f_{ill}\in H^{\frac{d}{2}}([0,1]^d)$ such that there does not exist an estimator $\hat{f}: \mathcal{D}_n \to C([0,1]^d)$ and a constant $M=M(s,d,L,\sigma)$ such that for all n and r:
\begin{align*}
    R_A(\hat{f},f_{ill})\leq M(n^{\frac{-2s}{2s+d}}+r^{2\min(1,s)}).
\end{align*}
\end{theorem}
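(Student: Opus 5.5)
The approach exploits the critical embedding failure $H^{d/2}([0,1]^d)\not\hookrightarrow L^\infty$. I would build $f_{ill}\in H^{d/2}$ that is essentially unbounded above on every ball. Then I would use the hypothesised bound twice: at $r=0$ it forces $L^2$-consistency of $\hat f$, and at a fixed $r>0$ it contradicts the blow-up this consistency induces. I read the statement as: $s\le d/2$ is fixed, $f_{ill}\in H^{d/2}\subset H^s$ (restricted to have norm at most $L$ after scaling), and $M$ does not depend on $n$ or $r$.

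\emph{Construction.} Let $\psi(x)=\log\log(e/|x|)\,\chi(x)$ with $\chi$ the cut-off of Section 3. This is the standard unbounded element of $H^{d/2}(\R^d)$; it is nonnegative and tends to $+\infty$ at $0$. Let $\{q_k\}$ be an enumeration of the rational points of $[0,1]^d$, and set $f_{ill}=\sum_k a_k\,\psi(\cdot-q_k)$ with $a_k>0$ and $\sum_k a_k<\infty$. The series converges in $H^{d/2}$ by the triangle inequality, and rescaling puts $f_{ill}$ in $\mathcal H^{s}(L)$. Since $f_{ill}\ge 0$ and each summand is unbounded near $q_k$, every ball $B\subset[0,1]^d$ satisfies
\[
|\{x\in B: f_{ill}(x)>2K\}|>0 \quad\text{for every } K>0 .
\]
Moreover $f_{ill}\in L^2$, so $P(f_{ill}(X)\le K/2)\to 1$ as $K\to\infty$.

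\emph{Contradiction.} Suppose $\hat f$ and $M$ exist.
\begin{enumerate}
\item Taking $r=0$ gives $\E\|\hat f-f_{ill}\|_{L^2(f_X)}^2\le Mn^{-2s/(2s+d)}\to 0$. Since $f_X\ge c$, this controls the Lebesgue $L^2$ norm as well.
\item Fix $r\in(0,1]$ and $K$. Cover $[0,1]^d$ by finitely many balls $B_1,\dots,B_N$ of radius $r/2$. Let $\eta=\min_j|\{x\in B_j\cap[0,1]^d: f_{ill}>2K\}|>0$.
\item If $\|\hat f-f_{ill}\|_{L^2}^2<\eta K^2$, then $\hat f>K$ somewhere in each $B_j$. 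Otherwise $|\hat f-f_{ill}|\ge K$ on a set of measure $\eta$, which would give $\|\hat f-f_{ill}\|_{L^2}^2\ge\eta K^2$.
\item By Markov and step 1, the event $E_n$ that step 3 applies to all $N$ balls has $P(E_n)\to1$.
\item On $E_n$, each test point $X$ lies in some $B_j$, and $B_j\subset B(X,r)$. Hence $\sup_{x'\in B(X,r)}|\hat f(x')-f_{ill}(X)|\ge K-f_{ill}(X)\ge K/2$ whenever $f_{ill}(X)\le K/2$.
\end{enumerate}
Combining steps 4 and 5,
\[
\liminf_{n\to\infty} R_A(\hat f,f_{ill})\ge \frac{K^2}{4}\,P\bigl(f_{ill}(X)\le K/2\bigr).
\]
The hypothesis gives $\limsup_n R_A\le Mr^{2\min(1,s)}\le M$. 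Letting $K\to\infty$ contradicts this. Continuity of $\hat f$ is used only so that the supremum is attained and measurable.

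\emph{Main obstacle.} The delicate step is showing that $f_{ill}$ really lies in $H^{d/2}$ while being unbounded near a dense set. This reduces to the classical fact $\log\log(e/|x|)\chi\in H^{d/2}(\R^d)$. To verify it, I would check the Fourier decay $|\mathcal F\psi(\xi)|\lesssim |\xi|^{-d}(\log|\xi|)^{-1}$ for large $|\xi|$. Then
\[
\int(1+|\xi|^2)^{d/2}|\mathcal F\psi|^2\,d\xi\lesssim\int^\infty \rho^{-1}(\log\rho)^{-2}\,d\rho<\infty .
\]
For integer $d/2$ one can alternatively compute the derivatives of order $d/2$ directly and check square integrability, which in polar coordinates reduces to the same $\rho^{-1}(\log\rho)^{-2}$-type integrability at the singularity. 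Everything after this is measure-theoretic bookkeeping.
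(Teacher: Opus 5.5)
Your proof is correct and follows the same skeleton as the paper's: let $r\to 0$ (or take $r=0$) to force $\E\|\hat f-f_{ill}\|_{L^2}^2\lesssim n^{-2s/(2s+d)}$, use Markov's inequality together with essential unboundedness to make $\hat f$ exceed $K$ on a ball of radius $r/2$ with probability tending to one, and use that this ball lies inside $B(X,r)$ for nearby test points, which gives $\liminf_n R_A\gtrsim K^2$ for every $K$ and hence a contradiction. The only difference is that the paper uses a single singularity $\phi(x)\log(-\log|x-x_0|)$ and restricts to test points in $U=B(x_0,r/2)$, which already has fixed positive probability for fixed $r$, so your dense superposition of singularities and the covering step are valid but not needed.
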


The proof utilizes the unboundedness of $f_{ill}$; see Appendix A.

Assume the true smoothness index $s$ lies in a known interval $[s_{\min}, s_{\max}]$. Without loss of generality, we assume $s_{\max} > d/2$. We propose the following adaptive procedure, which extends the misspecified kernel ridge regression (KRR) framework introduced by \cite{zhang2024optimality}. Let the Hilbert space be $\mathcal{H} = H^{s_{\max}}$.
Define the kernel ridge regression estimator as
\begin{align*}
    \hat{f_v}=\argmin_{f\in \mathcal{H}}\frac{1}{n}\sum_{i=1}^n (y_i-f(x_i))^2+\frac{1}{v}\|f\|^2_\mathcal{H}.
\end{align*}
We construct a discrete parameter grid $\mathcal{V} = \{v_1, \dots, v_K\}$, where $v_i = 2^{i-1} n^{\frac{2s_{\max}}{2s_{\max}+d}}$ and the total number of grids is $K = \left\lceil \log_2 \left( \frac{n^{\frac{2s_{\max}}{2s_{\min}+d}}}{n^{\frac{2s_{\max}}{2s_{\max}+d}}} \right) \right\rceil + 1$. The optimal parameter $v^*$ is selected via Lepski's balancing principle:
\begin{align*}
    v^* = \min \left\{ v_i \in \mathcal{V} \;\middle|\; \|\hat{f}_{v_i} - \hat{f}_{v_j}\|_{L^2([0,1]^d)} \leq \log^2(n) \frac{v_j^{\frac{d}{4s_{\max}}}}{\sqrt{n}}, \quad \forall j > i \right\}.
\end{align*}
To ensure adversarial robustness, we apply a frequency truncation to $\hat{f}_{v^*}$. Let $\mathcal{F}$ denote the Fourier transform operator. The final estimator $\hat{f}$ is defined as:
\begin{align*}
    \hat{f}_0 = \mathcal{F}^{-1} \left( \mathcal{F}(\hat{f}_{v^*}) \chi_r \right).
\end{align*}
Finally, we enforce a boundedness condition:
\[
\hat{f} =
\begin{cases}
\hat{f}_0, & \text{if } \int_{[0,1]^d} \sup_{x' \in B(x,r)} |\hat{f}_0(x') - \hat{f}_0(x)|^2 \, \mathrm{d}x + \|\hat{f}_0\|_{L^2([0,1]^d)} \leq \log(n), \\
0, & \text{otherwise}.
\end{cases}
\]

\begin{theorem}
\label{thm:final_adv_risk}
The resulting estimator $\hat{f}$ achieves the adaptive adversarial risk bound (optimal up to a log factor):
\begin{align*}
    \sup_{f\in H^s(L)}R_A(\hat{f}, f^*) \lesssim \left( r^{2\min(1,s)} + n^{-\frac{2s}{2s+d}} \right) (\log^5 n).
\end{align*}
\end{theorem}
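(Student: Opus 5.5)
We split the adversarial risk into a \emph{standard $L^2$ part} and a \emph{local oscillation part}. For any $x'\in B(X,r)$,
\begin{align*}
|\hat f(x')-f^*(X)|^2\le 3|\hat f(x')-\hat f(X)|^2+3|\hat f(X)-f^*(X)|^2 .
\end{align*}
Bounding the supremum term by term gives
\begin{align*}
R_A(\hat f,f^*)\lesssim \E\|\hat f-f^*\|_{L^2}^2+\E\int_{[0,1]^d}\sup_{x'\in B(x,r)}|\hat f(x')-\hat f(x)|^2\,dx ,
\end{align*}
where we use $f_X\le C$. We work on a high-probability event $E$ with $P(E^c)\le n^{-10}$. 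On $E^c$ the boundedness step in the definition of $\hat f$ caps both terms: either $\hat f=0$, giving risk $\|f^*\|^2$-type terms, or both quantities are at most $\log n$. Hence $E^c$ contributes at most $n^{-10}\log^2 n$ plus $P(E^c)\sup_{B(X,r)}|f^*|^2$-type terms. This last point needs care when $s\le d/2$, since $f^*$ may be unbounded. I would handle it by comparing with the oracle $f^*$ and using $\E\sup_{B(X,r)}|f^*(X)|^2$ only through the first decomposition; that is, I never take a supremum of $f^*$ itself, because the adversarial loss only involves $f^*(X)$.

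\textbf{Step 1 (Lepski oracle inequality).} I would invoke the misspecified-KRR theory of \cite{zhang2024optimality} with $\mathcal H=H^{s_{\max}}$ and $f^*\in H^s$. For each $v$, the bias satisfies $\|f_v-f^*\|_{L^2}^2\lesssim v^{-s/s_{\max}}$, and with probability $1-n^{-10}$ the variance satisfies $\|\hat f_v-f_v\|_{L^2}\lesssim \log n\, v^{d/(4s_{\max})}/\sqrt n$. Taking a union bound over the $K=O(\log n)$ grid points, the threshold $\log^2 n\, v_j^{d/(4s_{\max})}/\sqrt n$ dominates the variance. The standard Lepski argument then shows that $v^*\le v_{\mathrm{or}}$, where $v_{\mathrm{or}}\asymp n^{2s_{\max}/(2s+d)}$ balances bias and variance, and that $\|\hat f_{v^*}-f^*\|_{L^2}^2\lesssim n^{-2s/(2s+d)}\log^4 n$. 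I would also record interpolation-type bounds on $E$: $\|\hat f_{v^*}\|_{H^{\min(s,1)}}$ stays bounded up to logs, because $\|\hat f_v-f_v\|_{H^{s}}$ is controlled for $v\le v_{\mathrm{or}}$ and $\|f_v\|_{H^s}\lesssim \|f^*\|_{H^s}$.

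\textbf{Step 2 (effect of truncation).} Write $\hat f=\mathcal F^{-1}(\chi_r\mathcal F\hat f_{v^*})$ after Sobolev extension. The $L^2$ error satisfies $\|\hat f-f^*\|\le\|\chi_r*\!(\hat f_{v^*}-f^*)\|+\|(1-\chi_r)\mathcal F f^*\|$. The first term is bounded by the Step 1 bound because $\chi_r\le1$. The second is at most $r^{s}\|f^*\|_{H^s}$, since $1-\chi_r$ is supported on $|\xi|\ge 1/(2r)$; this is below $r^{\min(1,s)}$ because $r\le1$. For the oscillation term, note that $\hat f$ is band-limited to $|\xi|\le 1/r$. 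If $s\ge1$, I use $|\hat f(x')-\hat f(x)|\le r\sup_{B(x,r)}|\nabla\hat f|$ together with a Plancherel--Pólya / Bernstein inequality for band-limited functions, which says the local supremum of a band-limited function is controlled in $L^2$ by its $L^2$ norm. This yields $\int\sup|\cdot|^2\lesssim r^2\|\nabla\hat f\|_{L^2}^2\lesssim r^2\|\hat f_{v^*}\|_{H^1}^2$. If $s<1$, I bound frequencies $|\xi|\le 1/r$ by $|\xi|^2r^2\le r^{2s}|\xi|^{2s}$, which gives $r^{2s}\|\hat f_{v^*}\|_{H^s}^2$. The $H^s$ norm of $\hat f_{v^*}$ is in turn bounded by $\|f^*\|_{H^s}$ plus the estimation error measured in $H^s$. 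That error I control as in Step 1, at the cost of a $\log n$ power, using that the frequencies retained are at most $1/r$ while the KRR variance in $H^s$ at $v\le v_{\mathrm{or}}$ is $O(\log n)$.

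\textbf{Main obstacle.} The hardest step is the $H^{\min(s,1)}$ control of $\hat f_{v^*}$ at a data-driven $v^*$. Lepski only certifies the $L^2$ distance, so I must separately show that the selected $v^*$ does not overfit in stronger norms. My plan is to use $v^*\le v_{\mathrm{or}}$ and monotonicity in $v$ of the $H^s$-variance bound, proved through the integral-operator representation $\hat f_v-f_v=(T_X+v^{-1})^{-1}(\cdot)$ together with concentration of $T_X$ around $T$. I expect this to cost the extra logarithmic factors. If that fails, the boundedness safeguard caps the damage at $\log n$, but only with $n^{-10}$ probability. Summing the two terms then gives $(r^{2\min(1,s)}+n^{-2s/(2s+d)})\log^5 n$.
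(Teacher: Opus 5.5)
Your proposal is correct and follows essentially the same route as the paper. You split the risk into the $L^2$ error plus the local oscillation and run Lepski on the dyadic grid under a union bound of the high-probability bounds of \cite{zhang2024optimality}, which gives $v^*\le v_{\mathrm{op}}$ and $\|\hat f_{v^*}-f^*\|_{L^2}^2\lesssim n^{-2s/(2s+d)}\log^4 n$. You then control the oscillation of the band-limited truncation through the Plancherel--P\'olya/maximal-function lemma in terms of $\|\hat f_{v^*}\|_{H^{s}}$, and let the safeguard absorb the bad event.

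The step you flag as the main obstacle is handled in the paper exactly as you plan. The paper applies the cited bound $\|\hat f_{v_k}-f^*\|_{H^s}^2\lesssim \log^2(6/\delta)\bigl(1+v_k^{(2s+d)/(2s_{\max})}/n\bigr)$ simultaneously at all grid points, and this is $\lesssim \log^2 n$ for every $v_k\le v_{\mathrm{op}}$, so no separate operator-concentration argument is needed. Your only real deviation is bounding the truncation bias through the high-frequency tail of $f^*$ rather than that of $\hat f_{v^*}$, which is slightly cleaner.
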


\section{NTK Neural Networks Achieve Adversarial Optimality}
\subsection{Construction of NTK Estimator}

We adopt the NTK neural network framework analyzed in Li et al.\ (2024), considering a fully connected ReLU neural network with $L$ hidden layers, where $L \ge 2$. Suppose we receive independent and identically distributed training data $(x_i,y_i)$ for $1\leq i \leq n$, where $x_i\in \mathbb{R}^d$ and $y_i\in \mathbb{R}$. Our goal is to train an estimator $\hat{f}:\mathbb{R}^d\to \mathbb{R}$.

Let $m_1, m_2, \dots, m_L$ denote the widths of the $L$ hidden layers, and let $m_{L+1}=1$ be the dimension of the output layer. To govern the asymptotic behavior of the network, we define $m = \min(m_1, m_2, \dots, m_L)$ as the minimum hidden layer width and assume that the maximum width satisfies $\max(m_1, m_2, \dots, m_L) \le C_{\text{width}}m$ for some absolute constant $C_{\text{width}}$.

Let $x \in \mathbb{R}^d$ be the input. The network consists of two parallel sub-networks indexed by parity $p \in \{1, 2\}$, defined recursively as:
\begin{align*}
    \alpha^{(1,p)}(x) &= \sqrt{\frac{2}{m_1}}\sigma\left(A^{(p)}x + b^{(0,p)}\right) \in \mathbb{R}^{m_1}, \\
    \alpha^{(l,p)}(x) &= \sqrt{\frac{2}{m_l}}\sigma\left(W^{(l-1,p)}\alpha^{(l-1,p)}(x)\right) \in \mathbb{R}^{m_l}, \quad \text{for } l=2, \dots, L,
\end{align*}
where $A^{(p)} \in \mathbb{R}^{m_1 \times d}$ and $W^{(l-1,p)} \in \mathbb{R}^{m_l \times m_{l-1}}$ are the weight matrices, $b^{(0,p)} \in \mathbb{R}^{m_1}$ is the bias vector, and $\sigma(u) = \max(u, 0)$ denotes the ReLU activation function applied element-wise.

The scalar output of each sub-network is:
\begin{equation*}
    g^{(p)}(x;\theta) = W^{(L,p)}\alpha^{(L,p)}(x) + b^{(L,p)} \in \mathbb{R},
\end{equation*}
where $\theta$ represents the collection of all network parameters flattened into a single column vector. The final output of the neural network, $f(x;\theta)$, is the scaled difference between these sub-networks:
\begin{equation*}
    f(x;\theta) = \frac{\sqrt{2}}{2} \left[ g^{(1)}(x;\theta) - g^{(2)}(x;\theta) \right].
\end{equation*}

The parameters are initialized using a ``mirror initialization'' strategy: the parameters for the $p=1$ sub-network are drawn i.i.d.\ from a standard normal distribution $\mathcal{N}(0,1)$, and the parameters for the $p=2$ sub-network are set to perfectly match their $p=1$ counterparts. This structural symmetry ensures that the initial output is identically zero, $f(x;\theta_0) \equiv 0$.

We train the network using continuous-time gradient descent (gradient flow) to minimize the empirical squared loss $\mathcal{L}(\theta) = \frac{1}{2n}\sum_{i=1}^n(f(x_i;\theta) - y_i)^2$. Denoting $X = (x_1, \dots, x_n)$ and $Y = (y_1, \dots, y_n)^\top$, the parameter vector evolves according to the ordinary differential equation:
\begin{equation*}
    \dot{\theta}_t = -\nabla_\theta \mathcal{L}(\theta_t) = -\frac{1}{n}\nabla_\theta f(X;\theta_t)(f(X;\theta_t)-Y).
\end{equation*}

Let $\hat{f}_t^{NN}(x) := f(x;\theta_t)$ be the network predictor at training time $t$. Applying the chain rule, the evolution of the function itself is governed by the following gradient flow equation:
\begin{equation*}
    \frac{d}{dt}\hat{f}_t^{NN}(x) = -\frac{1}{n} K_t(x, X) (\hat{f}_t^{NN}(X) - Y),
\end{equation*}
where $K_t(x, x') = \langle \nabla_\theta f(x;\theta_t), \nabla_\theta f(x';\theta_t) \rangle$ is the time-varying, empirical NTK.

As the minimum width $m \to \infty$ (the over-parameterized or ``lazy training'' regime), the empirical kernel $K_t(x, x')$ concentrates tightly around a deterministic, time-invariant NTK, $K^{NT}(x, x')$. To explicitly formulate $K^{NT}$, we define the arc-cosine kernels $\kappa_0, \kappa_1 : [-1, 1] \to \mathbb{R}$ associated with the ReLU activation:
\begin{align*}
    \kappa_0(u) &= \frac{1}{\pi}(\pi - \arccos u), \\
    \kappa_1(u) &= \frac{1}{\pi}\left[\sqrt{1-u^2} + u(\pi - \arccos u)\right].
\end{align*}

Let $\tilde{x} = (x, 1) \in \mathbb{R}^{d+1}$ denote the input augmented with a bias term, and let $\bar{u} = \frac{\langle \tilde{x}, \tilde{x}' \rangle}{\|\tilde{x}\| \|\tilde{x}'\|}$ be the cosine similarity. The deterministic limiting NTK, $K^{NT}: \mathbb{R}^d \times \mathbb{R}^d \to \mathbb{R}$, is given explicitly by:
\begin{equation*}
    K^{NT}(x, x') = \|\tilde{x}\| \|\tilde{x}'\| \left( \sum_{h=0}^{L} \kappa_1^{(h)}(\bar{u}) \prod_{k=h}^{L-1} \kappa_0\left(\kappa_1^{(k)}(\bar{u})\right) \right) + 1,
\end{equation*}
where $\kappa_1^{(h)}$ denotes the $h$-fold composition of $\kappa_1$, with $\kappa_1^{(0)}(u) = u$.

Replacing the empirical kernel with its deterministic limit yields the idealized kernel regression predictor $\hat{f}_t^{NTK}(x)$. Because $\hat{f}_0^{NTK}(x) \equiv 0$, solving its gradient flow analytically gives:
\begin{equation*}
    \hat{f}_t^{NTK}(x) = K_x(X)^\top (K(X, X))^{-1} \left( I - e^{-\frac{t}{n} K(X, X)} \right) Y,
\end{equation*}
where $K_x(X) = [K^{NT}(x, x_1), \dots, K^{NT}(x, x_n)]^\top$ and $K(X, X) \in \mathbb{R}^{n \times n}$ is the kernel Gram matrix evaluated on the training data.

Crucially, provided the width $m$ is sufficiently large (polynomial in $n$, $L$, and $\delta^{-1}$), the neural network stays within this kernel regime throughout training. For any $\delta \in (0,1)$, with probability at least $1-\delta$ over the random initialization:
\begin{equation*}
    \sup_{t \ge 0} \sup_{x \in \Omega} \left| \hat{f}_t^{NTK}(x) - \hat{f}_t^{NN}(x) \right| \le O_m\left(R^2 m^{-\frac{1}{12}} \sqrt{\ln m}\right),
\end{equation*}
where $\Omega \subset B(0, R)$ is the bounded domain containing the data. This uniform convergence allows the generalization properties of the over-parameterized neural network to be completely characterized by the functional analytic properties of the associated NTK.

In the above NTK regime, \cite{li2024eigenvalue} showed that under a certain source condition, the early-stopping NTK estimator is minimax optimal with respect to the $L^2$ loss.

The connection between the NTK and our functional analysis framework can be established by using results from \cite{chen2024impacts}:
\begin{lemma}[RKHS of ReLU NTK]
    \label{lemma:ntk_rkhs}
    The Reproducing Kernel Hilbert Space (RKHS) associated with $K^{NT}$ on the domain $[0,1]^d$ is norm-equivalent to the Sobolev space $H^{s}([0,1]^d)$ with smoothness index $s = \frac{d+1}{2}$.
\end{lemma}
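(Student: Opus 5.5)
The plan is to transfer the known spectral description of the NTK on the sphere to the cube via the homogeneous structure $K^{NT}(x,x') = \|\tilde x\|\|\tilde x'\|\,\Phi(\bar u)+1$.

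\textbf{Step 1: reduce to a dot-product kernel on the sphere.} Write $\Phi(u)=\sum_{h=0}^{L}\kappa_1^{(h)}(u)\prod_{k=h}^{L-1}\kappa_0(\kappa_1^{(k)}(u))$. This is a dot-product kernel on $\mathbb{S}^d\subset\mathbb{R}^{d+1}$. Following \cite{chen2024impacts} (and the earlier analyses of Bietti--Mairal), we expand $\Phi$ in spherical harmonics. The endpoint singularity of $\kappa_0,\kappa_1$ at $u=1$, which behaves like $(1-u)^{1/2}$, propagates through the compositions. It yields Mercer eigenvalues $\mu_k\asymp k^{-(d+1)}$ for the degree-$k$ harmonics, with all $\mu_k>0$ because the $+1$ bias and the mixing of parities in $\Phi$ fill the odd degrees. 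Since the Sobolev space $H^{\beta}(\mathbb{S}^d)$ has eigenvalues $k^{-2\beta}$, the RKHS of $\Phi$ is $H^{(d+1)/2}(\mathbb{S}^d)$ with equivalent norms.

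\textbf{Step 2: pass to the cube.} The map $\pi:x\mapsto \tilde x/\|\tilde x\|$ is a smooth diffeomorphism from a neighbourhood of $[0,1]^d$ onto a compact cap $S\subset\mathbb{S}^d$ in the open upper hemisphere. We then use three standard RKHS facts.
\begin{enumerate}
\item \emph{Restriction.} The RKHS of a kernel restricted to a subset is the restriction space with the quotient norm. Hence $\mathcal{H}(\Phi|_S)=H^{(d+1)/2}(\mathbb{S}^d)|_S$, which equals $H^{(d+1)/2}(S)$ by the extension theorem for Lipschitz domains.
\item \emph{Pullback.} Pulling back by $\pi$ and multiplying by the smooth, nonvanishing weight $\|\tilde x\|$ gives the RKHS $\{\|\tilde x\|\,(g\circ\pi): g\in\mathcal{H}(\Phi|_S)\}$. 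Multiplication by a smooth nonvanishing function and composition with a diffeomorphism both preserve $H^{s}$ up to norm equivalence. So this RKHS is $H^{(d+1)/2}([0,1]^d)$.
\item \emph{Adding the constant.} The kernel $1$ is added, and the RKHS of a sum of kernels is the sum space with the infimal norm. Constants already lie in $H^{(d+1)/2}([0,1]^d)$, so the sum is unchanged up to equivalence of norms.
\end{enumerate}

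\textbf{Step 3: the main obstacle.} The hard part is the eigenvalue asymptotics $\mu_k\asymp k^{-(d+1)}$ for the $L$-fold composed kernel, both the decay rate and strict positivity. For this we rely on \cite{chen2024impacts}, which states precisely this spectrum for the ReLU NTK with bias.

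If a self-contained argument were needed, we would argue as follows. Each $\kappa_1^{(k)}$ is analytic on $(-1,1)$ and has an expansion $1-c(1-u)+c'(1-u)^{3/2}+\dots$ at $u=1$. Consequently $\Phi(u)=\text{analytic}+c''(1-u)^{1/2}+\dots$, with singular behaviour at $u=-1$ of equal or higher order. A Darboux-type transfer theorem for Gegenbauer coefficients then gives decay $k^{-d-1}$, and positivity of the leading coefficient $c''>0$ gives the matching lower bound.
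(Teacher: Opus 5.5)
Your proof is correct in its main line. Its skeleton matches the paper's: a dot-product kernel on the sphere, then the projection $\pi$ with the weight $\|\tilde x\|$, then the additive constant. The differences are where the spectral input enters and how the cube is handled.

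The paper cites Lemma C.8 of \cite{chen2024impacts}, which already identifies the homogeneous NTK on a spherical subdomain with a Sobolev space via stereographic projection. It also uses the isometry $f\mapsto\rho\,(f\circ\phi)$ from Lemma C.10. Because those lemmas are stated for domains with $C^\infty$ boundary, the paper reaches $[0,1]^d$ indirectly: it encloses the cube in a smooth domain, compares with the exponential kernel $e^{-\|x-y\|}$, and invokes Example 2.6 of \cite{kanagawa2018gaussian} for Lipschitz domains.

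You instead start from the full-sphere Mercer spectrum and let the restriction theorem for RKHSs do all the domain work. Restricting to $S=\pi([0,1]^d)$ gives a quotient space, and the pullback and the sum with the constant kernel are standard transformation rules. Since the paper defines $H^s(\Omega)$ as a restriction space with the quotient norm, and $\pi$ is a diffeomorphism of all of $\mathbb{R}^d$ onto the open upper hemisphere, the corners of the cube never play a role; even your appeal to an extension theorem on $S$ is unnecessary. Your route is more transparent about why a non-smooth domain is harmless and avoids the detour through the exponential kernel. The paper's route is shorter because it outsources both the spectral analysis and the domain issue to existing lemmas.

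Two corrections to Steps 1 and 3, neither of which breaks the main line.

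\emph{First}, neither the additive $+1$ nor the augmentation $x\mapsto\tilde x$ changes the Gegenbauer coefficients of $\Phi$ on the full sphere. The constant sits outside $\Phi$, and the augmentation only changes the domain. Nonvanishing odd coefficients come from the parity mixing for $L\ge 2$ alone; for $L=1$, $\Phi(u)=u+\frac{2}{\pi}u\arcsin u+\frac1\pi\sqrt{1-u^2}$, so they vanish exactly beyond degree one.

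\emph{Second}, in your self-contained sketch the singularity at $u=-1$ is of the \emph{same} order as at $u=1$. There the $h=0$ term $u\kappa_0(u)\prod_{k=1}^{L-1}\kappa_0(\kappa_1^{(k)}(u))$ behaves like $c_{-1}(1+u)^{1/2}$. The transfer theorem then gives $\mu_k\sim C\,(c_1+(-1)^k c_{-1})\,k^{-(d+1)}$. So the coefficient at $u=1$ alone does not give the lower bound in odd degrees; you need $c_1\neq c_{-1}$. This holds for $L\ge 2$, since $|c_1|=\frac{\sqrt2}{\pi}\cdot\frac{L(L+1)}{2}$ while $|c_{-1}|\le\frac{\sqrt2}{2\pi}$.

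In fact you do not need odd degrees at all. $S$ lies in the open upper hemisphere, so every $f\in H^{(d+1)/2}(S)$ has an even extension in $H^{(d+1)/2}(\mathbb{S}^d)$. In even degrees the two endpoint contributions $c_1,c_{-1}$ have the same sign and cannot cancel. So the ``main obstacle'' reduces to the upper bound for all $k$ and the lower bound for even $k$. This is also the real reason the input bias helps: it moves the data into a cap, where parity is irrelevant.
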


See Appendix B for the proof.

\subsection{Optimality of Early Stopping}

Leveraging the RKHS equivalence, we now show that the wide neural network, when trained with early stopping, achieves the minimax optimal adversarial rate derived in Section 2.
\begin{theorem}[Adversarial Optimality of Wide Networks]
    Consider a nonparametric regression model in section 2.1 with true function $f^* \in H^s([0,1]^d)$ where $s>\frac{d}{2}$ (the condition is necessary to provide the boundedness of the true function), and $\|f^*\|_{H^s} \leq R$. Let $r>0$ be the adversarial perturbation radius. If a neural network is trained via gradient flow and early-stopped at $t^* = n^{\frac{d+1}{2s+d}}$, the expected adversarial risk of the  NTK estimator satisfies:
    \begin{align*}
        [\mathcal{R}_A(\hat{f}_{t^*}^{NTK}, f^*)] \lesssim n^{-\frac{2s}{2s+d}} + r^{2\min(1,s)}.
    \end{align*}
    Furthermore, for sufficiently wide neural networks, $\hat{f}_{t^*}^{NN}$ achieves the identical rate. This matches the minimax lower bound in Theorem \ref{thm:lower_bound}, establishing that NTK networks are minimax optimal learners under adversarial attacks.
\end{theorem}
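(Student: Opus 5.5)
We split the adversarial risk at the test point $X$ into a smoothness term and an estimation term. For every $x'\in B(X,r)\cap[0,1]^d$,
\begin{align*}
|\hat f_{t^*}^{NTK}(x')-f^*(X)|^2 \le 3|\hat f_{t^*}^{NTK}(x')-f^*(x')|^2 + 3|f^*(x')-f^*(X)|^2 .
\end{align*}
Taking the supremum over $x'$ and then expectations, it suffices to bound two quantities:
\begin{align*}
\E\Big[\sup_{x'\in B(X,r)}|f^*(x')-f^*(X)|^2\Big] \quad\text{and}\quad \E\,\|\hat f_{t^*}^{NTK}-f^*\|_{L^\infty([0,1]^d)}^2 .
\end{align*}

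\textbf{Smoothness term.} Since $s>d/2$, we use a Sobolev embedding $H^s\hookrightarrow C^{0,\min(1,s-d/2)}$ (and $C^{0,1}$ when $s>d/2+1$). This alone does not give the rate $r^{2\min(1,s)}$ when $s-d/2<1$. We therefore argue in $L^2(X)$ rather than pointwise, as in the upper bound proof of Proposition~\ref{thm:upper_bound}. Write $f^*=f^*\ast\check\chi_r+(f^*-f^*\ast\check\chi_r)$ for an $H^s(\R^d)$ extension. The low-frequency part has $\|\nabla(\cdot)\|_\infty$ controlled, giving an oscillation of order $r^{\min(1,s)}$ in mean square. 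The high-frequency part is bounded in sup norm by Fourier tails, $\int_{|\xi|>1/(2r)}|\hat f^*|\lesssim r^{s-d/2}$. This part may fail to give the optimal rate, so the intended route is to reuse the same argument behind Proposition~\ref{thm:upper_bound} for the oscillation of the true function. I expect this step to yield $\lesssim r^{2\min(1,s)}$.

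\textbf{Estimation term.} Here we use Lemma~\ref{lemma:ntk_rkhs}, so $\mathcal H_{K^{NT}}\cong H^{(d+1)/2}$, whose eigenvalue decay is $\lambda_j\asymp j^{-(d+1)/d}$. Gradient flow with time $t$ is a spectral filter with regularization $\lambda=1/t$. The true function lies in the interpolation space $[\mathcal H]^{\beta}$ with $\beta=2s/(d+1)$. Spectral-algorithm results then give a bound in the $[\mathcal H]^\gamma$ norm for $\gamma$ just above the embedding index $\alpha_0=d/(d+1)$:
\begin{align*}
\|\hat f_t-f^*\|_{[\mathcal H]^\gamma}^2\lesssim \lambda^{\beta-\gamma}+\frac{\lambda^{-\gamma}\mathcal N(\lambda)}{n}, \qquad \mathcal N(\lambda)\asymp\lambda^{-d/(d+1)}.
\end{align*}
Such results include Zhang et al.'s misspecified KRR analysis, which the paper already uses, and apply equally to gradient-flow filters. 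Since $[\mathcal H]^\gamma\hookrightarrow L^\infty$, choosing $t^*=n^{(d+1)/(2s+d)}$ gives $\|\cdot\|_\infty^2\lesssim n^{-(2s-d)/(2s+d)}$ up to logs. This is not the $L^2$ rate.

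\textbf{Main obstacle.} The difficulty is that the adversarial supremum moves to nearby points, while the $L^\infty$ rate is too slow. The plan is to compare $\sup_{x'}|\hat f(x')-f^*(x')|^2$ with a local average. I bound it by
\begin{align*}
2|(\hat f-f^*)(X)|^2+2\sup_{x'}|(\hat f-f^*)(x')-(\hat f-f^*)(X)|^2 .
\end{align*}
The first piece has expectation equal to the $L^2$ error, of order $n^{-2s/(2s+d)}$ by the standard $L^2$ optimality of early-stopped gradient flow (Li et al.\ 2024). For the second piece, the error function lies in $H^{(d+1)/2}$ with norm$^2\lesssim \lambda^{\beta-1}+\lambda^{-1}\mathcal N(\lambda)/n$ (from $\gamma=1$). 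Since $(d+1)/2>d/2+1/2$, this norm controls the local oscillation at rate $r^{\min(1,1/2)}$-type Hölder bounds. Balancing by interpolating between the $L^2$ and $H^{(d+1)/2}$ norms at frequency scale $1/r$, as in the smoothness term, should give $\lesssim n^{-2s/(2s+d)}+r^{2\min(1,s)}$.

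\textbf{Neural network.} Finally, the wide-network claim follows from the uniform approximation bound $\sup_{t,x}|\hat f^{NTK}_t-\hat f^{NN}_t|\to0$ as $m\to\infty$. Taking $m$ polynomially large in $n$ makes this extra term $o(n^{-2s/(2s+d)})$, on an event of probability $1-\delta$ with $\delta=1/n$.
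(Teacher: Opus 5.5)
Your reduction is sound and, after one triangle inequality, matches the paper's. The paper bounds
$R_A\lesssim \mathbb{E}\|\hat f_{t^*}-f^*\|_{L^2}^2+\mathrm{osc}_r(\hat f_{t^*})$, where $\mathrm{osc}_r(h):=\mathbb{E}_X\sup_{x'\in B(X,r)}|h(x')-h(X)|^2$. Your three pieces ($L^2$ error, $\mathrm{osc}_r(f^*)$, $\mathrm{osc}_r(g)$ with $g=\hat f_{t^*}-f^*$) are just a finer split of the same quantities. The term $\mathrm{osc}_r(f^*)$ follows directly from Theorems A.7 ($d=1$) and A.10 ($d\ge 2$), so the Fourier detour is unnecessary.

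The gap is $\mathrm{osc}_r(g)$, which you leave at ``should give.'' The only quantitative input you supply is the $\gamma=1$ bound. At $\lambda=1/t^*=n^{-(d+1)/(2s+d)}$ it reads
\[
\mathbb{E}\|g\|_{H^{(d+1)/2}}^2\lesssim \lambda^{\beta-1}+\frac{\lambda^{-1}\mathcal N(\lambda)}{n}\asymp n^{\frac{d+1-2s}{2s+d}} .
\]
This diverges for $d/2<s<(d+1)/2$, which is exactly the misspecified regime where $f^*$ is not in the NTK RKHS. Feeding it into $\mathrm{osc}_r(g)\lesssim r^2\|g\|^2_{H^{(d+1)/2}}$ (Lemma A.6, Theorem A.10) gives $r^2n^{(d+1-2s)/(2s+d)}$. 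That is not $\lesssim r^{2\min(1,s)}+n^{-2s/(2s+d)}$: fix $r$ and let $n\to\infty$. A frequency split at $1/r$ whose high-frequency part is controlled by $r^{d+1}\|g\|^2_{H^{(d+1)/2}}$ fails for the same reason. Also, $(d+1)/2=d/2+1/2$ exactly, not strictly larger. As written, your argument closes only for $s\ge (d+1)/2$, where the $\gamma=1$ bound is $O(1)$.

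The missing idea is to measure the error in the $H^s$ norm itself. For $d/2<s\le (d+1)/2$, take $\gamma=\beta=2s/(d+1)$ in the spectral bound you quote. The bias term is $\lambda^{0}=1$. The variance term is $\lambda^{-\beta}\mathcal N(\lambda)/n=\lambda^{-(2s+d)/(d+1)}/n=1$ at $\lambda=n^{-(d+1)/(2s+d)}$. Hence $\mathbb{E}\|\hat f_{t^*}-f^*\|_{H^s}^2\lesssim 1$.

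Equivalently, you can interpolate your own $\gamma=0$ and $\gamma=1$ bounds. H\"older on the eigen-coefficients gives $\|g\|^2_{[\mathcal H]^\beta}\le \|g\|_{L^2}^{2(1-\beta)}\|g\|_{[\mathcal H]^1}^{2\beta}$. After taking expectations (H\"older again), the exponent $-2s(1-\beta)+\beta(d+1-2s)=0$ cancels exactly.

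With this bound, Theorems A.7/A.10 applied at index $s>d/2$ give $\mathrm{osc}_r(g)\lesssim r^{2\min(1,s)}$, and you are done. This $H^s$-norm control is precisely the paper's key ingredient. The paper takes $\mathbb{E}\|\hat f_{t^*}\|_{H^s}^2\lesssim 1$ from Zhang et al.\ and applies the modulus-of-continuity bound directly to $\hat f_{t^*}$. With that, the separate $f^*$ term, the $L^\infty$ detour, and the frequency balancing at scale $1/r$ are all unnecessary.
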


The gradient flow up to $t^*$ acts as a spectral filter, analogous to the regularization parameter $\lambda \asymp (t^*)^{-1}$ in kernel ridge regression; the early stopping ensures the expected RKHS norm remains bounded, $\mathbb{E}[\|\hat{f}_{t^*}^{NTK}\|_{H^s}^2] \lesssim 1$. See Appendix B for detailed derivations.

\section{Overfitting Harms Adversarial Robustness}

\subsection{The Vulnerability of the Minimum Norm Interpolant}

In the previous section, we demonstrated that wide neural networks trained with early stopping achieve minimax optimality. A natural question arises: what happens if training continues until convergence?

As $t \to \infty$, the gradient flow solution converges to the \textit{minimum norm interpolant}, denoted by $\hat{f}_{\infty}$. In the context of the NTK regime, this is the function in the RKHS that fits the training data perfectly while minimizing the RKHS norm:
\begin{align*}
    \hat{f}_{\infty} = \argmin_{g \in H^{s}} \|g\|_{H^s} \quad \text{subject to} \quad g(x_i) = y_i, \quad \forall i=1,\dots,n.
\end{align*}
While literature has shown that under certain circumstances $\hat{f}_{\infty}$ is not consistent under $L^2$ loss (\cite{buchholz2022kernel}), we show that the adversarial risk can diverge to $\infty$. It is worth noting that our Theorem 5.1 aligns with the findings of \cite{peng2026damage}, although their results do not directly imply ours.

\begin{theorem}[Adversarial Divergence of Interpolants]
    \label{thm:overfitting_lower_bound}
    Let $r_n$ be a sequence of perturbation radii satisfying the density condition $nr_n^d \to \infty$ as $n \to \infty$. For the nonparametric model in Section 2.1, under the two assumptions stated in Section 2.4, we further assume that $\xi \sim \mathcal{N}(0,\sigma)$ and $\|f^*\|_{L^{\infty}} \leq C$. Then, there exist constants $D=D(C,d,\sigma), M=M(C,d,\sigma)> 0$ such that for all $n>M$, the adversarial risk of the minimum norm interpolant is lower bounded by:
    \begin{align*}
        R_A(\hat{f}_{\infty}, f^*) \ge D \log(nr_n^d).
    \end{align*}
\end{theorem}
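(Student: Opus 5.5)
Since $\hat f_\infty$ interpolates the data, an adversary whose ball around the test point $X$ contains a training point $X_i$ can move to $x'=X_i$, where $\hat f_\infty(X_i)=Y_i=f^*(X_i)+\xi_i$. This gives the pointwise lower bound
\begin{align*}
\sup_{x'\in B(X,r_n)\cap[0,1]^d}|\hat f_\infty(x')-f^*(X)|^2 \;\ge\; \max_{i:\,X_i\in B(X,r_n)} |f^*(X_i)+\xi_i-f^*(X)|^2 .
\end{align*}
This holds for any interpolant, so the structure of the minimum-norm solution is not used. Since $\|f^*\|_{L^\infty}\le C$, we have $|f^*(X_i)-f^*(X)|\le 2C$. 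Hence the right-hand side is at least $\big(\max_{i\in I(X)}|\xi_i|-2C\big)_+^2$, where $I(X)=\{i: \|X_i-X\|\le r_n\}$. The problem therefore reduces to lower bounding the expected squared maximum of the Gaussian noise over the training points falling in a random $r_n$-ball.

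First, I control the count $N(X)=|I(X)|$. For any $x\in[0,1]^d$, the set $B(x,r_n)\cap[0,1]^d$ contains a cube of side comparable to $r_n$ (take the corner-adjusted cube of side $r_n/\sqrt d$, assuming $r_n\le 1$; otherwise truncate $r_n$ at $1$, which only enlarges the ball). By the covariate assumption, $p(x)=P(X_1\in B(x,r_n))\ge c\,c_d r_n^d$. Conditionally on $X$, $N(X)\sim\mathrm{Bin}(n,p(X))$ with mean at least $c' n r_n^d\to\infty$. A Chernoff bound then gives $N(X)\ge \tfrac12 c' n r_n^d=:k_n$ with probability at least $1-e^{-c''k_n}\ge 1/2$ once $n>M$.

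Second, I use the fact that the noise is independent of all covariates and of $X$. Conditionally on the covariates and $X$, the $\xi_i$ for $i\in I(X)$ are i.i.d. $\mathcal N(0,\sigma^2)$. The standard Gaussian maximum estimate gives $P\big(\max_{i\le k}|\xi_i|\ge \sigma\sqrt{\log k}\big)\ge 1/2$ for $k$ larger than an absolute constant. I would prove this directly from $P(|\xi|>u)\ge c\,u^{-1}e^{-u^2/(2\sigma^2)}$ with $u=\sigma\sqrt{\log k}$, which gives $1-(1-ck^{-1/2}/\sqrt{\log k})^k\to 1$.

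Combining the two steps, with probability at least $1/4$ over $(\mathcal D_n,X)$ we have $\max_{I(X)}|\xi_i|\ge\sigma\sqrt{\log k_n}$. For $n>M$, chosen so that $\sigma\sqrt{\log k_n}\ge 4C$, this yields
\begin{align*}
R_A(\hat f_\infty,f^*)\ge \tfrac14\big(\sigma\sqrt{\log k_n}-2C\big)^2\ge \tfrac{\sigma^2}{16}\log k_n \ge D\log(nr_n^d),
\end{align*}
where the last step uses $\log k_n=\log(nr_n^d)+\log(c'/2)\ge \tfrac12\log(nr_n^d)$ for large $n$.

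The main obstacle is bookkeeping rather than depth. I need to verify that the boundary of $[0,1]^d$ does not destroy the volume bound, using the intersection with the cube. I also need to ensure that conditioning on which indices fall in the ball leaves the noise distribution unchanged, which follows from independence of $\xi$ and $X$. Both are routine.
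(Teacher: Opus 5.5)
Your proposal is correct and follows essentially the same route as the paper: interpolation pins $\hat f_\infty$ to the noisy labels at the training points inside $B(X,r_n)$, a Chernoff bound guarantees $\gtrsim n r_n^d$ such points, and the $\sqrt{\log k}$ growth of Gaussian extremes gives the $\log(nr_n^d)$ rate. The differences are only in the details. The paper lower-bounds the oscillation functional $G_A(\hat f_\infty)$ through the noise range $\max_i\xi_i-\min_i\xi_i$, uses a second-moment (Jensen) bound, and restricts to $x\in[r,1-r]^d$. You instead compare directly to $f^*(X)$ via $\max_i|\xi_i|$ with a tail-probability bound and handle the boundary with a cube inside $B(x,r_n)\cap[0,1]^d$, which if anything treats the $-2C$ shift more cleanly than the paper does.
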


Furthermore, we generalize Theorem 2.2(1) in \cite{lai2023generalization} as a core technical tool for establishing the following result. While their result focuses on shallow networks on $\mathbb{R}$, our result is valid for any dimension and number of layers.
\begin{lemma}
    Define the kernel matrix $M = K^{NT}(X,X)$. Then there exists a constant $C$ depending only on $d$ and $L$ such that the smallest eigenvalue of $M$ satisfies
    \[
        \frac{1}{C}\min_{i,j}|x_i-x_j|\leq \lambda_{\min}\leq C\min_{i,j}|x_i-x_j|.
    \]
\end{lemma}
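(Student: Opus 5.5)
The two bounds come from two quite different mechanisms, and I would prove them separately. Write $\delta = \min_{i\neq j}|x_i-x_j|$ and fix indices $i_0\neq j_0$ with $|x_{i_0}-x_{j_0}|=\delta$. Throughout, the points are taken in $[0,1]^d$ and are distinct; otherwise $\delta=0$ and both sides vanish.

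\textbf{Upper bound.} This is the easy direction and uses the test vector $a = (e_{i_0}-e_{j_0})/\sqrt2$ in the Rayleigh quotient:
\[
\lambda_{\min}\le a^\top M a = \tfrac12\bigl(K^{NT}(x_{i_0},x_{i_0})+K^{NT}(x_{j_0},x_{j_0})-2K^{NT}(x_{i_0},x_{j_0})\bigr).
\]
The right-hand side equals $\tfrac12\|K_{x_{i_0}}-K_{x_{j_0}}\|_{\mathcal H}^2$. Using the explicit formula for $K^{NT}$, I would show that $K^{NT}$ is Lipschitz on $[0,1]^d\times[0,1]^d$, and in fact that $K(x,x)+K(y,y)-2K(x,y)\le C|x-y|$. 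The norms $\|\tilde x\|$ are smooth and bounded away from zero, since they are at least $1$. The composite angular function is Lipschitz in $\bar u$ away from $\bar u=1$. Near $\bar u=1$, the maps $\kappa_1$ and $\kappa_0$ behave like $1-c(1-u)^{1/2}$ and $1-c(1-u)^{3/2}$, respectively. Since $1-\bar u\asymp |x-y|^2$ (the augmented vectors lie in a cone bounded away from the origin), the term $\sqrt{1-\bar u}$ is of order $|x-y|$, and the $\kappa_0$ factors contribute $|x-y|$-order terms. Checking this expansion uniformly in $L$ compositions gives the upper bound with a constant depending on $d$ and $L$.

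\textbf{Lower bound.} This is the main obstacle. I would use Lemma~\ref{lemma:ntk_rkhs}: the RKHS is norm-equivalent to $H^{(d+1)/2}([0,1]^d)$. The standard variational characterization is
\[
a^\top M a = \sup_{g\in\mathcal H,\ \|g\|_{\mathcal H}\le 1}\Bigl(\sum_i a_i g(x_i)\Bigr)^2 .
\]
Equivalently, $\lambda_{\min}^{-1}$ equals the largest minimum-norm interpolation cost $\min\{\|g\|^2_{\mathcal H}: g(x_i)=a_i\}$ over unit vectors $a$. So it suffices to construct, for each unit $a$, an interpolant $g$ with $\|g\|_{H^{(d+1)/2}}^2\le C\delta^{-1}\|a\|^2$. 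I would take $g=\sum_i a_i\phi((x-x_i)/\delta)$ with a fixed bump $\phi\in C_c^\infty(B(0,1/2))$ satisfying $\phi(0)=1$. The supports are disjoint, and by scaling $\|\phi(\cdot/\delta)\|_{\dot H^{s}(\R^d)}^2=\delta^{d-2s}\|\phi\|^2_{\dot H^s}$. With $s=(d+1)/2$ this is $\delta^{-1}$, and the $L^2$ part, of order $\delta^d$, is smaller.

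The delicate point is additivity of the norm across the disjoint bumps. For integer $s$ this is immediate. For half-integer $s$ I would use the Slobodeckij (Gagliardo) form of the fractional seminorm. Cross terms between separated bumps are controlled by $\sum_{i\neq j}|a_i||a_j|\,\delta^{2d}|x_i-x_j|^{-d-1-2\{s\}}\cdots$. Since the points are $\delta$-separated, a packing count (at most $Ck^{d-1}$ points at distance $\approx k\delta$) combined with Schur's test bounds these cross terms by $C\delta^{-1}\|a\|^2$. The bumps may stick out of $[0,1]^d$, but restriction only decreases the quotient norm, so this is harmless.

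Combining these gives $\lambda_{\min}\ge c\,\delta$, with constants from the norm equivalence in Lemma~\ref{lemma:ntk_rkhs}, which depend on $d$ and $L$.
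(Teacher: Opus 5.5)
Your proposal is correct. The lower bound is the paper's own argument: the variational formula $a^\top M^{-1}a=\min\{\|g\|_{\mathcal H}^2: g(x_i)=a_i\}$, disjoint bumps at scale $\delta$, and the scaling $\delta^{d-2s}=\delta^{-1}$ for $s=\frac{d+1}{2}$, transferred through Lemma~\ref{lemma:ntk_rkhs}. The paper only asserts that the Sobolev energy ``adds up'' over disjoint supports. Your Gagliardo-seminorm control of the cross terms when $d$ is even (so $s$ is a half-integer) is what actually justifies that step. Your displayed cross-term weight is garbled, but any correct version still sums to $O(\delta^{-1})$ under the packing count.

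For the upper bound you take a different route. The paper passes to the exponential kernel $e^{-\|x-y\|}$ via Lemma~\ref{lemma:ntk_rkhs}. This implicitly uses the fact that equivalent RKHS norms give $cK_{\exp}\preceq K^{NT}\preceq CK_{\exp}$ (Aronszajn). It then applies interlacing to the $2\times 2$ principal minor of the closest pair, whose smallest eigenvalue is $1-e^{-\delta}\le\delta$. Your test vector $(e_{i_0}-e_{j_0})/\sqrt2$ is the same two-point idea, but you bound $\|K_{x_{i_0}}-K_{x_{j_0}}\|_{\mathcal H}^2$ directly from the arc-cosine formula. The paper's route is shorter and needs no singularity analysis, but it makes both directions rest on the RKHS identification. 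Yours makes the upper bound independent of that identification, and it works for any kernel with $K(x,x)+K(y,y)-2K(x,y)\lesssim|x-y|$.

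There is one slip: you swapped the two expansions. Since $\kappa_1'=\kappa_0$, the correct expansions are $\kappa_0(u)=1-\frac{\sqrt2}{\pi}(1-u)^{1/2}+O((1-u)^{3/2})$ and $\kappa_1(u)=u+\frac{2\sqrt2}{3\pi}(1-u)^{3/2}+O((1-u)^{5/2})$. Your conclusion survives, and a monotonicity argument avoids expansions altogether. Since $\kappa_1(v)\ge v$ and $\kappa_0$ is increasing, every factor satisfies
\[
\kappa_0(\kappa_1^{(k)}(\bar u))\ge\kappa_0(\bar u)=1-\tfrac1\pi\angle(\tilde x,\tilde y)\ge 1-|x-y|,
\]
while $1-\kappa_1^{(h)}(\bar u)\le 1-\bar u\le 2|x-y|^2$. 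Writing $\Sigma(\bar u)$ for the bracketed sum in $K^{NT}$, this gives $\Sigma(1)-\Sigma(\bar u)\le C_{d,L}|x-y|$. The identity $K(x,x)+K(y,y)-2K(x,y)=\Sigma(1)(\|\tilde x\|-\|\tilde y\|)^2+2\|\tilde x\|\|\tilde y\|(\Sigma(1)-\Sigma(\bar u))$ then finishes the bound.
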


Using the tool, we can establish a stronger version of the theorem:
\begin{theorem}
    With the same assumptions as in Theorem 5.1, there exist constants $D=D(C,d,\sigma), M=M(C,d,\sigma)> 0$ such that for all $n>M$ and $t>n^{\frac{6}{d}+1}$, the adversarial risk of the minimum norm interpolant is lower bounded by:
    \begin{align*}
        R_A(\hat{f}_{t}, f^*) \ge D\sigma^2 \log(nr_n^d).
    \end{align*}
\end{theorem}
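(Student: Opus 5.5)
We reduce the statement to Theorem 5.1: for $t > n^{6/d+1}$ we show that $\hat f_t$ is uniformly close to $\hat f_\infty$ on $[0,1]^d$, close enough that the $\log(nr_n^d)$ lower bound survives.

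\textbf{Step 1: the difference $\hat f_\infty - \hat f_t$.} Write $K = K(X,X)$. From the closed form in Section 4,
\[
\hat f_\infty(x)-\hat f_t(x)=K_x(X)^\top K^{-1}e^{-\frac{t}{n}K}Y .
\]
By Cauchy--Schwarz in the RKHS,
\[
|\hat f_\infty(x)-\hat f_t(x)| \le \sqrt{K^{NT}(x,x)}\,\bigl\|K^{-1/2}e^{-\frac tn K}Y\bigr\|_2 .
\]
$K^{NT}(x,x)$ is bounded on $[0,1]^d$. The right-hand factor is at most
\[
\lambda_{\min}^{-1/2}\,e^{-t\lambda_{\min}/n}\,\|Y\|_2 .
\]

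\textbf{Step 2: control $\lambda_{\min}$.} This is the main obstacle. By Lemma 5.2, $\lambda_{\min}\asymp \min_{i\neq j}|x_i-x_j|$, so we need a high-probability lower bound on the minimal separation of $n$ i.i.d. points whose density is bounded by $C$. A union bound over pairs gives
\[
P\bigl(|x_i-x_j|<\delta\bigr)\lesssim \delta^d ,
\]
hence
\[
P\Bigl(\min_{i\neq j}|x_i-x_j|<\delta\Bigr)\lesssim n^2\delta^d .
\]
Taking $\delta = n^{-3/d}$ makes this probability $O(n^{-1})$. On the complementary event $E$ we have $\lambda_{\min}\gtrsim n^{-3/d}$, so
\[
\frac{t\lambda_{\min}}{n} \gtrsim t\,n^{-1-3/d} \ge n^{3/d},
\]
using $t>n^{6/d+1}$. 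The threshold exponent $6/d$ rather than $3/d$ gives room for the constant in Lemma 5.2. Also, $\|Y\|_2\lesssim \sqrt n(C+\sigma\sqrt{\log n})$ with high probability, by Gaussian tails. Therefore on $E$,
\[
\sup_x|\hat f_\infty-\hat f_t| \lesssim n^{3/(2d)}\sqrt{n\log n}\;e^{-c n^{3/d}} \to 0 .
\]

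\textbf{Step 3: transfer the bound.} Use $(a+b)^2\ge a^2/2-b^2$ pointwise inside the supremum:
\[
\sup_{x'}|\hat f_t(x')-f^*(X)|^2 \ge \tfrac12\sup_{x'}|\hat f_\infty(x')-f^*(X)|^2-\sup_{x'}|\hat f_\infty-\hat f_t|^2 .
\]
Take expectations restricted to $E$ (the integrand is nonnegative, so dropping $E^c$ only lowers the risk). The second term contributes $o(1)$.

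For the first term, the proof of Theorem 5.1 should yield a lower bound of order $\sigma^2\log(nr_n^d)$ that is driven by the noise realizations. Heuristically, each $B(X,r_n)$ contains about $nr_n^d$ sample points, and the interpolant hits the maximum of the $y_i$ there, which is of order $\sigma\sqrt{2\log(nr_n^d)}$. We must check that this bound persists on $E$. Since $P(E^c)=O(1/n)$, it suffices to show that the contribution on $E^c$ is small compared with $\log(nr_n^d)$. This needs a second-moment (Cauchy--Schwarz) bound on the quantity appearing in Theorem 5.1's argument. If that is awkward, we instead rerun the argument of Theorem 5.1 conditionally on $E$: conditioning on the covariates leaves the Gaussian noise untouched, and the event $E$ depends only on $X_1,\dots,X_n$.

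Combining the three steps gives $R_A(\hat f_t,f^*)\ge D\sigma^2\log(nr_n^d)$ for all $n>M$.
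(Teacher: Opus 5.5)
Your proposal is correct and has the same architecture as the paper's proof. Both show that $\hat f_t$ is uniformly close to $\hat f_\infty$ using $\lambda_{\min}\gtrsim q_X$ (Lemma 5.2) together with the pairwise union bound $\mathbb{P}(q_X<u)\lesssim n^2u^d$, and then transfer Theorem 5.1 through an elementary quadratic inequality.

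The difference is in how the closeness is quantified. The paper bounds $\E\sup_x|\hat f_t-\hat f_\infty|^2$ in expectation: it integrates the same union bound into $\E[e^{-\lambda q_X}]\lesssim n^2\lambda^{-d}$ and subtracts the result from the conclusion of Theorem 5.1, used as a black box. You instead work on the design event $E=\{q_X\ge n^{-3/d}\}$, discard $E^c$ by nonnegativity, and re-enter the proof of Theorem 5.1 conditionally on the covariates.

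Your route is more robust. The paper's pointwise estimate $|k(x,X)e^{-\frac tn K}K^{-1}y|\le |k(x,X)|\,|y|\,e^{-tq_X/(Cn)}$ drops the factor $\lambda_{\min}^{-1}$ that comes from $K^{-1}$. Once that factor is restored, an expectation bound needs $\E[q_X^{-2}e^{-\lambda q_X}]$ to be small, or $\E[q_X^{-1}e^{-\lambda q_X}]$ with your sharper RKHS Cauchy--Schwarz bound. Both quantities are infinite for $d\le 2$ (resp. $d=1$), because $\mathbb{P}(q_X<u)\gtrsim u^d$. On $E$ the extra factor is only $n^{3/(2d)}$ and is absorbed by $e^{-cn^{3/d}}$, so your argument works for every $d$. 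The price is that Theorem 5.1 can no longer be quoted as a black box.

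Two small points to tighten:
\begin{enumerate}
\item Keep $E$ measurable with respect to the design. Do not intersect it with a high-probability event for $\|Y\|_2$, since that would couple it to the noise. Bound the correction term in expectation instead, using $\E[\|Y\|_2^2\mid X_1,\dots,X_n]\lesssim n$. This gives $\E[\mathbf{1}_E\sup_x|\hat f_\infty-\hat f_t|^2]\lesssim n^{1+3/d}e^{-cn^{3/d}}=o(1)$.
\item The conditional rerun you mention is the right choice, and it needs no second-moment estimate. Deterministically, $\sup_{x'\in B(x,r)}|\hat f_\infty(x')-f^*(x)|^2\ge \frac14 V(x)$. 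Since $V\ge 0$ and $E$ is design-measurable,
\[
\E[\mathbf{1}_E V(x)]\ge \bigl(c_1\sigma^2\log(nP_x/2)-c_2\bigr)_+\,\mathbb{P}(E\cap E_x),\qquad \mathbb{P}(E\cap E_x)\ge \tfrac12-O(n^{-1}).
\]
Integrating over $x$ yields the $\sigma^2\log(nr_n^d)$ lower bound for $n$ large.
\end{enumerate}
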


See Appendix C for proof.

We also consider the kernel ridgeless estimator. While it is consistent under the $L^2$ error (the ``benign overfitting in fixed dimension'' phenomenon in \cite{haas2023mind}), we now show that its adversarial risk diverges.

\begin{proposition}[adapted from \cite{haas2023mind}]
    Consider the kernel on $[0,1]^d$ defined as
    \[
        k_n(x,y) = K^{\mathrm{NT}}(x,y) + \frac{1}{n^{1/5}}  \exp\left( \frac{-\|x-y\|}{n^{-3/d}} \right).
    \]
    Then, the ridgeless minimal norm interpolator $\hat{f}_n(x) = k_n(x,X)k_n(X,X)^{-1}y$ is consistent for any given $f^* \in C_c^\infty([0,1]^d)$.
\end{proposition}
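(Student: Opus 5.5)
This result is adapted from the "benign overfitting in fixed dimension" argument of \cite{haas2023mind}. The plan is to verify that the hypotheses of their general consistency theorem for spiky-smooth kernels hold for the present choice of kernel. Their theorem concerns kernels of the form $k_n = k + \rho_n \tilde{k}_{\gamma_n}$. Here:
\begin{itemize}
    \item $k$ is a smooth kernel whose RKHS is norm-equivalent to a Sobolev space $H^{s_0}$ with $s_0 > d/2$.
    \item $\tilde{k}_{\gamma}(x,y) = \exp(-\|x-y\|/\gamma)$ is a Laplace "spike" with bandwidth $\gamma_n \to 0$.
    \item $\rho_n \to 0$ is the spike amplitude.
\end{itemize}
Their theorem states that the ridgeless interpolator of $k_n$ behaves like kernel ridge regression with kernel $k$ and effective ridge $\lambda_n \approx \rho_n/n$. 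Consistency follows whenever $\rho_n$ and $\gamma_n$ satisfy a quantitative scaling condition. We take $k = K^{NT}$, $\rho_n = n^{-1/5}$ and $\gamma_n = n^{-3/d}$.

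First, by Lemma~\ref{lemma:ntk_rkhs}, the RKHS of $K^{NT}$ on $[0,1]^d$ is norm-equivalent to $H^{(d+1)/2}([0,1]^d)$. This space contains $C_c^\infty([0,1]^d)$, so the target $f^*$ lies in it with finite norm. The Laplace kernel $\tilde{k}_\gamma$ is, up to scaling, a Matérn kernel whose RKHS is also $H^{(d+1)/2}$. Its Fourier transform is explicit, of order $\gamma^d (1+\gamma^2|\xi|^2)^{-(d+1)/2}$, which gives the spectral control needed in \cite{haas2023mind}.

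Second, we decompose the interpolator. Write $k_n(X,X) = K + \rho_n \tilde{K}_n$, where $K = K^{NT}(X,X)$ and $\tilde{K}_n$ is the spike Gram matrix. The key step is to show that $\tilde{K}_n$ is close to the identity in operator norm with high probability. Its off-diagonal entries are $\exp(-\|x_i-x_j\|/\gamma_n)$. Under the density assumption on the covariates, the minimum pairwise distance of $n$ points is of order $n^{-2/d}$ up to log factors, with high probability. With $\gamma_n = n^{-3/d}$, every off-diagonal entry is therefore at most $\exp(-c\, n^{1/d}/\mathrm{polylog})$, which is super-polynomially small. Gershgorin's theorem then gives $\|\tilde{K}_n - I\| \le n \cdot e^{-c n^{1/d}} \to 0$.

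It follows that $\hat{f}_n$ splits into two parts:
\begin{itemize}
    \item $k(x,X)(K+\rho_n I)^{-1}y$, which is exactly kernel ridge regression with $\lambda_n = \rho_n/n = n^{-6/5}$ in the normalized convention;
    \item a spike term $\rho_n \tilde{k}_{\gamma_n}(x,X)(K+\rho_n \tilde{K}_n)^{-1}y$, plus a perturbation of relative size $o(1)$.
\end{itemize}

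Third, we bound each piece in $L^2$.

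For the kernel ridge regression part, standard bias–variance bounds for Sobolev kernel ridge regression give risk $\lesssim \lambda_n \|f^*\|^2 + \sigma^2 \mathcal{N}(\lambda_n)/n$. The effective dimension satisfies $\mathcal{N}(\lambda) \asymp \lambda^{-d/(d+1)}$. With $\lambda_n = n^{-6/5}$ this gives $n^{6d/(5(d+1)) - 1}$, which tends to zero whenever $6d < 5(d+1)$, i.e.\ $d < 5$.

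This dimensional restriction is the main obstacle. If the normalization in \cite{haas2023mind} instead makes the effective ridge $\rho_n$ itself, so that the variance is $\rho_n^{-d/(d+1)}/n$, the bound tends to zero in all dimensions. I would first check which convention applies. If necessary, I would replace the crude effective-dimension bound by their sharper argument, which exploits the spike's own contribution to the variance.

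For the spike term, for each $x$ the vector $\tilde{k}_{\gamma_n}(x,X)$ has essentially one nonzero entry, namely the nearest sample, of size $\exp(-\mathrm{dist}(x,X)/\gamma_n)$. Its squared $L^2$ norm, integrated over $x$, is therefore $O(n\gamma_n^d) = O(n^{-2})$. Multiplying by $\rho_n^2 \|(K+\rho_n\tilde{K}_n)^{-1}y\|^2 \lesssim \rho_n^2 \rho_n^{-2}\|y\|^2 = O(n)$ gives a contribution of order $n^{-1} \to 0$.

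Combining the three pieces and taking expectations, using the high-probability events together with the trivial bounds on their complements, yields consistency.
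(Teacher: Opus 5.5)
The decomposition, the near-identity bound on $\tilde K_n$, and the $O(n^{-1})$ bound on the spike piece are all fine. Your route is also the one the paper implicitly relies on: it gives no argument beyond citing \cite{haas2023mind}.

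\textbf{The gap.} The gap is the step you flag yourself, and checking conventions cannot close it. The Gram matrix $K=K^{NT}(X,X)$ has $O(1)$ entries and eigenvalues $\approx n\mu_i$, where $\mu_i\asymp i^{-(d+1)/d}$ are the eigenvalues of the integral operator. Adding $\rho_n I$ to $K$ is the same as adding $(\rho_n/n)I$ to the empirical operator $K/n$. So in every normalization the smooth part is kernel ridge regression at effective ridge $\rho_n/n=n^{-6/5}$, and its variance is of order $\sigma^2\mathcal N(n^{-6/5})/n\asymp n^{(d-5)/(5(d+1))}$.

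Nor is there any extra variance reduction from the spike to exploit. Once $\|\tilde K_n-I\|$ is super-polynomially small, $\hat f_n$ equals $K^{NT}(x,X)(K+\rho_n I)^{-1}y$ up to an $L^2$ error of order $n^{-1/2}$. So the risk of $\hat f_n$ is, to leading order, exactly that ridge-regression risk. Your argument therefore proves the proposition only for $d\le 4$.

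\textbf{Why $d\ge5$ is a problem with the statement.} For $d\ge 5$ the difficulty lies in the statement, not in your technique. For $d>5$ the ridge $n^{-1/5}$ sits below the bulk of the spectrum of $K$, which is of order $n\mu_n\asymp n^{-1/d}$. Hence $\hat f_n$ behaves essentially like the ridgeless $K^{NT}$ interpolant, which is inconsistent in fixed dimension \cite{buchholz2022kernel,haas2023mind}. The case $d=5$ is borderline, with effective dimension of order $n$. You should not expect the statement as written to hold there.

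The result of \cite{haas2023mind} requires $\rho_n/n$ to be a consistent ridge for the smooth kernel, and the choice $\rho_n=n^{-1/5}$ violates this once $d\ge 5$. To obtain a correct statement, either restrict to $d\le 4$ or take $\rho_n=n^{-a}$ with $a<1/d$. Examples are $\rho_n=1$, or $\rho_n=n\lambda_n$ with $\lambda_n\asymp n^{-(d+1)/(2d+1)}$. For such $\rho_n$ your computation gives variance $n^{(ad-1)/(d+1)}\to 0$, and the spike bound is unaffected.

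\textbf{A smaller point.} The bound $q_X\ge n^{-(2+c)/d}$ holds only with probability $1-O(n^{-c})$, not ``up to log factors'' with overwhelming probability. For $c<1$ this still gives off-diagonal entries at most $e^{-n^{(1-c)/d}}$. On the complementary event, however, the only available control of $k_n(X,X)^{-1}$ goes through negative powers of $q_X$, which need not be integrable; for example $\mathbb E[q_X^{-1}]=\infty$ when $d=1$. So there are no ``trivial bounds'' on the bad event, and you should state consistency in probability, as \cite{haas2023mind} do.
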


We now prove that solving the kernel regression via gradient flow harms adversarial robustness due to overfitting.

\begin{theorem}
    With the same assumptions as in Theorem 5.1, the adversarial risk of the estimator
    \[
        \hat{f}_n^t(x) = k_n(x,X) \left( I - \exp\left( -\frac{t}{n}k_n(X,X) \right) \right) k_n(X,X)^{-1}y,
    \]
    satisfies
    \[
        R_A(\hat{f}_n^t,f^*) \gtrsim \log(n r_n^d) \quad \forall t>n^
    {\frac{6}{d}+1}.
    \]
\end{theorem}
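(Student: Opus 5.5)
We follow the strategy of Theorem 5.3: first show that for $t>n^{\frac{6}{d}+1}$ the estimator $\hat{f}_n^t$ stays uniformly close to the ridgeless interpolant $\hat{f}_n=k_n(x,X)k_n(X,X)^{-1}y$ on the data. Then apply the local-fluctuation argument of Theorem 5.1, which only uses that the estimator nearly interpolates the noisy labels.

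\textbf{Step 1: spectral gap of the spiky Gram matrix.} We write $k_n(X,X)=K^{NT}(X,X)+n^{-1/5}E$, where $E_{ij}=\exp(-\|x_i-x_j\|n^{3/d})$. Both summands are positive semidefinite, since the Laplace kernel is positive definite. Hence
\[
\lambda_{\min}(k_n(X,X))\ge\lambda_{\min}(K^{NT}(X,X))\ge C^{-1}\min_{i\neq j}|x_i-x_j|
\]
by Lemma 5.2. For i.i.d.\ covariates with density bounded above, a union bound gives $\min_{i\ne j}|x_i-x_j|\ge n^{-3/d}$ with probability $1-O(n^{2}n^{-3}C)=1-O(n^{-1})$. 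On this event, $\lambda_{\min}(k_n(X,X))\gtrsim n^{-3/d}$.

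\textbf{Step 2: early-stopped versus interpolating fit at the data.} On the event of Step 1, and for $t>n^{\frac6d+1}$,
\[
\hat{f}^t_n(X)-y=-\exp\!\bigl(-\tfrac tn k_n(X,X)\bigr)y,
\]
whose entries are bounded by $\exp(-c\,t n^{-1-3/d})\|y\|_2\le e^{-cn^{3/d}}\|y\|_2$. Since $\|y\|_2\lesssim\sqrt{n\log n}$ with high probability (bounded $f^*$, Gaussian noise), the fitted values satisfy $|\hat f_n^t(x_i)-y_i|\le \eta_n$ with $\eta_n\to0$ superpolynomially, uniformly in $i$.

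\textbf{Step 3: adversarial lower bound.} For a test point $X$, we have
\[
\sup_{x'\in B(X,r_n)}|\hat f_n^t(x')-f^*(X)|^2\ge \max_{i:\,x_i\in B(X,r_n)}\bigl(|\xi_i|-|f^*(x_i)-f^*(X)|-\eta_n\bigr)_+^2 .
\]
The ball $B(X,r_n)\cap[0,1]^d$ has volume $\gtrsim r_n^d$, so by Assumption 2.1 the number $N$ of design points in it is $\gtrsim nr_n^d$ with probability tending to one (Chernoff bound, using $nr_n^d\to\infty$). Conditionally on the design, the $\xi_i$ are i.i.d.\ $\mathcal N(0,\sigma^2)$ and independent of $X$. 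The maximum of $N$ such variables exceeds $\sigma\sqrt{\log N}$ with constant probability, and $|f^*|\le C$ makes the subtracted terms $O(1)$. Squaring and taking expectations on the intersection of the good events (probability bounded below) gives $R_A\gtrsim \sigma^2\log(nr_n^d)-O(1)$. This is $\gtrsim\log(nr_n^d)$ for $n>M$.

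\textbf{Main obstacle.} The delicate point is Step 1 together with making the time threshold match. The lower eigenvalue bound of Lemma 5.2 must transfer to $k_n$; the PSD sum argument handles this. The minimal pairwise distance must be at least $n^{-3/d}$ with high probability. This is where the exponent $\frac6d+1$ arises, and I would verify that the union-bound estimate $P(\min|x_i-x_j|<\delta)\lesssim n^2\delta^d$ gives exactly this scaling. If the constants are off, one can instead work with a slightly smaller separation $n^{-3/d}\log^{-1}n$ and absorb the resulting polynomial factor into $\exp(-c\,tn^{-1-3/d})$.
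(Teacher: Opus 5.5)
Your proof is correct, but it is organized differently from the paper's. The paper says the proof of Theorem 5.3 carries over. It first transfers Lemma 5.2 to $k_n$ by the same positive-semidefinite-sum observation you use. It then bounds the whole-function discrepancy $\mathbb{E}\sup_x|\hat f_n^t(x)-\hat f_n(x)|^2\lesssim n^{-2}$, using the exponential-moment estimate $\mathbb{E}[e^{-\lambda q_X}]\lesssim n^2\lambda^{-d}$ for the minimal separation. Finally it transfers the oscillation bound of Theorem 5.1 from the exact interpolant $\hat f_n$ to $\hat f_n^t$ via the perturbation inequality $G_A^2(g)\ge G_A^2(f)-2\sup|f-g|^2$.

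You instead look only at the design points, where the residual is $\hat f_n^t(X)-y=-e^{-\frac{t}{n}k_n(X,X)}y$. You work on the high-probability event $\min_{i\neq j}|x_i-x_j|\ge n^{-3/d}$ rather than in expectation. You then rerun the local-maximum argument directly on $\hat f_n^t$, bounding $R_A$ itself rather than $G_A$.

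The paper's route, if fully justified, gives a stronger structural fact: the late-time estimator is uniformly close to the interpolant, so Theorem 5.1 can be reused as a black box. Your route uses only what is needed and is more robust, because $k_n(X,X)^{-1}$ never enters the residual at the data. In the paper's sup-norm step, the operator norm of $e^{-\frac{t}{n}k(X,X)}k(X,X)^{-1}$ is $e^{-t\lambda_{\min}/n}/\lambda_{\min}$, not $e^{-t\lambda_{\min}/n}$ as written. Since $\lambda_{\min}\asymp q_X$ by Lemma 5.2, the extra factor makes that expectation bound delicate; its square is not even integrable when $d\le 2$. As a side benefit, your argument only needs the separation event to have probability bounded below. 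It would therefore already work for $t\gtrsim n^{1+2/d}\log n$, taking separation scale $c\,n^{-2/d}$ with $c$ small.
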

\section{Experiments}
\label{sec:experiments}

In this section, we present empirical evaluations that corroborate our theoretical findings regarding the adversarial robustness of wide neural networks. Specifically, we aim to validate the minimax optimality of early stopping (Theorem 4.2) and the vulnerability of the minimum norm interpolant in the overfitting regime (Theorem 5.1 and Theorem 5.3).

\subsection{Adversarial Risk Dynamics in the Exact NTK Regime}
We first evaluate the exact idealized NTK dynamics under gradient flow, i.e., the following expression:
\[
\hat{f}_t^{\mathrm{NTK}}(x) = K_x(X)^\top (K(X, X))^{-1} \left( I - e^{-\frac{t}{n} K(X, X)} \right) Y.
\]

Figure \ref{fig:ntk_adv_risk} illustrates the evolution of adversarial risk $R_A(\hat{f}_t^{\mathrm{NTK}}, f^*)$ across different sample sizes $n$ under three distinct data distributions. These datasets were deliberately chosen to validate the theoretical bounds across varying intrinsic dimensions, target mappings, and noise conditions:

\begin{itemize}
    \item \textbf{1D Synthetic Dataset:} Inputs are sampled uniformly $x_i \sim \mathrm{Unif}(0, 1)$. The response variables are generated by a continuous target function $f^*(x) = \sin(2\pi x)$ corrupted by additive i.i.d. Gaussian noise $y_i = f^*(x_i) + \epsilon_i$, where $\epsilon_i \sim \mathcal{N}(0, \sigma^2)$, $\sigma=0.3$.
    \item \textbf{Real-World Diabetes Dataset:} A standard empirical regression benchmark\footnote{Data retrieved from \url{https://www4.stat.ncsu.edu/~boos/var.select/diabetes.tab.txt}} comprising $d=10$ baseline physiological variables (e.g., age, BMI, blood pressure, and six blood serum measurements) used to predict a quantitative measure of disease progression. The input features are standardized to zero mean and strictly projected onto the unit sphere $\mathbb{S}^{d-1}$.
    \item \textbf{High-Dimensional Synthetic Dataset ($d=5$):} Inputs are drawn uniformly from a hypercube $x_i \sim \mathrm{Unif}([0,1]^d)$. The target function maps the empirical mean of the feature vector to a sinusoidal response, $f^*(x) = \sin\left(2\pi \cdot \frac{1}{d}\sum_{j=1}^d x^{(j)}\right)$ and the output is corrupted by additive i.i.d. Gaussian noise $y_i = f^*(x_i) + \epsilon_i$, where $\epsilon_i \sim \mathcal{N}(0, \sigma^2)$, $\sigma=0.3$.
\end{itemize}

We evaluated the adversarial risk at different $r$, with 0 (standard MSE), 0.02 and 0.05.
Across all dimensions and data distributions, the trajectories universally exhibit a pronounced U-shape against training time $t$, empirically confirming two core theoretical claims:

\begin{itemize}
    \item \textbf{Descent Phase (Optimal Early Stopping):} In the early stages, the adversarial risk $R_A$ rapidly decreases to a global minimum at an optimal stopping time; this descent strictly aligns with the minimax optimal rate derived in Theorem 4.2.
    \item \textbf{Ascent Phase (Benign vs. Adversarial Overfitting):} As $t \to \infty$, the adversarial risk predictably deteriorates. Whether it diverges drastically (as seen in the 1D case) or saturates at a suboptimal high risk (as seen in higher dimensions and real data), it verifies the lower bound established in Theorem 5.3.
\end{itemize}

\begin{figure}[!htbp]
    \centering
    \includegraphics[width=0.9\textwidth]{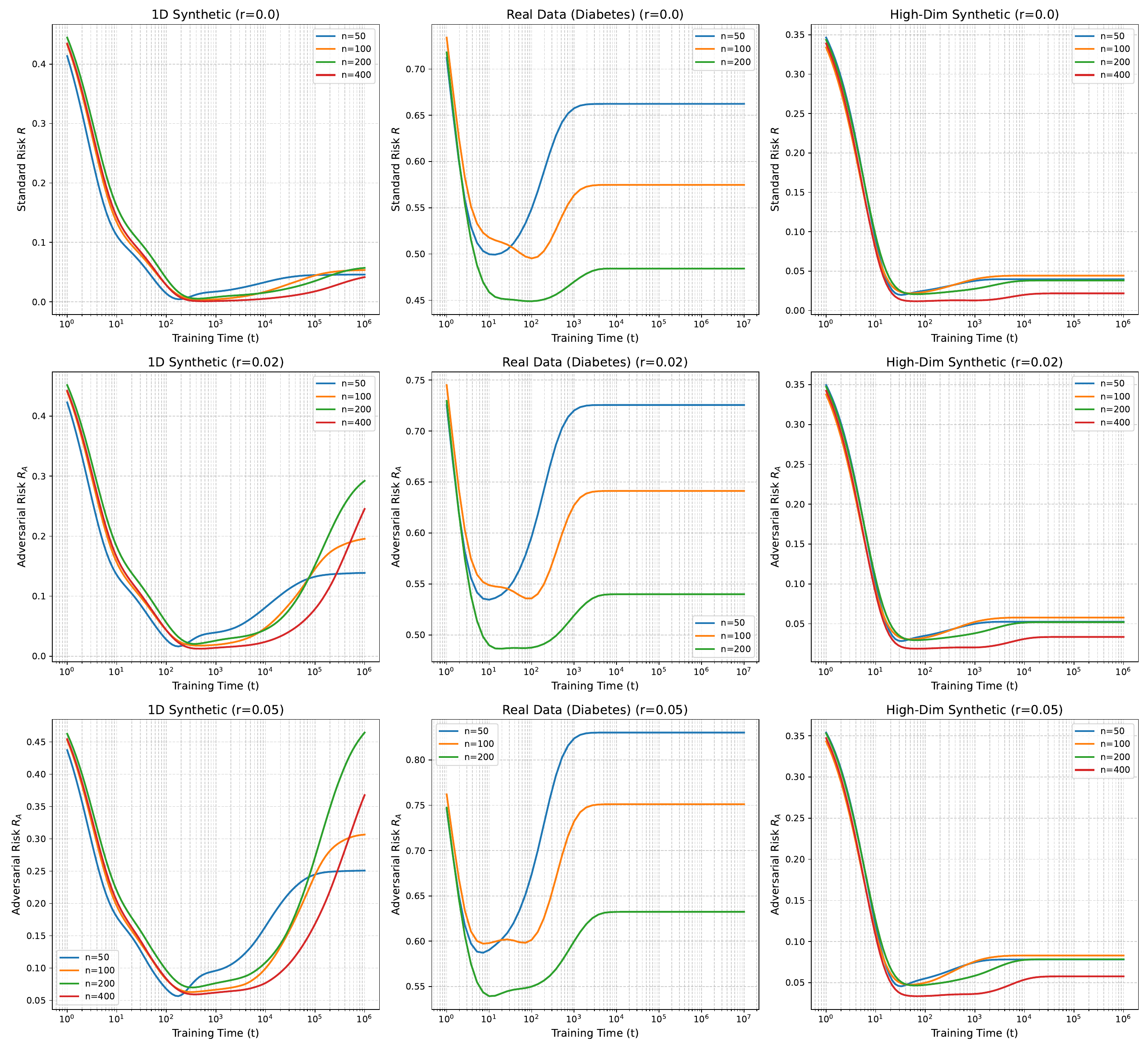}
    \caption{Evolution of Adversarial Risk $R_A$ over training time $t$ in the exact NTK regime. From left to right: 1D Synthetic data with Gaussian noise, real-world Diabetes regression dataset on $\mathbb{S}^{d-1}$, and High-Dim ($d=5$) Synthetic data. The universally consistent U-shaped curves highlight the fundamental necessity of early stopping (or equivalent spectral regularization) to prevent the severe degradation of adversarial risk caused by noise interpolation.}
    \label{fig:ntk_adv_risk}
\end{figure}

\subsection{Training Dynamics and Function Space Spikes of Wide ReLU Networks}
We further train a wide single-hidden-layer ReLU network (width $m = 100,000$) using full-batch Gradient Descent to approximate the continuous-time gradient flow on 1-D data with $n=10$.

As depicted in Figure \ref{fig:ntk_training_dynamics} (Left), the adversarial loss curve is in a U shape, which validates our theoretical result.

Figure \ref{fig:ntk_training_dynamics} (Right) provides an intuitive geometric explanation in the function space. The estimator obtained at the early stopping point smoothly approximates $f^*$, thereby ensuring adversarial stability. Conversely, the final interpolant $\hat{f}_\infty$ exhibits sharp local peaks exactly at the noisy training data points. These spikes are necessary to perfectly interpolate the Gaussian noise $\xi_i$, which intrinsically breaks the local modulus of continuity.

\begin{figure}[!htbp]
    \centering
    \includegraphics[width=0.9\textwidth]{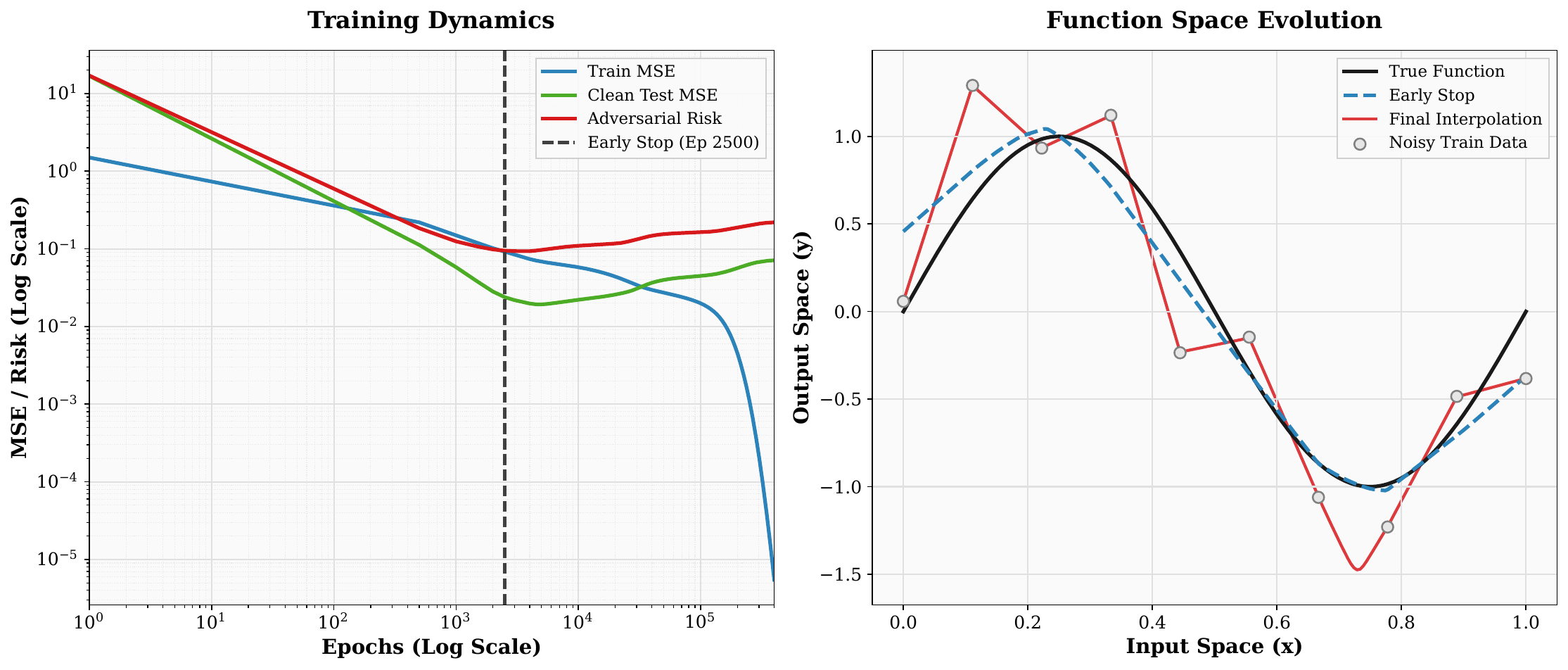}
    \caption{Left: Training dynamics of a wide ReLU network. Right: Function space visualization comparing the smooth early-stopped estimator with the highly oscillatory final minimum norm interpolant.}
    \label{fig:ntk_training_dynamics}
\end{figure}

\subsection{Mitigating Overfitting Vulnerabilities via Randomized Smoothing}

In the baseline exact NTK regime, the adversarial risk trajectory exhibits a severe ascent phase as $t \rightarrow \infty$. To evaluate potential defenses against this interpolation-induced brittleness, we follow the randomized $\alpha$-smoothing method (\cite{rekavandi2024certified}). For a given test point, we sample perturbed inputs from a Gaussian distribution with a standard deviation of $0.1$. We then apply an $\alpha$-trimming filter with a truncation rate of 0.35 or 0.49; also, we provide the NN without smoothing in contrast.

\begin{figure}[!htbp]
    \centering
    \includegraphics[width=0.9\textwidth]{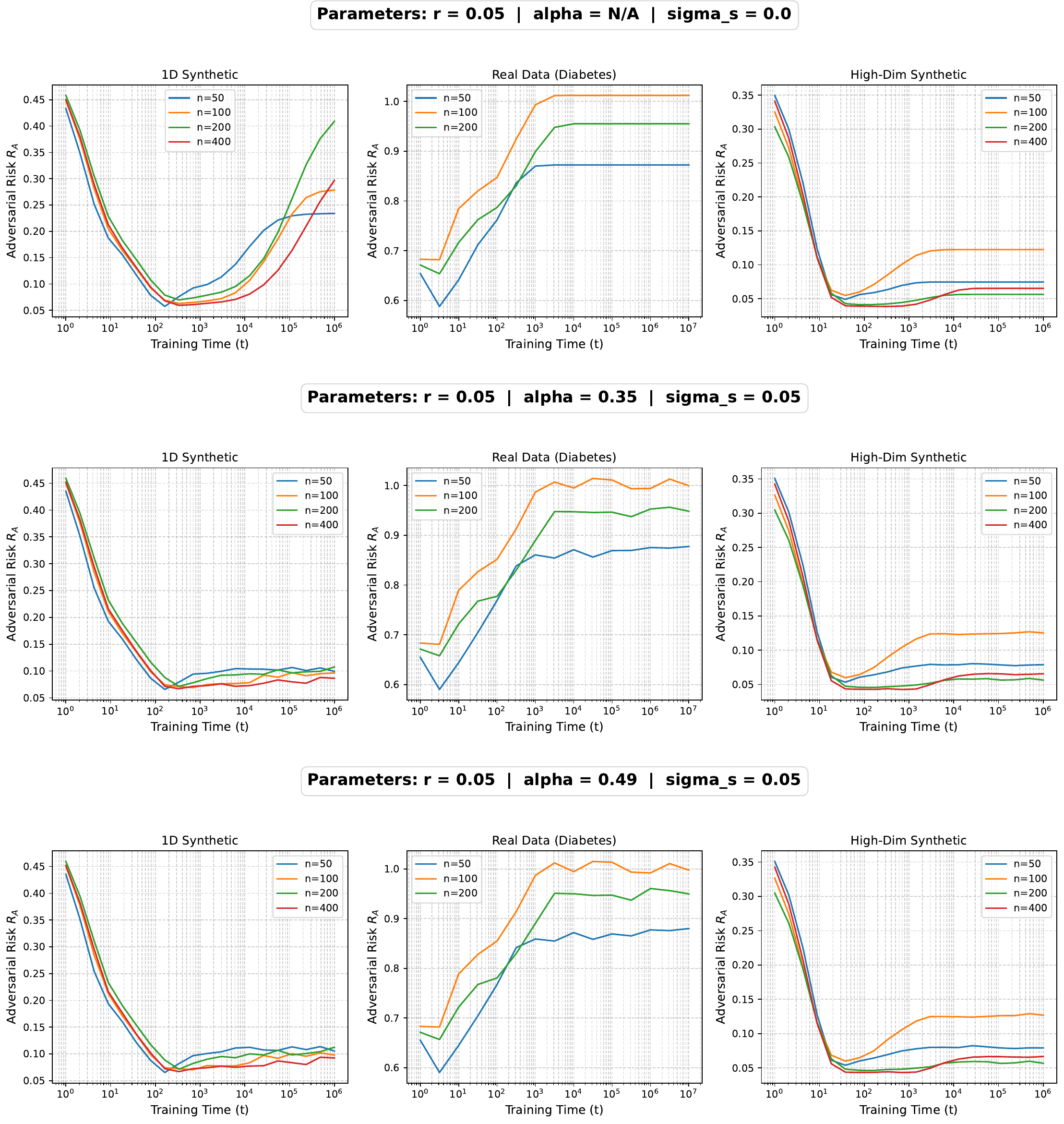}
    \caption{Evolution of Adversarial Risk $R_A$ over training time $t$ with $\alpha$-trimming smoothing.}
    \label{fig:ntk_alpha_trimmed}
\end{figure}

We replicate the idealized NTK dynamics experiment described in Section 5.1 across sample sizes $n \in \{50, 100, 200, 400\}$. Figure \ref{fig:ntk_alpha_trimmed} reveals that while the $\alpha$-smoothed estimator maintains a highly stable risk on 1D synthetic data—even as training time approaches $10^6$ steps, it performs poorly on high-dimensional synthetic and real-world data. The result reveals a limitation of the method, indicating that the smoothing of black-box estimators warrants further investigation.

\clearpage
\section{Conclusion and Future Directions}

In this paper, we have provided a theoretical characterization of the adversarial robustness of wide neural networks in the context of nonparametric regression. Our analysis establishes the minimax optimal rates for adversarial risk in Sobolev spaces $H^s$, and shows that wide neural networks, trained via gradient flow with early stopping, are minimax optimal, effectively balancing the bias-variance trade-off while controlling adversarial instability. In contrast, we proved that the minimum norm interpolant diverges in adversarial risk, highlighting the necessity of early stopping as a form of regularization in safety-critical applications.

Our results rely on the NTK approximation, which holds in the infinite-width limit. An important direction for future work is to extend these results to finite-width networks or to the ``feature learning'' regime, where the tangent kernel evolves during training and may offer adaptivity beyond that of static kernels. In addition, our results are established for a fixed dimension $d$; characterizing the high-dimensional regime in which $d \to \infty$ as $n \to \infty$ remains an open challenge.
\bibliography{references}
\appendix
\section{Proof of Section 3}
\subsection{Upper Bound}
\begin{lemma}
    The Sobolev ball $\mathcal{H}^s(L)$ is a closed and convex subset of $L^2([0,1]^d)$.
\end{lemma}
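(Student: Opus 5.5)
Convexity is immediate: $\mathcal{H}^s(L)$ is the $L$-ball of the seminormed (in fact normed) space $H^s([0,1]^d)$ with the quotient norm. Take $f_1,f_2\in\mathcal{H}^s(L)$ and $\theta\in[0,1]$. For every $\epsilon>0$ choose extensions $g_1,g_2\in H^s(\R^d)$ with $g_j|_{[0,1]^d}=f_j$ and $\|g_j\|_{H^s(\R^d)}\le L+\epsilon$. Then $\theta g_1+(1-\theta)g_2$ extends $\theta f_1+(1-\theta)f_2$, and by the triangle inequality its $H^s(\R^d)$ norm is at most $L+\epsilon$. Letting $\epsilon\to0$ gives $\|\theta f_1+(1-\theta)f_2\|_{H^s([0,1]^d)}\le L$.

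For closedness in $L^2([0,1]^d)$, take $f_k\in\mathcal{H}^s(L)$ with $f_k\to f$ in $L^2([0,1]^d)$; we must show $f\in\mathcal{H}^s(L)$. Pick extensions $g_k\in H^s(\R^d)$ of $f_k$ with $\|g_k\|_{H^s(\R^d)}\le L+1/k$. The space $H^s(\R^d)$ is a Hilbert space: it is isometric, via $\mathcal{F}$, to the weighted space $L^2((1+|\xi|^2)^s d\xi)$. Hence the bounded sequence $(g_k)$ has a subsequence converging weakly in $H^s(\R^d)$ to some $g$. Weak lower semicontinuity of the norm then gives $\|g\|_{H^s(\R^d)}\le\liminf_k(L+1/k)=L$.

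The main step, and essentially the only nontrivial one, is to identify $g|_{[0,1]^d}$ with $f$. Since $s\ge0$, we have $\|h\|_{L^2(\R^d)}\le\|h\|_{H^s(\R^d)}$ by Plancherel. So the inclusion $H^s(\R^d)\hookrightarrow L^2(\R^d)$ is bounded, and so is the restriction map $L^2(\R^d)\to L^2([0,1]^d)$. Bounded linear operators are weak-to-weak continuous, so $g_k|_{[0,1]^d}=f_k\rightharpoonup g|_{[0,1]^d}$ weakly in $L^2([0,1]^d)$. We also know $f_k\to f$ strongly, and hence weakly, in $L^2([0,1]^d)$. Weak limits are unique, so $g|_{[0,1]^d}=f$ almost everywhere.

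Therefore $f\in H^s([0,1]^d)$ with $\|f\|_{H^s([0,1]^d)}\le\|g\|_{H^s(\R^d)}\le L$, i.e.\ $f\in\mathcal{H}^s(L)$. So the ball is closed, and together with convexity this justifies the well-posedness of the $L^2$-projection $\mathcal{P}_{\mathcal{H}^s(L)}$ used in the PT-KRR estimator.
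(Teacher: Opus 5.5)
Your proof is correct and follows the same strategy as the paper's. Convexity comes from the triangle inequality; closedness comes from weak sequential compactness of a bounded sequence, weak lower semicontinuity of the norm, and uniqueness of weak limits in $L^2([0,1]^d)$ via the bounded embedding. The one difference is that you run the compactness argument on near-optimal extensions $g_k \in H^s(\mathbb{R}^d)$, a Hilbert space by Plancherel, rather than in $H^s([0,1]^d)$ itself. This avoids the paper's unstated assumption that the quotient-normed space $H^s([0,1]^d)$ is Hilbert, which is true because it is isometric to the orthogonal complement in $H^s(\mathbb{R}^d)$ of the closed subspace of functions vanishing a.e.\ on $[0,1]^d$.
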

\begin{proof}
We will prove the convexity and closedness of the Sobolev ball $\mathcal{H}^s(L) = \{ f \in H^s([0,1]^d) : \|f\|_{H^s} \le L \}$ in $L^2([0,1]^d)$ separately.

\textbf{Step 1: Convexity}
Let $f, g \in \mathcal{H}^s(L)$ and $t \in [0, 1]$. We aim to show that the convex combination $h = tf + (1-t)g$ also belongs to $\mathcal{H}^s(L)$. By the absolute homogeneity and the triangle inequality of the norm in the Sobolev space $H^s([0,1]^d)$, we have:
\[
    \|h\|_{H^s} = \|tf + (1-t)g\|_{H^s} \le \|tf\|_{H^s} + \|(1-t)g\|_{H^s} = t\|f\|_{H^s} + (1-t)\|g\|_{H^s}.
\]
Since $f, g \in \mathcal{H}^s(L)$, we know that $\|f\|_{H^s} \le L$ and $\|g\|_{H^s} \le L$. Substituting these bounds yields:
\[
    \|h\|_{H^s} \le tL + (1-t)L = L.
\]
Thus, $tf + (1-t)g \in \mathcal{H}^s(L)$, which proves that $\mathcal{H}^s(L)$ is a convex set.

\textbf{Step 2: Closedness in $L^2([0,1]^d)$}
Since $L^2([0,1]^d)$ is a separable metric space, it is enough to show sequential closedness: for any sequence $\{f_n\}_{n=1}^{\infty} \subset \mathcal{H}^s(L)$ that converges to some $f \in L^2([0,1]^d)$ in the $L^2$ topology (i.e., $\lim_{n \to \infty} \|f_n - f\|_{L^2} = 0$), we must have $f \in \mathcal{H}^s(L)$.

First, since $\{f_n\} \subset \mathcal{H}^s(L)$, the sequence is uniformly bounded in the Hilbert space $H^s([0,1]^d)$, meaning $\|f_n\|_{H^s} \le L$ for all $n$. 
Because $H^s([0,1]^d)$ is a Hilbert space, there exists a subsequence $\{f_{n_k}\}_{k=1}^{\infty}$ that converges weakly to some limit $\tilde{f} \in H^s([0,1]^d)$. We denote this as $f_{n_k} \rightharpoonup \tilde{f}$ in $H^s([0,1]^d)$.

Next, consider the natural inclusion map $\iota: H^s([0,1]^d) \hookrightarrow L^2([0,1]^d)$. Since $s \ge 0$, this map is a bounded linear operator. Because bounded linear operators preserve weak convergence, the subsequence also converges weakly in $L^2([0,1]^d)$:
\[
    \iota(f_{n_k}) \rightharpoonup \iota(\tilde{f}) \implies f_{n_k} \rightharpoonup \tilde{f} \text{ in } L^2([0,1]^d).
\]

On the other hand, we are given that the full sequence $\{f_n\}$ converges strongly to $f$ in $L^2([0,1]^d)$, which implies that its subsequence $\{f_{n_k}\}$ also converges strongly to $f$ in $L^2([0,1]^d)$. Since strong convergence implies weak convergence in normed spaces, we also have $f_{n_k} \rightharpoonup f$ in $L^2([0,1]^d)$.

By the uniqueness of weak limits in $L^2([0,1]^d)$, we deduce that $\tilde{f} = f$ almost everywhere. Therefore, $f$ essentially belongs to $H^s([0,1]^d)$.

Finally, the norm functional $\|\cdot\|_{H^s}$ is weakly lower semicontinuous in the Hilbert space $H^s([0,1]^d)$. Using the weak convergence $f_{n_k} \rightharpoonup f$ in $H^s([0,1]^d)$, we obtain:
\[
    \|f\|_{H^s} \le \liminf_{k \to \infty} \|f_{n_k}\|_{H^s}.
\]
Since $\|f_{n_k}\|_{H^s} \le L$ for all $k$, it follows that:
\[
    \|f\|_{H^s} \le L.
\]
This proves that $f \in \mathcal{H}^s(L)$, concluding the proof that $\mathcal{H}^s(L)$ is a closed subset of $L^2([0,1]^d)$.
\end{proof}
Combining the above fact and Proposition 5.3 in \cite{brezis2011functional}, we immediately establish the following lemma.
\begin{lemma}[Contraction of Projection]
For any $f \in \mathcal{H}^{s}(L)$, the projected estimator satisfies:
\begin{align*}
    \|\hat{f} - f\|_{L^{2}([0,1]^d)} \le \|\tilde{f} - f\|_{L^{2}([0,1]^d)}.
\end{align*}
\end{lemma}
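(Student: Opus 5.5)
The plan is to use the standard variational characterization of the metric projection onto a nonempty closed convex subset of a Hilbert space, i.e.\ Proposition 5.3 in \cite{brezis2011functional}. We work in the Hilbert space $L^2([0,1]^d)$ with the set $K=\mathcal{H}^s(L)$. By the preceding lemma, $K$ is closed and convex. It is nonempty since it contains $0$. Hence $\hat f=\mathcal{P}_{K}(\tilde f)$ exists, is unique, and is characterized by
\begin{align*}
    \langle \tilde f-\hat f,\; g-\hat f\rangle_{L^2}\le 0 \quad \text{for all } g\in K .
\end{align*}
One point needs checking: $\tilde f$ must lie in $L^2([0,1]^d)$ so that the projection is defined. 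This holds because $\tilde f\in H^s([0,1]^d)$ in Case 1, or $\tilde f\in H^d([0,1]^d)$ in Case 2, and both embed continuously into $L^2$.

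Fix $f\in K$ and apply the inequality with $g=f$. Expanding the left-hand side of the target gives
\begin{align*}
    \|\tilde f-f\|_{L^2}^2=\|(\tilde f-\hat f)+(\hat f-f)\|_{L^2}^2
    =\|\tilde f-\hat f\|_{L^2}^2+\|\hat f-f\|_{L^2}^2+2\langle \tilde f-\hat f,\hat f-f\rangle_{L^2}.
\end{align*}
The variational inequality with $g=f$ says the cross term is nonnegative. The first term is trivially nonnegative. Therefore $\|\tilde f-f\|_{L^2}^2\ge \|\hat f-f\|_{L^2}^2$, and taking square roots gives the claim. An alternative route is to note that $\mathcal{P}_K$ is $1$-Lipschitz, again by Brezis, and that $\mathcal{P}_K f=f$ for $f\in K$; then $\|\mathcal{P}_K\tilde f-\mathcal{P}_K f\|\le\|\tilde f-f\|$ directly.

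There is essentially no obstacle here. The only step needing care is confirming that the hypotheses of the projection theorem hold in the $L^2$ topology rather than the $H^s$ topology. Closedness in $L^2$ is the nontrivial part, and the previous lemma supplies it. The same argument covers the intermediate estimator $\bar f$ in Case 2, since it is the same projection with $\tilde f\in H^d$.
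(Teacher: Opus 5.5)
Your proposal is correct and matches the paper's argument. The paper combines the closedness and convexity of $\mathcal{H}^s(L)$ in $L^2$ with Proposition 5.3 of Brezis (non-expansiveness of the projection, together with $\mathcal{P}_K f=f$ for $f\in K$), which is exactly your alternative route; your main route via the variational inequality just unpacks that proposition, and your checks that $K$ is nonempty and $\tilde f\in L^2$ fill in details the paper leaves implicit.
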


The probabilistic bound in \cite{zhang2024optimality} directly implies the following result by simply taking an integral.
\begin{corollary}[Adapted from \cite{zhang2024optimality}]
\begin{align*}
    \sup_{f \in \mathcal{H}^{s}(L)} \E_{\mathcal{D}_n} \|\tilde{f} - f\|_{L^{2}}^2 \lesssim n^{-\frac{2s}{2s+d}}.
\end{align*}
\end{corollary}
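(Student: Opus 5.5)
The corollary follows by integrating the high-probability bound of \cite{zhang2024optimality} over the confidence level. I would treat the two cases of the construction separately. In Case 1 ($s>d/2$) the RKHS is $H^s([0,1]^d)$ itself. This space is norm-equivalent to the RKHS of a Sobolev (Matérn-type) kernel whose eigenvalues decay as $\mu_j\asymp j^{-2s/d}$ under Assumption 1, since the density $f_X$ is bounded above and below. The truth $f$ lies in the RKHS ball, i.e.\ the source condition holds with exponent $1$. In Case 2 the RKHS is $H^d$, so $\mu_j\asymp j^{-2}$. The truth lies only in the interpolation space $[H^d]^{s/d}=H^s$, which is exactly the misspecified setting of \cite{zhang2024optimality}. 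In both cases I would quote their theorem in the form
\begin{align*}
    \mathbb{P}\Bigl(\|\tilde f-f\|_{L^2}^2 > C\tau^2 n^{-\frac{2s}{2s+d}}\Bigr)\le 4e^{-\tau},\qquad \tau\ge \tau_0,
\end{align*}
uniformly over $f\in\mathcal H^s(L)$. The choice $\lambda\asymp n^{-2s/(2s+d)}$ in Case 1, and $\lambda\asymp n^{-2d/(2s+d)}$ in Case 2, is exactly the tuning under which their rate equals $n^{-2s/(2s+d)}$.

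Next I would check the hypotheses of their theorem. Sub-Gaussian noise (Assumption 2) gives the required moment condition. The embedding index condition needs $L^\infty$-boundedness of $f$. In Case 1 this follows from $s>d/2$ and Sobolev embedding. In Case 2 this is the delicate point, since $f\in H^s$ with $s\le d/2$ need not be bounded. The result I would cite from \cite{zhang2024optimality} covers Sobolev RKHSs with $\alpha_0=d/(2\cdot d)$ arbitrarily close to the embedding index. It therefore allows the misspecified range $s>0$, possibly with an $L^q$ rather than $L^\infty$ condition on $f$; I would verify that their Sobolev corollary requires only $f\in H^s$. Uniformity over the ball comes from the constants depending only on $\|f\|_{H^s}\le L$, $\sigma$, $d$, $s$ and the density bounds.

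Finally, I would integrate the tail bound. Writing $Z=\|\tilde f-f\|_{L^2}^2/(Cn^{-2s/(2s+d)})$, we have
\begin{align*}
    \mathbb E Z=\int_0^\infty \mathbb P(Z>u)\,du\le \tau_0^2+\int_{\tau_0^2}^\infty 4e^{-\sqrt u}\,du<\infty,
\end{align*}
and this yields the claim. The main obstacle is that the high-probability statement usually holds only for $\tau$ below a threshold depending on $n$, e.g.\ $n\ge n_0(\tau)$ with $n_0$ growing polynomially in $\tau$. The extreme tail then needs a crude deterministic or moment bound. I would handle this with the trivial bound $\|\tilde f\|_{\mathcal H}^2\le \lambda^{-1}\frac1n\sum Y_i^2$ from the KRR objective at $g=0$. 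This gives $\mathbb E\|\tilde f-f\|_{L^2}^4\lesssim \mathrm{poly}(n)$. Cauchy–Schwarz then controls the contribution of the event $\tau>c\log n$, whose probability is at most $n^{-K}$ for large $K$, and that contribution is negligible relative to $n^{-2s/(2s+d)}$.
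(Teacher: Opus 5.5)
Your route is the paper's. Its proof also just integrates the tail of Theorem~1 of \cite{zhang2024optimality}: it splits at $t_0=\epsilon_n^2\log^2 6$ with $\epsilon_n^2=C_sn^{-2s/(2s+d)}$ and substitutes $\delta(t)=6e^{-\sqrt t/\epsilon_n}$. The difference is that the paper reads that theorem as valid for every $\delta\in(0,1)$ and every $n$, so the extreme tail never comes up. You correctly note that the cited bound only holds for fixed $\delta$ once $n$ is large. In Case~1 your repair is sound. There $f$ is bounded by Sobolev embedding and the noise is sub-Gaussian, so the sample-size condition of the standard bounded analysis is linear in $\tau$ up to logarithms, and $\tau=K\log n$ is admissible for every fixed $K$. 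Also $\E Y^4<\infty$ gives the polynomial fourth moment you need for Cauchy--Schwarz.

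In Case~2 ($s\le d/2$) the repair does not go through, for two reasons.

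\emph{Fourth moment.} $\E\bigl(\frac1n\sum_i Y_i^2\bigr)^2<\infty$ needs $f\in L^4$. But $H^s\hookrightarrow L^{2d/(d-2s)}$ is sharp, so for $s<d/4$ the ball $\mathcal H^s(L)$ contains $f$ with $\E f(X)^4=\infty$, e.g.\ a cut-off multiple of $|x-x_0|^{-a}$ with $d/4<a<d/2-s$.

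\emph{Polynomial tail.} This is the more serious problem. When $f$ is unbounded, the loss has a genuinely polynomial upper tail. A design point with huge $|f(X_i)|$ puts a spike into $\tilde f$; in the equivalent-kernel picture it has height $\asymp |Y_i|/(nh^d)$ and width $h=n^{-1/(2s+d)}$. This makes $\|\tilde f-f\|_{L^2}^2$ of order $n^{-2s/(2s+d)}Y_i^2/n$. So a tail of the form $4e^{-\tau}$ up to $\tau\sim n^c$ cannot hold, and $\mathbb{P}(\tau>K\log n)\le n^{-K}$ fails for large $K$. This is also why the misspecified result of \cite{zhang2024optimality} is stated only for fixed $\delta$. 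There, unbounded $f$ is handled by truncating at a level $t$, which costs a failure probability of order $n\,t^{-q}\|f\|_{L^q}^q$, polynomially small of a fixed order. Your crude bound carries a factor $\lambda^{-1}$, so it cannot make the bad-event contribution $O(n^{-2s/(2s+d)})$.

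To close Case~2 you have to open the argument rather than integrate a black-box tail. One way to do it:
\begin{itemize}
\item Write $\tilde f[v]$ for the KRR estimator built from a response vector $v$, and split $Y=Y^{(1)}+Y^{(2)}$ with $Y^{(2)}_i=f(X_i)\mathbf{1}\{|f(X_i)|>t\}$.
\item On the event that the empirical and population covariance operators are comparable, $\|\tilde f[v]\|_{L^2}^2\lesssim\frac1n\|v\|^2$. This event fails with super-polynomially small probability, and on its complement H\"older with exponent $p\le d/(d-2s)$ suffices.
\item Hence the $Y^{(2)}$ part costs $\lesssim t^{2-q}$ in expectation, with $q=2d/(d-2s)$.
\item The problem with responses $Y^{(1)}$ does have exponential tails, so your integration applies to it.
\item Choosing $t$ as a suitable power of $n$ balances the two parts.
\end{itemize}
The paper's proof has the same hole in Case~2, since its reading of the cited bound cannot hold for unbounded $f$.
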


\begin{proof}
Let $X = \|\tilde{f} - f\|_{L^{2}}^2$. Theorem 1 from \cite{zhang2024optimality} establishes that there exists a constant $C_s$ (depending only on the function class $s$) such that for any $\delta \in (0, 1)$ and $n$,
\begin{align*}
    \mathbb{P}\left( X > C_sn^{-\frac{2s}{2s+d}} \log^2\left(\frac{6}{\delta}\right) \right) \leq \delta.
\end{align*}
Let $\epsilon_n^2 = C_s n^{-\frac{2s}{2s+d}}$. For a non-negative random variable $X$, its expectation can be bounded by integrating its tail probability:
\begin{align*}
    \E[X] = \int_{0}^{\infty} \mathbb{P}(X > t) \, dt.
\end{align*}
To apply the high-probability bound, we match the term inside the probability by setting $t = \epsilon_n^2 \log^2\left(\frac{6}{\delta}\right)$. This yields $\delta(t) = 6 \exp\left(-\frac{\sqrt{t}}{\epsilon_n}\right)$. 
Notice that the bound $\mathbb{P}(X > t) \leq \delta(t)$ is valid only when $\delta(t) \in (0, 1)$, which rigorously requires $t > \epsilon_n^2 \log^2(6)$. Therefore, we truncate and split the integral at $t_0 = \epsilon_n^2 \log^2(6)$:
\begin{align*}
    \E[X] = \int_{0}^{t_0} \mathbb{P}(X > t) \, dt + \int_{t_0}^{\infty} \mathbb{P}(X > t) \, dt.
\end{align*}
For the first integral, we apply the trivial upper bound $\mathbb{P}(X > t) \leq 1$:
\begin{align*}
    \int_{0}^{t_0} \mathbb{P}(X > t) \, dt \leq \int_{0}^{t_0} 1 \, dt = t_0 = \epsilon_n^2 \log^2(6).
\end{align*}
For the second integral, since $\delta(t) < 1$, we can strictly apply the bound from Theorem 1:
\begin{align*}
    \int_{t_0}^{\infty} \mathbb{P}(X > t) \, dt \leq \int_{t_0}^{\infty} 6 \exp\left(-\frac{\sqrt{t}}{\epsilon_n}\right) \, dt.
\end{align*}
We evaluate this integral using the substitution $y = \frac{\sqrt{t}}{\epsilon_n}$, which implies $t = \epsilon_n^2 y^2$ and $dt = 2 \epsilon_n^2 y \, dy$. The lower limit of integration changes from $t_0$ to $\log(6)$:
\begin{align*}
    \int_{t_0}^{\infty} 6 \exp\left(-\frac{\sqrt{t}}{\epsilon_n}\right) \, dt &= \int_{\log(6)}^{\infty} 6 e^{-y} \cdot 2 \epsilon_n^2 y \, dy \nonumber \\
    &= 12 \epsilon_n^2 \int_{\log(6)}^{\infty} y e^{-y} \, dy.
\end{align*}
Using integration by parts ($\int y e^{-y} dy = -e^{-y}(y + 1)$), we obtain:
\begin{align*}
    \int_{\log(6)}^{\infty} y e^{-y} \, dy = \left[ -e^{-y}(y + 1) \right]_{\log(6)}^{\infty} = e^{-\log(6)}(\log(6) + 1) = \frac{\log(6) + 1}{6}.
\end{align*}
Substituting this back, the second integral is bounded by:
\begin{align*}
    12 \epsilon_n^2 \cdot \frac{\log(6) + 1}{6} = 2 \epsilon_n^2 (\log(6) + 1).
\end{align*}
Combining both parts, we arrive at the expectation bound:
\begin{align*}
    \E[X] &\leq \epsilon_n^2 \log^2(6) + 2 \epsilon_n^2 (\log(6) + 1) \nonumber \\
    &= \epsilon_n^2 \left( \log^2(6) + 2\log(6) + 2 \right).
\end{align*}
Since $\log^2(6) + 2\log(6) + 2$ is an absolute constant, we conclude that:
\begin{align*}
    \E \|\tilde{f} - f\|_{L^{2}}^2 \leq C_s \left( \log^2(6) + 2\log(6) + 2 \right) n^{-\frac{2s}{2s+d}} \lesssim n^{-\frac{2s}{2s+d}}.
\end{align*}
This completes the proof.
\end{proof}
\subsubsection{Proof of $d=1$, $s>d/2$}

Using the inequality $(a+b)^{2} \le 2a^{2} + 2b^{2}$, we decompose the adversarial risk:
\begin{align*}
    R_{A}(\hat{f}, f) &= \mathbb{E}_{X,\mathcal{D}_n} [\sup_{x^{\prime} \in A(X)} |\hat{f}(x^{\prime}) - \hat{f}(X) + \hat{f}(X) - f(X)|^{2}] \nonumber \\
    &\le 2\mathbb{E}_{X,\mathcal{D}_n}[|\hat{f}(X) - f(X)|^{2}] + 2\mathbb{E}_{X,\mathcal{D}_n} [\sup_{x^{\prime} \in A(X)} |\hat{f}(x^{\prime}) - \hat{f}(X)|^{2}].
\end{align*}
The first term is the standard risk, and the second term represents the adversarial instability.
To bound the second term, we now establish the following tools.
\begin{lemma}[Modulus of Continuity in $H^{s}$]
There is a universal constant $C=C_s$ such that for any function $g \in H^{s}([0,1])$ with $s \in (1/2, 1)$, where $A(X) = [X-r, X+r] \cap [0,1]$ is the local neighborhood, the expected local variation satisfies:
\begin{align*}
    \mathbb{E}_X\left[\sup_{x^{\prime} \in A(X)} |g(x^{\prime}) - g(X)|^{2}\right] \le C r^{2s} \|g\|_{H^{s}}^{2}.
\end{align*}
\end{lemma}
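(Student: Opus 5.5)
The approach is to reduce the statement to a Hölder-type modulus of continuity for $H^s([0,1])$ with $s\in(1/2,1)$:
\[
|g(x)-g(y)|\le C_s\,|x-y|^{s-1/2}\,\|g\|_{H^s}\quad\text{for } x,y\in[0,1].
\]
Taking the supremum over $x'\in A(X)$ would then give $\sup_{x'}|g(x')-g(X)|^2\le C r^{2s-1}\|g\|_{H^s}^2$. That exponent is worse than the claimed $r^{2s}$, so the pointwise Hölder bound alone is not enough. The averaging over $X$ has to be used.

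The plan is to work with a Gagliardo-type quantity and a local Morrey/Campanato argument. First, use the quotient-norm definition and pick an extension $\tilde g\in H^s(\R)$ with $\|\tilde g\|_{H^s(\R)}\le 2\|g\|_{H^s([0,1])}$. By the Fourier characterization, this controls the Gagliardo seminorm:
\[
[\tilde g]_s^2=\iint\frac{|\tilde g(x)-\tilde g(y)|^2}{|x-y|^{1+2s}}\,dx\,dy\le C\|\tilde g\|_{H^s}^2 .
\]
Second, for each interval $I$ of length $\ell$, the one-dimensional fractional Morrey inequality (Campanato, or the Garsia–Rodemich–Rumsey lemma with $\Psi(u)=u^2$ and $p(u)=u^{s+1/2}$) gives a local oscillation bound:
\[
\operatorname{osc}_I \tilde g^2\le C\,\ell^{2s-1}\iint_{I\times I}\frac{|\tilde g(x)-\tilde g(y)|^2}{|x-y|^{1+2s}}\,dx\,dy .
\]
Third, cover $\R$ by intervals $I_k=[kr,(k+2)r]$ of length $2r$. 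Every $A(X)$ sits inside a union of at most two consecutive such intervals, say $J_k=[(k-1)r,(k+2)r]$ with bounded overlap. Then
\[
\E_X\sup_{x'\in A(X)}|g(x')-g(X)|^2\le C\sum_k \mathbb P(X\in I_k)\,\operatorname{osc}_{J_k}\tilde g^2\le C\,r\sum_k\operatorname{osc}_{J_k}\tilde g^2 ,
\]
using the density bound $f_X\le C$. Inserting the local oscillation bound gives
\[
C\,r\cdot r^{2s-1}\sum_k\iint_{J_k\times J_k}\cdots\le C\,r^{2s}[\tilde g]_s^2\le C\,r^{2s}\|g\|_{H^s}^2 ,
\]
by bounded overlap of the squares $J_k\times J_k$.

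The main obstacle is the second step: getting the local oscillation bound with the sharp scaling $\ell^{2s-1}$ and only the localized Gagliardo energy on the right-hand side. I will first derive it by scaling. Rescale $I$ to $[0,1]$ and apply the unit-interval embedding $W^{s,2}(0,1)\hookrightarrow C^{0,s-1/2}$ to $\tilde g-\text{mean}_I\tilde g$, whose $W^{s,2}$ norm is controlled by the seminorm via a fractional Poincaré inequality. Checking the powers gives exactly $\ell^{2s-1}$. If that route is messy, I will fall back on Garsia–Rodemich–Rumsey applied directly on $I$, which yields the localized inequality in one line.
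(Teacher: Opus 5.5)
Your proposal is correct and follows the paper's route. Both arguments rest on a localized fractional Morrey inequality on windows of length $\asymp r$, with the sharp factor $r^{2s-1}$ multiplying the local Gagliardo energy, and then average over $X$ to pick up one more factor of $r$. Your discrete bounded-overlap cover plays the role of the paper's Fubini step, where each pair $(u,v)$ lies in $J_x\times J_x$ only for $x$ in a set of measure at most $2r$. Your scaling/Poincar\'e or Garsia--Rodemich--Rumsey derivation replaces the paper's telescoping of averages over nested intervals.

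One bookkeeping fix: index the cover by the partition cells $[kr,(k+1)r)$ rather than the overlapping $I_k=[kr,(k+2)r]$. With $X$ in the $k$-th cell you then really have $A(X)\subset J_k=[(k-1)r,(k+2)r]$, which your sum requires.
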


\begin{lemma}[Localized Morrey's Inequality]
Let $J \subset \mathbb{R}$ be an interval of length $|J| = 2r$. For any $g \in H^{s}(J)$ with $s > 1/2$, and for any $u, v \in J$, the following pointwise estimate holds:
\begin{align*}
    |g(u) - g(v)|^{2} \le C_{s} r^{2s-1} \iint_{J \times J} \frac{|g(x) - g(y)|^{2}}{|x-y|^{1+2s}} dx dy.
\end{align*}
\end{lemma}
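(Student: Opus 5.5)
We prove the Localized Morrey's Inequality by a Campanato-type averaging argument on subintervals of $J$, followed by rescaling to extract the factor $r^{2s-1}$. Write
\[
[g]_J^2 \triangleq \iint_{J\times J}\frac{|g(x)-g(y)|^2}{|x-y|^{1+2s}}\,dx\,dy ,
\]
and, for a subinterval $I\subset J$, let $g_I = |I|^{-1}\int_I g$ denote the average of $g$ over $I$. Since $s>1/2$, $g$ has a continuous representative, and we work with it throughout.

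\textbf{Step 1: reduction to $|J|=1$.} Set $\tilde g(y) = g(a+2ry)$ on $[0,1]$, where $J=[a,a+2r]$. Changing variables $x=a+2rx'$, $y=a+2ry'$ gives
\[
[g]_J^2 = (2r)^{2}\,(2r)^{-1-2s}\,[\tilde g]_{[0,1]}^2 = (2r)^{1-2s}[\tilde g]_{[0,1]}^2 .
\]
It therefore suffices to prove $|\tilde g(u)-\tilde g(v)|^2\le C_s[\tilde g]_{[0,1]}^2$ for all $u,v\in[0,1]$. Indeed, this yields
\[
|g(u)-g(v)|^2 \le C_s (2r)^{2s-1}[g]_J^2 ,
\]
and absorbing $2^{2s-1}$ into $C_s$ gives the claim.

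\textbf{Step 2: oscillation of averages on dyadic-type subintervals.} Fix $u\in[0,1]$. Choose nested intervals $I_k\ni u$ with $I_k\subset[0,1]$, $|I_k| = 2^{-k}$, and $I_{k+1}\subset I_k$. These can always be taken inside $[0,1]$, for instance by anchoring each $I_k$ at the nearer endpoint when $u$ is close to the boundary. For any two intervals $I'\subset I$ with $|I'|\ge|I|/2$, Jensen's inequality gives
\[
|g_{I'}-g_I|^2 \le \frac{1}{|I'||I|}\iint_{I'\times I}|g(x)-g(y)|^2\,dx\,dy .
\]
Since $|x-y|\le|I|$ on $I'\times I$, we may insert the kernel weight $|x-y|^{-1-2s}\ge|I|^{-1-2s}$. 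Using $|I'|\ge|I|/2$, this yields
\[
|g_{I'}-g_I|^2 \le 2\,|I|^{2s-1}\iint_{I\times I}\frac{|g(x)-g(y)|^2}{|x-y|^{1+2s}}\,dx\,dy \le 2\,|I|^{2s-1}[g]^2_{[0,1]} .
\]
Applied to $I'=I_{k+1}$, $I=I_k$, this gives $|g_{I_{k+1}}-g_{I_k}|\le \sqrt2\,2^{-k(s-1/2)}[g]$.

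\textbf{Step 3: telescoping.} Because $s>1/2$, the bound in Step 2 is summable, which is exactly where the hypothesis $s>1/2$ enters. Continuity of $g$ (or Lebesgue differentiation at every point, for the continuous representative) gives $g_{I_k}\to g(u)$. Summing the geometric series,
\[
|g(u)-g_{[0,1]}|\le \sum_{k\ge0}\sqrt2\, 2^{-k(s-1/2)}[g] = C'_s[g] .
\]
The same bound holds for $v$, and the triangle inequality gives $|g(u)-g(v)|\le 2C'_s[g]$. Squaring and returning to $J$ via Step 1 completes the proof.

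The main obstacle is the boundary handling in Step 2. The nested intervals must stay inside $J$, because the Gagliardo seminorm is taken only over $J\times J$. Anchoring $I_k$ at the endpoint nearer to $u$ keeps each consecutive pair nested with ratio $1/2$. A secondary point is justifying the pointwise limit $g_{I_k}\to g(u)$ at every point rather than almost everywhere. This follows because the telescoping estimate shows the averages form a Cauchy sequence, uniformly in $u$, so the limit defines a continuous representative of $g$ that agrees with $g$ almost everywhere.
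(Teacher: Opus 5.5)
Your proposal is correct and follows essentially the same route as the paper: telescoping averages over nested subintervals of $J$ that shrink by a factor of two, bounding consecutive differences by the Gagliardo seminorm, and summing the geometric series, which converges because $s>1/2$. The differences are minor. You rescale to $|J|=1$ first, and you apply Jensen to the squared difference instead of Cauchy--Schwarz with a split weight. You also justify point values through the continuous representative rather than Lebesgue points, which together with your explicit nested choice inside $J$ makes the boundary handling and the every-point validity slightly cleaner than in the paper.
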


\begin{proof}
The proof relies on approximating the point value $g(u)$ by a sequence of averages over shrinking intervals. Let $J_{0} = J$. For a fixed point $u \in J$, consider a sequence of nested intervals $J_{k} = J \cap (u - 2^{-k}r, u + 2^{-k}r)$. Note that $|J_{k}| \approx 2^{-k}r$.

Let $\overline{g}_{k}$ denote the integral average of $g$ over $J_{k}$:
\begin{align*}
    \overline{g}_{k} := \frac{1}{|J_{k}|} \int_{J_{k}} g(z) dz.
\end{align*}
By the Lebesgue Differentiation Theorem, assuming $u$ is a Lebesgue point, we have $g(u) = \lim_{k \to \infty} \overline{g}_{k}$. Thus, we can write the telescoping sum:
\begin{align*}
    g(u) - \overline{g}_{0} = \sum_{k=0}^{\infty} (\overline{g}_{k+1} - \overline{g}_{k}).
\end{align*}
Now we estimate the difference between consecutive averages:
\begin{align*}
    |\overline{g}_{k+1} - \overline{g}_{k}| &= \left|\frac{1}{|J_{k+1}|}\int_{J_{k+1}}g(x)dx - \frac{1}{|J_{k}|}\int_{J_{k}}g(y)dy\right| \\
    &\le \frac{1}{|J_{k+1}||J_{k}|} \int_{J_{k+1}}\int_{J_{k}} |g(x) - g(y)| dy dx.
\end{align*}
Since $J_{k+1} \subset J_{k}$, for any $x, y \in J_{k}$, we have $|x-y| \le |J_{k}| \approx 2^{-k}r$. We multiply and divide by $|x-y|^{\frac{1}{2}+s}$. Applying the Cauchy-Schwarz inequality:
\begin{align*}
    |\overline{g}_{k+1} - \overline{g}_{k}| \le \frac{1}{|J_{k+1}||J_{k}|} \left(\iint_{J_{k} \times J_{k}} \frac{|g(x)-g(y)|^{2}}{|x-y|^{1+2s}} dx dy\right)^{1/2} \times \left(\iint_{J_{k+1} \times J_{k}} |x-y|^{1+2s} dx dy\right)^{1/2}.
\end{align*}
The second integral is bounded by $|J_k|^{s+3/2}$. Substituting this back:
\[
    |\overline{g}_{k+1} - \overline{g}_{k}| \le C |J_{k}|^{-2} |J_{k}|^{s+\frac{3}{2}} [g]_{H^{s}(J_{k})} = C |J_{k}|^{s-\frac{1}{2}} [g]_{H^{s}(J_{k})}.
\]
Substituting $|J_{k}| = 2^{-k}r$ and summing the series (which converges since $s > 1/2$):
\begin{align*}
    |g(u) - \overline{g}_{0}| \le \sum_{k=0}^{\infty} |\overline{g}_{k+1} - \overline{g}_{k}| \le C' r^{s-\frac{1}{2}} [g]_{H^{s}(J)}.
\end{align*}
Finally, for any $u, v \in J$, by the triangle inequality and squaring both sides, we obtain:
\begin{align*}
    |g(u) - g(v)|^{2} \le C'' r^{2s-1} \iint_{J \times J} \frac{|g(x)-g(y)|^{2}}{|x-y|^{1+2s}} dx dy.
\end{align*}
\end{proof}

\noindent\textbf{Proof of Lemma A.3.} We integrate the localized Morrey inequality derived above with respect to $X$ over $[0, 1]$. Let $J_x = [X-r, X+r] \cap [0,1]$.
\begin{align*}
    \E_X[\sup_{x' \in J_x} |g(x') - g(X)|^2]
    &\le \int_0^1 \left( C r^{2s-1} \iint_{J_x \times J_x} \frac{|g(u) - g(v)|^2}{|u-v|^{1+2s}} \, du \, dv \right) f_X(x) dx \\
    &\lesssim \int_0^1 \left( C r^{2s-1} \iint_{J_x \times J_x} \frac{|g(u) - g(v)|^2}{|u-v|^{1+2s}} \, du \, dv \right) dx.
\end{align*}
By Fubini's theorem, we exchange the order of integration. The condition $u, v \in J_x$ is equivalent to $x \in [u-r, u+r]$ and $x \in [v-r, v+r]$. The intersection of these intervals has measure at most $2r$. Thus:
\begin{align*}
    \text{RHS} \lesssim C r^{2s-1} \cdot 2r \iint_{[0,1]^2} \frac{|g(u) - g(v)|^2}{|u-v|^{1+2s}} \, du \, dv \leq C' r^{2s} \|g\|_{H^{s}([0,1])}^{2}.
\end{align*}

\begin{lemma}[Modulus of Continuity in $H^{1}$]
There is a universal constant $C=C_s$ such that for any function $g \in H^{s}([0,1])$ with $s \geq 1$, the local variation satisfies:
\begin{align*}
    \mathbb{E}_X\left[\sup_{x^{\prime} \in A(X)} |g(x^{\prime}) - g(X)|^{2}\right] \le C r^{2} \|g\|_{H^{1}}^{2} \le C r^{2} \|g\|_{H^{s}}^{2}.
\end{align*}
\end{lemma}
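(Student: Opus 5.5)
The plan is to reduce to the one-dimensional fundamental theorem of calculus: for $s\ge 1$ on $[0,1]$, a function in $H^1([0,1])$ has an absolutely continuous representative with $g' \in L^2$. The second inequality $\|g\|_{H^1}\le \|g\|_{H^s}$ is immediate from the definition via Fourier multipliers, since $(1+|\xi|^2)\le(1+|\xi|^2)^s$ for $s\ge1$. Passing to the quotient norm on $[0,1]$ preserves the inequality, because any extension realizing the $H^s$ infimum up to $\epsilon$ is also an admissible $H^1$ extension. We also use that the $H^1([0,1])$ quotient norm is equivalent to $\|g\|_{L^2}+\|g'\|_{L^2([0,1])}$, which follows from the extension theorem already cited. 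So it suffices to bound the local variation by $r^2\|g'\|_{L^2([0,1])}^2$.

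For fixed $X$ and $x'\in A(X)=[X-r,X+r]\cap[0,1]$, write $g(x')-g(X)=\int_X^{x'} g'(z)\,dz$. Cauchy--Schwarz then gives
\[
\sup_{x'\in A(X)}|g(x')-g(X)|^2 \le |A(X)|\int_{A(X)}|g'(z)|^2\,dz \le 2r\int_{[0,1]}\mathds{1}\{|z-X|\le r\}|g'(z)|^2\,dz.
\]

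Next we take expectation over $X$, use the density bound $f_X\le C$ from Assumption 1, and exchange the order of integration by Fubini (the integrand is nonnegative). For each fixed $z$, the set of $X$ with $|z-X|\le r$ has Lebesgue measure at most $2r$. Hence the expectation is at most $2r\cdot C\cdot 2r\,\|g'\|_{L^2}^2 = 4C r^2\|g'\|^2_{L^2}\lesssim r^2\|g\|_{H^1}^2$. This mirrors the Fubini step in the proof of Lemma A.3, with the exponent $2s-1$ replaced by $1$, and it explains why the rate saturates at $r^2$ once $s\ge1$.

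The only delicate step is the justification that $g$ (a priori defined a.e.) has an absolutely continuous representative for which the pointwise supremum is meaningful. This is the standard characterization of $H^1$ on an interval: restrict a Fourier-defined extension, and note that weak and classical derivatives agree for absolutely continuous functions. Since $s>1/2$, the functions are continuous anyway, so the supremum over $A(X)$ is well defined. With this in place, the remaining steps are routine.
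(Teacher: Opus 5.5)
Your proposal is correct and follows essentially the same route as the paper: the fundamental theorem of calculus plus Cauchy--Schwarz gives a pointwise bound by a multiple of $r\int_{A(X)}|g'|^2$, then Fubini with the $2r$ measure bound and $f_X\le C$ yields $r^2\|g'\|_{L^2}^2$, and $\|g\|_{H^1}\le\|g\|_{H^s}$ finishes. The only differences are cosmetic: you justify the pointwise formula via the absolutely continuous representative rather than the paper's density argument, and you bound by $|A(X)|\le 2r$ rather than $|x'-X|\le r$, which only changes the constant.
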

\begin{proof}
    Once we have proved the lemma for smooth functions $g$, a density argument extends the result to the entire space $H^1$.
    
    First, we establish a pointwise bound. By the Fundamental Theorem of Calculus, for any fixed $x \in [0,1]$ and $x^{\prime} \in A(x)$ (where $|x - x^{\prime}| \le r$), we have:
    \begin{equation*}
        g(x^{\prime}) - g(x) = \int_{x}^{x^{\prime}} g'(t) \, dt.
    \end{equation*}
    Applying the Cauchy-Schwarz inequality yields:
    \begin{equation*}
        |g(x^{\prime}) - g(x)|^{2} = \left| \int_{x}^{x^{\prime}} 1 \cdot g'(t) \, dt \right|^{2} \le \left| \int_{x}^{x^{\prime}} 1^{2} \, dt \right| \cdot \left| \int_{x}^{x^{\prime}} |g'(t)|^{2} \, dt \right|.
    \end{equation*}
    Since $|x - x^{\prime}| \le r$, the first integral is bounded by $r$. The second integral is bounded by the integral over the entire neighborhood $A(x) = [x-r, x+r] \cap [0,1]$. Thus:
    \begin{align*}
        \sup_{x^{\prime} \in A(x)} |g(x^{\prime}) - g(x)|^{2} \le r \int_{A(x)} |g'(t)|^{2} \, dt.
    \end{align*}
    Next, we compute the expectation with respect to $X \sim U([0,1])$. Integrating the inequality over $x \in [0,1]$:
    \begin{equation*}
        \mathbb{E}_X\left[\sup_{x^{\prime} \in A(X)} |g(x^{\prime}) - g(X)|^{2}\right] \lesssim \int_{0}^{1} \left( r \int_{A(x)} |g'(t)|^{2} \, dt \right) dx.
    \end{equation*}
    We rewrite the inner integral using the indicator function $\mathds{1}_{\{t \in A(x)\}} = \mathds{1}_{\{|t-x| \le r\}}$. Since the integrand is non-negative, by Fubini's Theorem, we can interchange the order of integration:
    \begin{align*}
        \int_{0}^{1} \left( \int_{0}^{1} |g'(t)|^{2} \mathds{1}_{\{|t-x| \le r\}} \, dt \right) dx 
        &= \int_{0}^{1} |g'(t)|^{2} \left( \int_{0}^{1} \mathds{1}_{\{|x-t| \le r\}} \, dx \right) dt \\
        &= \int_{0}^{1} |g'(t)|^{2} \cdot \mu\big( [t-r, t+r] \cap [0,1] \big) \, dt,
    \end{align*}
    where $\mu(\cdot)$ denotes the Lebesgue measure. For any $t \in [0,1]$, the length of the intersection is bounded by the length of the interval $[t-r, t+r]$, i.e., $\mu([t-r, t+r] \cap [0,1]) \le 2r$.
    
    Substituting this back, we obtain:
    \begin{equation*}
        \mathbb{E}_X\left[\sup_{x^{\prime} \in A(X)} |g(x^{\prime}) - g(X)|^{2}\right] \lesssim r \int_{0}^{1} |g'(t)|^{2} (2r) \, dt = 2r^{2} \|g'\|_{L^{2}}^{2}.
    \end{equation*}
    Recalling the definition of the $H^{1}$ seminorm $\|g\|_{H^{1}} = \|g'\|_{L^{2}}$, and noting that $\|g\|_{H^{1}} \le \|g\|_{H^{s}}$ for any $s > 1$, we conclude:
    \begin{equation*}
        \mathbb{E}_X\left[\sup_{x^{\prime} \in A(X)} |g(x^{\prime}) - g(X)|^{2}\right] \lesssim 2 r^{2} \|g\|_{H^{1}}^{2} \le C r^{2} \|g\|_{H^{s}}^{2}.
    \end{equation*}
\end{proof}
Combining Lemma A.4, Lemma A.5, and Lemma A.6, we obtain the final result:
\begin{theorem}[Modulus of Continuity in $H^{s}$]
There is a constant $C=C_s$ such that for any function $g \in H^{s}([0,1])$ where $s> \frac{1}{2}$, the local variation satisfies:
\begin{align*}
    \mathbb{E}_X\left[\sup_{x^{\prime} \in A(X)} |g(x^{\prime}) - g(X)|^{2}\right] \le r^{2(1 \wedge s)} \|g\|_{H^{s}}^{2}.
\end{align*}
\end{theorem}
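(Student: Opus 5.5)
The statement follows by splitting on the value of $s$ and invoking the two modulus-of-continuity lemmas already proved: Lemma A.3 for $s\in(1/2,1)$ and Lemma A.5 for $s\ge 1$. The constant $C_s$ is then absorbed, as the statement implicitly does; this is an inequality up to a constant depending on $s$ and on the density bound $C$ from Assumption 1.

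\textbf{Case $s\ge 1$.} Here $2(1\wedge s)=2$. Lemma A.5 gives $\mathbb{E}_X[\sup_{x'\in A(X)}|g(x')-g(X)|^2]\lesssim r^2\|g'\|_{L^2}^2$. It remains to bound $\|g'\|_{L^2([0,1])}\le \|g\|_{H^1([0,1])}\le \|g\|_{H^s([0,1])}$. Since $\|\cdot\|_{H^s([0,1])}$ is the quotient norm, for any extension $G\in H^s(\mathbb{R})$ of $g$ we have
\[
\|g'\|_{L^2([0,1])}^2\le \|G'\|_{L^2(\mathbb{R})}^2=\int |\xi|^2|\mathcal{F}G|^2\,d\xi\le \|G\|_{H^s(\mathbb{R})}^2 .
\]
Taking the infimum over extensions gives the bound. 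For non-smooth $g$ we pass through the density of smooth functions, as in the proof of Lemma A.5.

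\textbf{Case $s\in(1/2,1)$.} Lemma A.3 gives the bound $r^{2s}$ times the Gagliardo seminorm $\iint_{[0,1]^2}|g(u)-g(v)|^2|u-v|^{-1-2s}\,du\,dv$. We then need the standard fact that this seminorm is controlled by the Fourier norm. For an extension $G$,
\[
\iint_{\mathbb{R}^2}\frac{|G(u)-G(v)|^2}{|u-v|^{1+2s}}\,du\,dv=c_s\int|\xi|^{2s}|\mathcal{F}G|^2\,d\xi,
\]
which follows from Plancherel applied to $h\mapsto G(\cdot+h)-G$ and the identity $\int|e^{ih\xi}-1|^2|h|^{-1-2s}\,dh=c_s|\xi|^{2s}$. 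Restricting the double integral to $[0,1]^2$ and taking the infimum over extensions yields the bound by $\|g\|^2_{H^s([0,1])}$.

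\textbf{Case $s=1$.} This is covered by the first case.

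The only genuinely delicate point is the one inherited from Lemma A.4: the supremum over $x'$ must be taken over a version of $g$ that is continuous, so that the Lebesgue-point argument applies to every $x'$. This is fine because $s>1/2$ gives the continuous embedding $H^s\hookrightarrow C^{0,s-1/2}$, so we fix that representative at the outset. The remaining work is the Fourier identity for the Gagliardo seminorm, which I would take as standard and cite (e.g.\ Di Nezza--Palatucci--Valdinoci).
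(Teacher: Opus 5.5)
Your proposal is correct and follows the paper's route: the paper obtains this theorem simply by combining the $s\in(1/2,1)$ lemma (localized Morrey plus Fubini) with the $s\ge 1$ lemma (fundamental theorem of calculus plus Fubini). Your additional steps fill in what the paper leaves implicit. These are the bounds of $\|g'\|_{L^2([0,1])}$ and of the Gagliardo seminorm on $[0,1]^2$ by the quotient Fourier norm, obtained via extensions, and the choice of a continuous representative. As you note, the missing constant $C_s$ (together with the density bound) is absorbed into the right-hand side.
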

Combining Theorem A.6 and Corollary A.2, we obtain the desired bound:
\begin{align*}
    R_A(\hat{f}, f) \lesssim \mathbb{E}\left[\sup_{x^{\prime} \in A(X)} |\hat{f}(x^{\prime}) - \hat{f}(X)|^{2}\right] + \mathbb{E}|\hat{f} - f^*| \leq r^{2(1 \wedge s)} + n^{\frac{-2s}{2s+1}}.
\end{align*}
\subsubsection{Proof of $d>1$, $s>d/2$}

\begin{lemma}[Local Riesz Potential Estimate]
\label{lem:riesz_potential_local}
Let $B(x, r) \subset \mathbb{R}^d$ be an open ball of radius $r$ centered at $x$, and let $u \in W^{1,1}(B(x, r))$. Then, for the average value $u_{B} = \frac{1}{|B|} \int_{B} u(y) \, dy$ and almost every $x' \in B(x,r)$, the following estimate holds:
\begin{align*}
    |u(x') - u_{B}| \le \frac{2^d}{d \omega_d} \int_{B(x,r)} \frac{|\nabla u(y)|}{|x-y|^{d-1}} \, dy,
\end{align*}
where $\omega_d$ denotes the volume of the unit ball in $\mathbb{R}^d$, and the constant depends only on the dimension $d$.
\end{lemma}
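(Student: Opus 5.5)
The plan is to follow the classical argument of Gilbarg--Trudinger (Lemma 7.16): first prove the estimate for smooth $u$ by integrating along segments and changing to polar coordinates, then pass to $W^{1,1}$ by density. Throughout, the kernel is understood to be centered at the evaluation point, i.e.\ $|x'-y|^{1-d}$. This is the form the polar-coordinate argument produces, and it coincides with the displayed kernel $|x-y|^{1-d}$ only when $x'=x$, so I read the statement with that correction. The constant matches the general bound $\frac{D^d}{d|S|}$ for a convex set $S$ of diameter $D$: with $S=B(x,r)$, $D=2r$ and $|S|=\omega_d r^d$, this is exactly $\frac{2^d}{d\omega_d}$.

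\textbf{Step 1 (smooth case).} Let $u\in C^1(\overline{B})$ with $B=B(x,r)$, and fix $x'\in B$. For $y\in B$ write $y=x'+s\omega$ with $|\omega|=1$ and $s=|y-x'|$. By convexity of the ball, the segment from $x'$ to $y$ lies in $B$, so
\[
u(x')-u(y)=-\int_0^{s}\partial_\rho u(x'+\rho\omega)\,d\rho .
\]
Averaging over $y\in B$ gives
\[
|u(x')-u_B|\le \frac{1}{|B|}\int_B\int_0^{|y-x'|}|\nabla u(x'+\rho\omega)|\,d\rho\,dy .
\]
Let $V=|\nabla u|$ on $B$, extended by $0$ outside $B$. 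Since $B\subset B(x',2r)$, I pass to polar coordinates centered at $x'$, enlarge the inner $\rho$-integral to $(0,\infty)$, and bound the result by
\[
\int_{S^{d-1}}\int_0^{2r}s^{d-1}\int_0^\infty V(x'+\rho\omega)\,d\rho\,ds\,d\omega=\frac{(2r)^d}{d}\int_{S^{d-1}}\int_0^\infty V(x'+\rho\omega)\,d\rho\,d\omega .
\]
Returning to Cartesian coordinates via $z=x'+\rho\omega$, $dz=\rho^{d-1}d\rho\,d\omega$, the last expression equals
\[
\frac{(2r)^d}{d}\int_B\frac{|\nabla u(z)|}{|x'-z|^{d-1}}\,dz .
\]
Dividing by $|B|=\omega_d r^d$ yields the constant $\frac{2^d}{d\omega_d}$.

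\textbf{Step 2 (density).} For general $u\in W^{1,1}(B)$, choose $u_k\in C^\infty(\overline{B})$ with $u_k\to u$ in $W^{1,1}(B)$; this is possible because the ball is a Lipschitz domain. Then $(u_k)_B\to u_B$, and along a subsequence $u_k\to u$ almost everywhere. For the right-hand side, the Riesz-type operator
\[
T f(x')=\int_B |x'-z|^{1-d}|f(z)|\,dz
\]
is bounded on $L^1(B)$. Indeed, by Fubini, $\|Tf\|_{L^1}\le \sup_z\int_B|x'-z|^{1-d}\,dx'\cdot\|f\|_{L^1}$, and this supremum is at most $d\omega_d\cdot 2r$. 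Hence $T(\nabla u_k)\to T(\nabla u)$ in $L^1(B)$, so also almost everywhere along a further subsequence. Passing to the limit in the smooth-case inequality gives the estimate for almost every $x'\in B$.

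\textbf{Main obstacle.} The only delicate points are this limiting step, which requires the $L^1$-boundedness of the potential in order to obtain an almost-everywhere statement, and the kernel-centering issue noted above. The smooth computation itself is routine once convexity is used to keep the segments inside $B$.
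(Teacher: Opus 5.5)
Your proof is correct and follows the same route as the paper, which simply invokes Gilbarg--Trudinger Lemma 7.16 with $\Omega=S=B(x,r)$; that lemma's proof is exactly your segment-and-polar-coordinates computation followed by a density argument, and its constant $D^d/(d|S|)$ with $D=2r$ gives $2^d/(d\omega_d)$. Your correction of the kernel to $|x'-y|^{1-d}$ is right and necessary: it is what Lemma 7.16 yields and what the paper itself uses in the subsequent Local Morrey Estimate, whereas the printed version (kernel centered at $x$) is false for $d\ge 2$, since a bump of height one and width $\epsilon$ placed away from $x$ makes the left side close to $1$ on a set of positive measure while the right side is $O(\epsilon^{d-1})$.
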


\begin{proof}[Remark]
This is a direct adaptation of Lemma 7.16 in \cite{gilbarg2001elliptic}, taking $\Omega = S = B(x,r)$.
\end{proof}

Using the potential estimate above, we can now derive the local Hölder continuity for Sobolev functions when $q > d$.

\begin{lemma}[Local Morrey Estimate]
\label{lem:local_morrey}
Let $B = B(x,r) \subset \mathbb{R}^d$ be a ball centered at $x$. Assume $u \in W^{1,q}(B)$ with $q > d$. Then for almost every $x' \in B$, the following estimate holds:
\begin{align*}
    |u(x') - u(x)| \le C_{d,q} \, r \left( \frac{1}{|B|} \int_{B} |\nabla u(y)|^q \, dy \right)^{1/q},
\end{align*}
where $C_{d,q}$ is a constant depending only on $d$ and $q$.
\end{lemma}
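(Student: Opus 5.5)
The plan is to combine the Local Riesz Potential Estimate (Lemma~\ref{lem:riesz_potential_local}) with H\"older's inequality, and then use the triangle inequality through the ball average $u_B$. We read the kernel in Lemma~\ref{lem:riesz_potential_local} as $|x'-y|^{1-d}$, singular at the evaluation point $x'$. This is the form of Lemma 7.16 in \cite{gilbarg2001elliptic} with $\Omega=S=B$. Applying it at $x'$ and, separately, at the center $x$ gives
\begin{align*}
    |u(x')-u(x)| \le |u(x')-u_B| + |u_B-u(x)| \le \frac{2^d}{d\omega_d}\sum_{z\in\{x,x'\}} \int_{B} \frac{|\nabla u(y)|}{|z-y|^{d-1}}\,dy .
\end{align*}
It therefore suffices to bound the potential $\int_B |\nabla u(y)|\,|z-y|^{1-d}\,dy$ uniformly over $z\in B$.

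For this, let $q'=q/(q-1)$ and apply H\"older:
\begin{align*}
    \int_B \frac{|\nabla u(y)|}{|z-y|^{d-1}}\,dy \le \|\nabla u\|_{L^q(B)} \Big(\int_B |z-y|^{-(d-1)q'}\,dy\Big)^{1/q'} .
\end{align*}
The key point is that $q>d$ is equivalent to $(d-1)q'<d$, so the second integral is finite. Since $B\subset B(z,2r)$, polar coordinates bound it by
\begin{align*}
    C_{d,q}\, r^{\,d-(d-1)q'} .
\end{align*}
Raising to the power $1/q'$ gives $C\,r^{\,d/q'-(d-1)} = C\,r^{\,1-d/q}$. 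Writing $\|\nabla u\|_{L^q(B)} = |B|^{1/q}\big(\frac{1}{|B|}\int_B|\nabla u|^q\big)^{1/q}$ with $|B|^{1/q}=\omega_d^{1/q} r^{d/q}$, the powers of $r$ combine to exactly $r$. This yields the claimed estimate with a constant depending only on $d$ and $q$.

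The main obstacle is making sense of the pointwise value $u(x)$ at the fixed center, since the potential estimate is only an a.e.\ statement. The plan is as follows.
\begin{enumerate}[label=(\roman*)]
    \item First prove the bound for $u\in C^1(\bar B)$, where Lemma~\ref{lem:riesz_potential_local} holds at every point, including $x$.
    \item Approximate a general $u\in W^{1,q}(B)$ by smooth functions in $W^{1,q}(B)$.
    \item The inequality just proved, applied to differences of the approximants, shows they form a uniformly Cauchy sequence. Hence $u$ has a continuous representative, and we take $u(x)$ to mean its value at $x$.
    \item The estimate passes to the limit for this representative; the ``almost every $x'$'' in the statement comes from identifying $u$ with that representative.
\end{enumerate}
A secondary check is that all constants in the H\"older step stay uniform in $z\in B$. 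This holds because the bound via $B(z,2r)$ does not depend on where $z$ sits in $B$.
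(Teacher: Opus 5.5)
Your proposal is correct and follows essentially the same route as the paper: the triangle inequality through $u_B$, the Riesz potential bound applied at both $x$ and $x'$ (with the kernel read as $|z-y|^{1-d}$, which is also what the paper's own proof uses), H\"older with $q'=q/(q-1)$, and the inclusion $B\subset B(z,2r)$ to obtain $r^{1-d/q}$, which then combines with $|B|^{1/q}$ to give exactly $r$. Your extra regularization step (proving the bound for $C^1(\bar B)$ and passing to the continuous representative) supplies rigor the paper omits, since the paper applies the almost-everywhere potential estimate directly at the fixed center $x$.
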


\begin{proof}
By the triangle inequality, we have
\begin{align*}
    |u(x') - u(x)| \le |u(x') - u_B| + |u(x) - u_B|,
\end{align*}
where $u_B$ is the average of $u$ over $B$.
Since the ball $B$ is convex, the Riesz potential estimate from Lemma \ref{lem:riesz_potential_local} applies to any point $z \in B$ (including $x$ and $x'$):
\begin{align*}
    |u(z) - u_B| \le C_d \int_{B} \frac{|\nabla u(y)|}{|z-y|^{d-1}} \, dy.
\end{align*}
We estimate the integral term using Hölder's inequality with exponents $q$ and $q' = \frac{q}{q-1}$:
\begin{align*}
    \int_{B} \frac{|\nabla u(y)|}{|z-y|^{d-1}} \, dy \le \left( \int_B |\nabla u(y)|^q \, dy \right)^{1/q} \left( \int_B |z-y|^{-(d-1)q'} \, dy \right)^{1/q'}.
\end{align*}
The second integral involves a singularity at $z$. Since $z \in B(x,r)$, we have $B(x,r) \subset B(z, 2r)$. Extending the domain of integration to $B(z, 2r)$ yields:
\begin{align*}
    \int_B |z-y|^{-(d-1)q'} \, dy \le \int_{B(z, 2r)} |z-y|^{-(d-1)q'} \, dy = \omega_d \int_0^{2r} \rho^{d-1 - (d-1)q'} \, d\rho.
\end{align*}
The exponent of $\rho$ is
\[
d - 1 - (d-1)\frac{q}{q-1} = \frac{(d-1)(q-1) - (d-1)q}{q-1}  = \frac{-(d-1)}{q-1} = \frac{q-d}{q-1} - 1.
\]
Since $q > d$, the exponent is strictly greater than $-1$, so the integral converges. Evaluating it gives:
\begin{align*}
    \left( \int_{B(z, 2r)} |z-y|^{-(d-1)q'} \, dy \right)^{1/q'} \le C \left( (2r)^{\frac{q-d}{q-1}} \right)^{\frac{q-1}{q}} = C' r^{\frac{q-d}{q}} = C' r^{1 - d/q}.
\end{align*}
Recalling that $|B| = \omega_d r^d$, we can write $r^{-d/q} = C |B|^{-1/q}$. Thus,
\begin{align*}
    r^{1 - d/q} = r \cdot r^{-d/q} \le C r |B|^{-1/q}.
\end{align*}
Substituting this back into the Hölder estimate, we obtain for any $z \in B$:
\begin{align*}
    |u(z) - u_B| \le C_{d,q} \, r \left( \frac{1}{|B|} \int_B |\nabla u(y)|^q \, dy \right)^{1/q}.
\end{align*}
Summing the estimates for $z=x$ and $z=x'$ completes the proof.
\end{proof}

\begin{theorem}
Let $d \geq 2$. Suppose $f \in H^s([0,1]^d)$ with $s > d/2$. There exists a constant $C$ such that for any $x \in [0,1]^d$ and $r > 0$:
\begin{align*} \label{eq:target}
    \int_{[0,1]^d} \sup_{x' \in B(x,r)} |f(x') - f(x)|^2 \, dx \leq C r^2 \|f\|_{H^s}^2.
\end{align*}
\end{theorem}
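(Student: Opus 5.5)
We plan to reduce the statement to the Local Morrey Estimate (Lemma \ref{lem:local_morrey}) applied to the gradient of an extension of $f$. The gradient will be controlled in some $L^q$ with $q>d$ through a fractional Sobolev embedding. The averaging over $x$ should then need only Fubini and Jensen, with no maximal-function bound.

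\textbf{Step 1 (extension and integrability of the gradient).} By the Sobolev Extension Theorem, extend $f$ to $Ef\in H^s(\R^d)$ with $\|Ef\|_{H^s(\R^d)}\le C\|f\|_{H^s([0,1]^d)}$. Since $s>d/2$, $Ef$ has a continuous representative, and we work with it throughout. This lets us replace ``almost every $x'$'' in Lemma \ref{lem:local_morrey} by ``every $x'$'', and hence bound the supremum. Because $d\ge 2$, we have $s>d/2\ge 1$, so $\nabla Ef\in H^{s-1}(\R^d)$ with $\|\nabla Ef\|_{H^{s-1}}\le\|Ef\|_{H^s}$. We next choose $q\in(d,\infty)$ with $s-1\ge d/2-d/q$. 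This is possible because $d/2-(s-1)<1$, so we only need $d/q$ to lie between $\max(0,d/2-s+1)$ and $1$. The Sobolev embedding $H^{s-1}(\R^d)\hookrightarrow L^q(\R^d)$ then gives $\|\nabla Ef\|_{L^q(\R^d)}\le C\|f\|_{H^s}$.

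\textbf{Step 2 (pointwise bound).} For each $x\in[0,1]^d$, apply Lemma \ref{lem:local_morrey} on $B=B(x,r)\subset\R^d$, using continuity to pass from almost every $x'$ to every $x'$. This yields
\[
\sup_{x'\in B(x,r)}|Ef(x')-Ef(x)|^2\le C r^2\, G(x)^{2/q},\qquad G(x):=\frac{1}{|B(x,r)|}\int_{B(x,r)}|\nabla Ef(y)|^q\,dy.
\]
The supremum over $B(x,r)\cap[0,1]^d$ for $f$ is dominated by the supremum over the full ball for $Ef$.

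\textbf{Step 3 (integration in $x$).} Since $q>d\ge2$, we have $2/q<1$, so $t\mapsto t^{2/q}$ is concave. Because $[0,1]^d$ has measure one, Jensen's inequality gives $\int_{[0,1]^d}G^{2/q}\le\big(\int_{[0,1]^d}G\big)^{2/q}$. By Fubini, $\int_{\R^d}G(x)\,dx=\int_{\R^d}|\nabla Ef(y)|^q\frac{|B(y,r)|}{|B(0,r)|}dy=\|\nabla Ef\|_{L^q}^q$. Combining this with Step 1 gives the claimed bound $Cr^2\|f\|_{H^s}^2$.

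\textbf{Main obstacle.} The delicate point is Step 1: choosing $q>d$ for which the embedding holds, and checking that the extension operator is bounded for fractional $s$. The borderline case $s-1=d/2$ is the one to watch, since $H^{d/2}$ does not embed in $L^\infty$; it is harmless here because any finite $q>d$ suffices. A second point to handle is the ``almost every $x'$'' qualifier in Lemma \ref{lem:local_morrey}, which we resolve by working with the continuous representative. Step 3 deliberately avoids the Hardy--Littlewood maximal function, which would fail on $L^{2/q}$ with $2/q<1$.
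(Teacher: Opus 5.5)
Your proof is correct. It follows the paper's skeleton: embed $\nabla f$ into $L^q$ for some $q>d$, apply the local Morrey estimate pointwise, then integrate in $x$. The difference is in the integration step.

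\textbf{Integration step.} The paper dominates the ball average by the Hardy--Littlewood maximal function $M(|\nabla f|^q)(x)$. This forces it to use two exponents $d<q<p$. It first applies H\"older on $[0,1]^d$ to pass from $L^{2/q}$ to $L^{p/q}$, and then applies the maximal theorem on $L^{p/q}$ with $p/q>1$. You instead keep the fixed-radius average $G$ and note, via Fubini, that averaging at a fixed radius is a contraction on $L^1$. You then finish with Jensen for the concave map $t\mapsto t^{2/q}$. Your route needs only one exponent and no maximal theorem, so it is the more elementary one.

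Your remark that the maximal-function approach ``would fail'' is too strong. It fails only if the maximal theorem is applied directly in $L^{2/q}$, and the paper avoids that with the H\"older step.

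\textbf{Points you make explicit.} Your write-up handles two things the paper passes over:
\begin{itemize}
\item The balls $B(x,r)$ extend outside $[0,1]^d$, so the averages need $\nabla f$ defined on $\mathbb{R}^d$. Your extension supplies this.
\item The Morrey lemma holds only for almost every $x'$. Your use of the continuous representative repairs this.
\end{itemize}

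\textbf{The extension step is not delicate.} You flag it as a possible obstacle, but the paper defines $H^s([0,1]^d)$ as restrictions of $H^s(\mathbb{R}^d)$ with the quotient norm. So for every $s$ an extension with norm arbitrarily close to $\|f\|_{H^s([0,1]^d)}$ exists by definition, and no extension theorem is needed.
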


\begin{proof}
Since $s > d/2$, the Sobolev embedding theorem ensures $H^s([0,1]^d) \hookrightarrow W^{1,p}([0,1]^d)$ for some $p > d$. Specifically, the condition $s - d/2 \ge 1 - d/p$ yields the estimate:
\begin{align*}
    \|\nabla f\|_{L^p([0,1]^d)} \le C \|f\|_{H^s([0,1]^d)}.
\end{align*}
For a ball $B = B(x,r)$ and any $x' \in B$, the local Morrey estimate implies:
\begin{align*}
    |f(x') - f(x)| \leq C_{d,q} \, r \left( \frac{1}{|B|} \int_{B} |\nabla f(y)|^q \, dy \right)^{1/q},
\end{align*}
where $q$ is an appropriate exponent. In terms of the Hardy-Littlewood maximal function $M$, we have the pointwise bound:
\begin{align*} 
    |f(x') - f(x)| \leq C r \left[ M(|\nabla f|^q)(x) \right]^{1/q}.
\end{align*}
Squaring the above inequality, taking the supremum over $x' \in B(x,r)$, and integrating over $[0,1]^d$ gives:
\[
    \int_{[0,1]^d} \sup_{x' \in B(x,r)} |f(x') - f(x)|^2 \, dx \leq C r^2 \int_{[0,1]^d} \left[ M(|\nabla f|^q)(x) \right]^{2/q} \, dx.
\]
Let $h = |\nabla f|^q$. Since $\nabla f \in L^p$, it follows that $h \in L^{p/q}([0,1]^d)$. We estimate the term $\| Mh \|_{L^{2/q}}^{2/q}$.
Since $d \geq 2$ and $p > d$, we have $p > 2$, which implies $p/q > 2/q$. By Hölder's inequality on the bounded domain $[0,1]^d$, we obtain the inclusion:
\begin{align*}
    \| Mh \|_{L^{2/q}([0,1]^d)} \leq C \| Mh \|_{L^{p/q}([0,1]^d)}.
\end{align*}
Furthermore, since $p/q > 1$, the maximal operator $M$ is bounded on $L^{p/q}([0,1]^d)$:
\begin{align*}
    \| Mh \|_{L^{p/q}([0,1]^d)} \leq C \| h \|_{L^{p/q}([0,1]^d)} = C \|\nabla f\|_{L^p([0,1]^d)}^q.
\end{align*}
Combining these estimates concludes the proof:
\begin{align*}
    \int_{[0,1]^d} \sup_{x' \in B(x,r)} |f(x') - f(x)|^2 \, dx &\leq C r^2 \left( \| Mh \|_{L^{2/q}} \right)^{2/q} \\
      &\leq C r^2 \left( C \| h \|_{L^{p/q}} \right)^{2/q} \\
      &= C' r^2 \|\nabla f\|_{L^p}^2 \\
      &\leq C'' r^2 \|f\|_{H^s}^2.
\end{align*}
\end{proof}
The same argument as in the case $d=1$ shows that this estimator has the error rate:
\begin{align*}
    \E[R_A(\hat{f}, f)] \leq r^2 + n^{-\frac{2s}{2s+d}}.
\end{align*}
\subsubsection{Proof of $s<d/2$}
As in \cite{zhang2024optimality}, the standard KRR estimator $\tilde{f}$ satisfies $\E \|\tilde{f}-f^*\|^2_{H^s}\lesssim 1$, $\E |\tilde{f}-f^*|_{L^2}\lesssim n^{\frac{-s}{2s+d}}$.
Now we introduce a lemma as our key tool for the argument.
\begin{lemma}
    Suppose $\phi \in C^1(\R^d)\cap L^\infty(\R^d)$ and satisfies $\operatorname{supp}(\mathcal{F}(\phi)) \subset B\left(0,\frac{1}{r}\right)$. Then $\phi \in C^\infty(\mathbb{R}^d)$ and 
    \begin{equation*}
        \int_{\mathbb{R}^d} \sup_{y \in B(x,r)}|\phi(y)|^2 dx \leq C \|\phi\|_{L^2}^2.
    \end{equation*}
\end{lemma}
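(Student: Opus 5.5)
The plan is to prove the inequality by a Plancherel--Pólya type argument: a reproducing formula with a Schwartz kernel, followed by Cauchy--Schwarz and a Fubini exchange.

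\textbf{Step 1 (reproducing formula and smoothness).} Since $\chi_{r/2}(\xi)=\chi(r\xi/2)\equiv 1$ on $|\xi|\le 1/r$, and $\operatorname{supp}\mathcal{F}(\phi)\subset B(0,1/r)$, we have $\mathcal{F}(\phi)=\mathcal{F}(\phi)\,\chi_{r/2}$ as tempered distributions. Here $\phi\in L^\infty$ is tempered, so $\mathcal{F}(\phi)$ is a compactly supported distribution. Set
\[
\psi_r(z)=(2\pi)^{-d}\int_{\R^d}\chi(r\xi/2)e^{i\langle z,\xi\rangle}\,d\xi = r^{-d}\Psi(z/r),
\]
where $\Psi$ is a fixed Schwartz function (the inverse transform of $\chi(\cdot/2)$, up to the normalisation constant). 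Then $\phi=\phi*\psi_r$ pointwise. Smoothness follows at once, since we may differentiate under the convolution onto $\psi_r$, using $\phi\in L^\infty$.

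\textbf{Step 2 (pointwise bound for the supremum).} Fix $x$ and take $y\in B(x,r)$. By Cauchy--Schwarz with the weight $w(z)=(1+|z|)^{-N}$, for some $N>d$,
\[
|\phi(y)|^2\le \Big(\int|\phi(z)|\,r^{-d}|\Psi((y-z)/r)|\,dz\Big)^2\le \|\Psi\|_{L^1}\int|\phi(z)|^2 r^{-d}|\Psi((y-z)/r)|\,dz .
\]
Since $\Psi$ decays rapidly and $|y-x|\le r$, we have
\[
|\Psi((y-z)/r)|\le C_N\Big(1+\tfrac{|y-z|}{r}\Big)^{-N}\le 2^N C_N\Big(1+\tfrac{|x-z|}{r}\Big)^{-N},
\]
because $1+|x-z|/r\le 2(1+|y-z|/r)$. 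The right-hand side no longer depends on $y$, so taking the supremum gives
\[
\sup_{y\in B(x,r)}|\phi(y)|^2\le C\int|\phi(z)|^2\,r^{-d}\Big(1+\tfrac{|x-z|}{r}\Big)^{-N}dz .
\]

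\textbf{Step 3 (integrate in $x$).} Integrate the last bound over $x\in\R^d$ and apply Tonelli. Substituting $u=(x-z)/r$, we get
\[
\int r^{-d}(1+|x-z|/r)^{-N}\,dx=\int(1+|u|)^{-N}\,du<\infty
\]
independently of $r$. This yields $\int\sup_{B(x,r)}|\phi|^2\,dx\le C\|\phi\|_{L^2}^2$ with $C=C(d,\chi)$. If $\phi\notin L^2$ the inequality is trivial.

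\textbf{Main obstacle.} The only delicate point is Step 1: justifying $\phi=\phi*\psi_r$ for a merely bounded $\phi$ whose Fourier transform is a distribution. I would use the distributional identity $\mathcal{F}(\phi*\psi_r)=\mathcal{F}(\phi)\,\mathcal{F}(\psi_r)$, which is valid because $\psi_r$ is Schwartz. Then $\phi-\phi*\psi_r$ has Fourier transform $\mathcal{F}(\phi)(1-\chi_{r/2})=0$, since $1-\chi_{r/2}$ is smooth and vanishes on a neighbourhood of $\operatorname{supp}\mathcal{F}(\phi)$. Continuity of $\phi$ then upgrades the resulting almost-everywhere equality to an equality everywhere.
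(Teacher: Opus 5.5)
Your proof is correct, but it takes a different route from the paper's. The paper rescales to $r=1$ via $\varphi(x)=\phi(rx)$ and quotes Peetre's maximal inequality for band-limited functions (Grafakos, Lemma 6.5.3), $\sup_{z}|\varphi(x-z)|/(1+|z|^{d/p})\le c\,[\mathcal{M}(|\varphi|^p)(x)]^{1/p}$. This gives $\sup_{B(x,1)}|\varphi|^2\lesssim[\mathcal{M}(|\varphi|^{3/2})(x)]^{4/3}$, and the paper concludes with the Hardy--Littlewood maximal theorem on $L^{4/3}$. It must take $p<2$ there because $\mathcal{M}$ is unbounded on $L^1$. You instead dominate $\sup_{B(x,r)}|\phi|^2$ pointwise by a fixed convolution, $|\phi|^2*\bigl(r^{-d}(1+|\cdot|/r)^{-N}\bigr)$, rather than by a maximal function. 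Since that kernel is integrable with $r$-independent mass, Tonelli closes the estimate directly at $p=2$, with neither the maximal theorem nor the Peetre inequality. Your version is self-contained and gives an explicit constant $C=2^N C_N\|\Psi\|_{L^1}\int(1+|u|)^{-N}\,du$. It also justifies the claim $\phi\in C^\infty$, which the paper's proof never addresses. The paper's version is shorter by citation, and its pointwise Peetre bound holds for every $p>0$, including $p<1$, where your Cauchy--Schwarz/H\"older step would not be available (though that is not needed here).

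One minor point in Step 1. In the paper's closed-ball convention, $\operatorname{supp}\mathcal{F}(\phi)$ may fill all of $\{|\xi|\le 1/r\}$. This is exactly what happens in the application to $\mathcal{F}(\nabla_i g)=i\xi_i\mathcal{F}(f)\chi_r$. But $1-\chi_{r/2}$ vanishes exactly on that closed ball, not on a neighbourhood of it, so your stated justification that $\mathcal{F}(\phi)(1-\chi_{r/2})=0$ is borderline. Using $\chi_{r/4}$, which is identically $1$ on $|\xi|\le 2/r$, removes the issue and only changes constants.
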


\begin{proof}
    Let $\varphi(x)=\phi(rx)$. By changing variables, we have $\|\varphi\|_{L^2}^2 = r^{-d} \|\phi\|_{L^2}^2$, so it suffices to prove:
    \begin{equation*}
        \int_{\mathbb{R}^d} \sup_{y \in B(x,1)}|\varphi(y)|^2 dx \leq C \|\varphi\|_{L^2}^2.
    \end{equation*}

    By Lemma 6.5.3 of \cite{grafakos2014modern}, for any $p > 0$, there exists a constant $c = c_p$ such that 
    \begin{equation*}
        \sup_{z \in \mathbb{R}^d} \frac{|\varphi(x-z)|}{1 + |z|^{\frac{d}{p}}} \leq c \left[ \mathcal{M}(|\varphi|^p)(x) \right]^{\frac{1}{p}},
    \end{equation*}
    where $\mathcal{M}$ is the Hardy-Littlewood maximal function, and $d$ is the dimension.

    For any $y \in B(x,1)$, let $z = x - y$. Since $|z| \leq 1$, we have $1 + |z|^{\frac{d}{p}} \leq 2$. Therefore,
    \begin{equation*}
        \sup_{y \in B(x,1)} |\varphi(y)| \leq 2 \sup_{z \in \mathbb{R}^d} \frac{|\varphi(x-z)|}{1 + |z|^{\frac{d}{p}}} \leq 2c \left[ \mathcal{M}(|\varphi|^p)(x) \right]^{\frac{1}{p}}.
    \end{equation*}

    Squaring both sides and integrating over $\mathbb{R}^d$, we get:
    \begin{align*}
        \int_{\mathbb{R}^d} \sup_{y \in B(x,1)}|\varphi(y)|^2 dx &\leq C' \int_{\mathbb{R}^d} \left[ \mathcal{M}(|\varphi|^p)(x) \right]^{\frac{2}{p}} dx \\
        &= C' \|\mathcal{M}(|\varphi|^p)\|_{L^{\frac{2}{p}}}^{\frac{2}{p}}.
    \end{align*}

    By choosing $p=3/2$, we have $2/p = 4/3 > 1$. Since the Hardy-Littlewood maximal operator $\mathcal{M}$ is strongly bounded on $L^q$ for all $q > 1$, we obtain:
    \begin{align*}
        \|\mathcal{M}(|\varphi|^p)\|_{L^{\frac{2}{p}}}^{\frac{2}{p}} &\leq C'' \| |\varphi|^p \|_{L^{\frac{2}{p}}}^{\frac{2}{p}} \\
        &= C'' \int_{\mathbb{R}^d} \left(|\varphi(x)|^p\right)^{\frac{2}{p}} dx \\
        &= C'' \|\varphi\|_{L^2}^2.
    \end{align*}
    
    Combining these inequalities, we conclude the proof.
\end{proof}
Now we establish the adversarial property of the truncated function.
\begin{lemma}
    Suppose that $\|f\|_{H^s} \leq R$, and let $g = \mathcal{F}^{-1}(\mathcal{F}(f)\chi_r)$, where $\chi_r(\xi) = \chi(r\xi)$ and $\chi$ is a smooth function supported on $B(0,1)$, taking values in $[0,1]$, and identically $1$ on $B(0,1/2)$. Then there exists a constant $C > 0$ such that
    \begin{equation*}
        \int_{\mathbb{R}^d} \sup_{x'\in B(x,r)}|g(x')-g(x)|^2 dx \leq C R^2 r^{2\min(1,s)}.
    \end{equation*}
\end{lemma}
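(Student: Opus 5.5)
We aim to prove the bound $\int_{\mathbb{R}^d}\sup_{x'\in B(x,r)}|g(x')-g(x)|^2\,dx \le C R^2 r^{2\min(1,s)}$ for the truncated function $g=\mathcal{F}^{-1}(\mathcal{F}(f)\chi_r)$. The plan is a dyadic (Littlewood--Paley) decomposition of $g$ on the frequency side. On each dyadic piece we apply the band-limited maximal estimate of the preceding lemma, together with a mean-value bound for the low frequencies. Throughout we regard $f$ as its Sobolev extension to $\mathbb{R}^d$, with $\|f\|_{H^s(\mathbb{R}^d)}\lesssim R$.

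Since $\operatorname{supp}\mathcal{F}(g)\subset B(0,1/r)$, write $g=g_0+\sum_{k=1}^{K}g_k$ with $K\approx\log_2(1/r)$. Here $g_0$ has frequencies in $|\xi|\le 1$ and each $g_k$ has frequencies in the annulus $|\xi|\sim 2^k$ with $2^k\le 1/r$. The pieces are obtained by multiplying $\mathcal{F}(g)$ by a smooth partition of unity $\psi_k$. Each $g_k$ is smooth and bounded, because its Fourier transform is compactly supported and in $L^2$, hence in $L^1$. Orthogonality and the definition of the $H^s$ norm then give $\|g_k\|_{L^2}\lesssim 2^{-ks}a_k$ and $\|\nabla g_k\|_{L^2}\lesssim 2^{k(1-s)}a_k$, where $a_k:=\|(1+|\xi|^2)^{s/2}\mathcal{F}(f)\psi_k\|_{L^2}$ satisfies $\sum_k a_k^2\lesssim R^2$.

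For each piece we use two estimates and take the better one:
\[
\sup_{x'\in B(x,r)}|g_k(x')-g_k(x)|\le 2\sup_{y\in B(x,r)}|g_k(y)|,\qquad \sup_{x'\in B(x,r)}|g_k(x')-g_k(x)|\le r\sup_{y\in B(x,r)}|\nabla g_k(y)|.
\]
Both $g_k$ and each component of $\nabla g_k$ have spectrum in $B(0,2^{k+1})\subset B(0,1/r')$ for $r'\ge r$ comparable. The preceding lemma, applied with radius $r\le 2^{-k}$ (the supremum over a ball of radius $r$ is dominated by that over radius $2^{-k-1}$), therefore gives:
\begin{align*}
\Big\|\sup_{B(\cdot,r)}|g_k(\cdot')-g_k|\Big\|_{L^2}\lesssim \min\big(2^{-ks},\, r2^{k(1-s)}\big)a_k .
\end{align*}
For $g_0$ the gradient bound gives $\lesssim r\,a_0$. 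The main step is summing these over $k$ with the triangle inequality in $L^2$ and Cauchy--Schwarz:
\[
\Big(\sum_k\min(2^{-ks},r2^{k(1-s)})a_k\Big)^2\le \Big(\sum_k \min(2^{-ks},r2^{k(1-s)})^2\Big)\Big(\sum_k a_k^2\Big).
\]
We then evaluate the geometric sums, restricted to $2^k\le 1/r$ where the gradient bound $r2^{k(1-s)}$ is the smaller one:
\begin{itemize}
\item If $s<1$, the terms $r^2 2^{2k(1-s)}$ grow in $k$, so the sum is dominated by the endpoint $2^k\approx 1/r$, giving $r^2\cdot r^{-2(1-s)}=r^{2s}$.
\item If $s>1$, the sum is dominated by $k=0$, giving $r^2$.
\item If $s=1$ exactly, the sum gains a factor $\log(1/r)$.
\end{itemize}

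The critical case $s=1$ (and more generally $s\ge1$) is the main obstacle. There I would avoid decomposing altogether: apply the gradient bound directly to $g$, which gives $r\,\|\sup_{B(\cdot,r)}|\nabla g|\|_{L^2}\lesssim r\|\nabla g\|_{L^2}\lesssim rR$ by the preceding lemma. This is valid because the whole of $\mathcal{F}(\nabla g)$ is supported in $B(0,1/r)$, so $\nabla g$ is itself band-limited at scale $1/r$. It handles all $s\ge1$ uniformly and removes the logarithm. The dyadic argument is then needed only for $s<1$, where the geometric sum has no logarithmic loss. One technical point to check is that applying the lemma to $\nabla g$ requires $\nabla g\in C^1\cap L^\infty$, which follows from compact spectral support and $\mathcal{F}(g)\in L^2$.
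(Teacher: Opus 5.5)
Your argument is correct, and for $s\ge 1$ it is exactly the paper's proof. The steps are the mean value inequality on the convex ball $B(x,r)$, one application of the preceding band-limited maximal lemma to each $\partial_i g$ (whose spectrum lies in $B(0,1/r)$), and Plancherel.

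The difference is the case $s<1$, and there your Littlewood--Paley decomposition is not needed. The paper runs the same one-shot gradient argument and lets the cutoff do the work. On $\operatorname{supp}\chi_r\subset B(0,1/r)$ one has $|\xi|^2=|\xi|^{2-2s}|\xi|^{2s}\le r^{2s-2}(1+|\xi|^2)^s$, hence $r^2\|\nabla g\|_{L^2}^2\le r^{2s}\|f\|_{H^s}^2\le R^2r^{2s}$. So your claim that the dyadic argument is ``needed'' for $s<1$ is mistaken: the truncation at $|\xi|\sim 1/r$ already plays the role of your geometric sum. As you yourself observe, for $2^k\le 1/r$ the gradient bound is always the smaller one. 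The alternative bound $2\sup_{B(x,r)}|g_k|$ is therefore vestigial here; it would only matter for pieces with $2^k>1/r$, i.e.\ for an untruncated function.

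The direct route buys brevity and uniformity in $s$. It keeps orthogonality by applying Plancherel to $\nabla g$ as a whole. Your Minkowski-plus-Cauchy--Schwarz summation over pieces discards that orthogonality, which is exactly what produces the spurious $\log(1/r)$ at $s=1$. The direct argument also holds for every $r>0$, while your low-frequency estimate $r\,a_0$ tacitly assumes $r\le 1$. For $r>1$ and $s<1$ it gives $r^2R^2$ instead of $r^{2s}R^2$; this is trivially repaired and irrelevant in the paper's regime $r\le 1$.
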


\begin{proof}
    For any $x' \in B(x,r)$, by the Mean Value Theorem, we have $|g(x') - g(x)| \leq |x' - x| \sup_{y \in B(x,r)} |\nabla g(y)| \leq r \sup_{y \in B(x,r)} |\nabla g(y)|$. Squaring both sides and integrating over $\mathbb{R}^d$, we get
    \begin{equation*}
        \int_{\mathbb{R}^d} \sup_{x'\in B(x,r)}|g(x')-g(x)|^2 dx \leq \int_{\mathbb{R}^d} r^2 \sup_{y\in B(x,r)}|\nabla g(y)|^2 dx \leq r^2 \sum_{i=1}^d \int_{\mathbb{R}^d} \sup_{y\in B(x,r)}|\nabla_i g(y)|^2 dx.
    \end{equation*}

    Notice that $\mathcal{F}(\nabla_i g)(\xi) = i\xi_i\mathcal{F}(f)(\xi)\chi_r(\xi)$. Since $\operatorname{supp}(\chi_r) \subset B\left(0,\frac{1}{r}\right)$, the function $\mathcal{F}(\nabla_i g)$ is also supported on $B\left(0,\frac{1}{r}\right)$. Thus, by Lemma A.11, there exists a constant $C_1$ such that
    \begin{equation*}
        r^2 \sum_{i=1}^d \int_{\mathbb{R}^d} \sup_{y\in B(x,r)}|\nabla_i g(y)|^2 dx \leq C_1 r^2 \sum_{i=1}^d \|\nabla_i g\|_{L^2}^2 = C_1 r^2 \|\nabla g\|_{L^2}^2.
    \end{equation*}

    By Plancherel's theorem, we can rewrite the $L^2$ norm of the gradient in the frequency domain (absorbing $(2\pi)^{-d}$ into a new constant $C_2$):
    \begin{equation*}
        C_1 r^2 \|\nabla g\|_{L^2}^2 = C_2 r^2 \int_{\R^d} \sum_{i=1}^d |\xi_i|^2 |\mathcal{F}(g)(\xi)|^2 d\xi \leq C_2 r^2 \int_{B\left(0,\frac{1}{r}\right)} |\xi|^2 |\mathcal{F}(f)(\xi)|^2 d\xi.
    \end{equation*}

    Now, we bound the integral based on the value of $s$, using the condition $\|f\|_{H^s}^2 = \int_{\mathbb{R}^d} (1+|\xi|^2)^s |\mathcal{F}(f)(\xi)|^2 d\xi \leq R^2$:

    \textbf{Case 1: $s \geq 1$.}
    
    Since $|\xi|^2 \leq (1+|\xi|^2)^1 \leq (1+|\xi|^2)^s$, we have:
    \begin{equation*}
        \int_{B\left(0,\frac{1}{r}\right)} |\xi|^2 |\mathcal{F}(f)(\xi)|^2 d\xi \leq \int_{B\left(0,\frac{1}{r}\right)} (1+|\xi|^2)^s |\mathcal{F}(f)(\xi)|^2 d\xi \leq R^2.
    \end{equation*}
    Multiplying by $C_2 r^2$, the bound is $C_2 R^2 r^2$.

    \textbf{Case 2: $0 \leq s < 1$.}
    
    We rewrite the integral to extract the Sobolev weight:
    \begin{equation*}
        \int_{B\left(0,\frac{1}{r}\right)} |\xi|^2 |\mathcal{F}(f)(\xi)|^2 d\xi = \int_{B\left(0,\frac{1}{r}\right)} \frac{|\xi|^2}{(1+|\xi|^2)^s} (1+|\xi|^2)^s |\mathcal{F}(f)(\xi)|^2 d\xi.
    \end{equation*}
    Since the function $t \mapsto \frac{t}{(1+t)^s}$ is strictly increasing for $t>0$, its supremum on the domain $|\xi|^2 < \frac{1}{r^2}$ is achieved exactly at the boundary:
    \begin{equation*}
        \sup_{|\xi|<\frac{1}{r}} \frac{|\xi|^2}{(1+|\xi|^2)^s} \leq \sup_{|\xi|<\frac{1}{r}} \frac{|\xi|^2}{(|\xi|^2)^s} = \sup_{|\xi|<\frac{1}{r}} |\xi|^{2-2s} = r^{2s-2}.
    \end{equation*}
    Therefore:
    \begin{align*}
        \int_{B\left(0,\frac{1}{r}\right)} |\xi|^2 |\mathcal{F}(f)(\xi)|^2 d\xi &\leq r^{2s-2} \int_{B\left(0,\frac{1}{r}\right)} (1+|\xi|^2)^s |\mathcal{F}(f)(\xi)|^2 d\xi \\
        &\leq R^2 r^{2s-2}.
    \end{align*}
    Multiplying by $C_2 r^2$ from outside the integral, the total bound becomes $C_2 r^2 \cdot R^2 r^{2s-2} = C_2 R^2 r^{2s}$.

    Combining both cases and letting $C = C_2$, we conclude that the integral is bounded by $C R^2 r^{2\min(1,s)}$.
\end{proof}
Now we can finally prove the result for $s < d/2$. 

First, we bound $\|\hat{f} - \bar{f}\|_{L^2}$:
\begin{align*}
    \|\hat{f} - \bar{f}\|_{L^2}^2 &= \int |\mathcal{F}(\bar{f})(1-\chi_r)(\xi)|^2 d\xi \\
    &\leq \int_{B(0,1/(2r))^c} \frac{1}{(1+|\xi|^2)^s}(1+|\xi|^2)^s|\mathcal{F}(\bar{f})(\xi)|^2 d\xi \\
    &\leq r^{2s}\|T\|^2_{H^s} \leq R^2 r^{2s}.
\end{align*}

Since $\|\bar{f}\|_{H^s} \leq R$, it follows that $\mathbb{E}\|\bar{f} - f^*\|^2_{L^2} \leq \mathbb{E}\|\bar{f} - f^*\|^2_{L^2} + \mathbb{E}\|T - \bar{f}\|^2_{L^2} \leq r^{2\min(1,s)} + n^{-\frac{2s}{2s+d}}$. Therefore, we have:
\begin{align*}
    R_A(\hat{f},f^*) &\lesssim \int_{\mathbb{R}^d} \sup_{x'\in B(x,r)}|\hat{f}(x')-\hat{f}(x)|^2 dx + \mathbb{E}\|\hat{f}-\bar{f}\|^2_{L^2} +  \mathbb{E}\|f^*-\bar{f}\|^2_{L^2} \\
    &\lesssim r^{2\min(1,s)} + n^{-\frac{2s}{2s+d}}.
\end{align*}

Here, Lemma A.12 can be applied to $\hat{f}$ because $\bar{f}\in L^2$ and $\mathcal{F}^{-1}(\chi_r)$ is in the Schwartz class. Since $\chi_r$ is in the Schwartz class, the convolution of a function in $L^2$ and a function in the Schwartz class gives a function in $L^\infty(\R^d) \cap C^\infty(\R^d)$. Furthermore, the Sobolev norm of $\hat{f}$ is bounded as follows:
\begin{align*}
    \|\hat{f}\|_{H^s}^2 &= \int_{\mathbb{R}^d}(1+|\xi|^2)^s |\mathcal{F}(\bar{f})(\xi)|^2\chi^2_r(\xi) d\xi \leq \|\bar{f}\|^2_{H^s}.
\end{align*}
\subsubsection{Proof of the Adaptive Upper Bound}
We summarize the bound from \cite{zhang2024optimality} as follows: 
If $v \leq n^a$ for some $a < \frac{2s_{\max}}{d}$, then for a given $0 < s < s_{max}$, there exist constants $M$ and $C = C_{s_{max}}$ such that for any $n>M$, with probability at least $1 - \delta$, we have:
\begin{align*}
    \|\hat{f}_v - f^*\|^2_{H^s} &\leq C\left(\log\frac{6}{\delta}\right)^2 \left(1 + \frac{v^{\frac{2s+d}{2s_{\max}}}}{n}\right), \\
    \|\hat{f}_v - f^*\|^2_{L^2} &\leq C\left(\log\frac{6}{\delta}\right)^2 \left(v^{-\frac{s}{s_{\max}}} + \frac{v^{\frac{d}{2s_{\max}}}}{n}\right).
\end{align*}

This result implies that, for any given $s \in [s_{\min}, s_{\max}]$, with probability at least $1 - T\log(n)\delta$ (where $T = T_{s_{\max}}$ is a constant), the following inequalities hold for all $v_k$ (let $A$ denote this event):
\begin{align*}
    \|\hat{f}_{v_k} - f^*\|^2_{H^s} &\leq C\left(\log\frac{6}{\delta}\right)^2 \left(1 + \frac{v_k^{\frac{2s+d}{2s_{\max}}}}{n}\right), \\
    \|\hat{f}_{v_k} - f^*\|^2_{L^2} &\leq C\left(\log\frac{6}{\delta}\right)^2 \left(v_k^{-\frac{s}{s_{\max}}} + \frac{v_k^{\frac{d}{2s_{\max}}}}{n}\right).
\end{align*}
Setting $\delta = n^{-3s_{\max}}$, we first bound the expectation over the complement event $A^c$:
\begin{align*}
    \mathbb{E}_{A^c} \left[ \int_{[0,1]^d} \sup_{x' \in B(x,r)} |\hat{f}(x') - f(x)|^2 \, \mathrm{d}x \right] \leq (\log n) \mathbb{P}(A^c) \lesssim n^{-3s_{\max}} \log^2 n.
\end{align*}

Let $v_{\mathrm{op}} = \mathop{\arg\min}_{v_k:\,1 \leq k \leq K} |v_k - n^{\frac{2s_{\max}}{2s+d}}|$. Now we prove that under event $A$, $v_{\mathrm{op}}$ satisfies the selection condition. An easy calculation of the trade-off between the two terms shows that $\|f_{v_k}-f^*\|_{L^2} \leq C\log\left(\frac{6}{\delta}\right) (\approx \log n) \frac{v_k^{\frac{d}{4s_{\max}}}}{\sqrt{n}}$ for all $v_k \geq v_{\mathrm{op}}$. Thus, the triangle inequality shows that $\|f_{v_k}-f_{v_{\mathrm{op}}}\|_{L^2} \leq \|f_{v_k}-f^*\| + \|f^*-f_{v_{\mathrm{op}}}\| \lesssim \log(n) \left(\frac{v_k^{\frac{d}{4s_{\max}}}}{\sqrt{n}} + \frac{v_{\mathrm{op}}^{\frac{d}{4s_{\max}}}}{\sqrt{n}}\right) \leq \log(n)\left(\frac{v_k^{\frac{d}{4s_{\max}}}}{\sqrt{n}}\right)$ for all $v_k > v_{\mathrm{op}}$, which satisfies the condition.

Thus, conditional on event $A$, the selected index $v^*$ must satisfy $v^* \leq v_{\mathrm{op}}$. This yields:
\begin{align*}
    \|\hat{f}_{v_{\mathrm{op}}} - \hat{f}_{v^*}\|_{L^2} &\leq \log^2(n)\frac{v_{\mathrm{op}}^{\frac{d}{4s_{\max}}}}{\sqrt{n}}, \\
    \|\hat{f}_{v^*} - f^*\|_{H^s} &\leq \log^2(n).
\end{align*}

According to Lemma A.12, under event $A$, since $|\hat{f}_{v^*}| = O_n(\log^3(n))$, we have $\int \sup_{x' \in B(x,r)}|\hat{f}(x') - \hat{f}(x)|^2 \, \mathrm{d}x \lesssim \log^6(n)r^{\min(1,s)}$ and $\|\hat{f} - \hat{f}_{v^*}\|_{L^2} \lesssim \log^2(n)n^{\frac{-s}{2s+d}}$.

The final minimax rate can be split over $A^c$ and $A$. When event $A$ occurs, $\int \sup_{x' \in B(x,r)}|\hat{f}(x') - f(x)|^2 \leq \log^4(n)\left(r^{2\min(1,s)} + n^{\frac{-2s}{2s+d}}\right)$. When the event does not occur, with probability $T\log(n)n^{-3s_{\max}}$, since the loss is bounded by $\log(n)$ pointwise, the final risk can be bounded by $T\log^2(n)n^{-3s_{\max}}$. Combining these intermediate results, we obtain the final minimax rate.
\subsection{Lower Bound}
We adapt the notation in \cite{peng2024adversarial} and define:
\begin{align*}
    G_{A}(f) \triangleq \frac{1}{2} \left\{ \int_{[0,1]^d} \left[ \sup_{x' \in B(x,r)} f(x') - \inf_{x' \in B(x,r)} f(x') \right]^{2} \, dx \right\}^{\frac{1}{2}}.
\end{align*}
To derive the lower bound, we construct specific base functions depending on the smoothness level $s$.

\paragraph{Case 1: $0 < s < 1$.}
We construct the base regression function $f_0$ using a periodic perturbation. First, define an auxiliary function $\phi_0$ on $[0,1]$ as:
\begin{align*}
    \phi_0(x) \triangleq \begin{cases}
        0 & 0 \le x < \frac{1}{4}, \\
        (x - \frac{1}{4})^s & \frac{1}{4} \le x < \frac{1}{2}, \\
        (1/4)^s & \frac{1}{2} \le x < \frac{3}{4}, \\
        (1 - x)^s & \frac{3}{4} \le x \le 1
    \end{cases}.
\end{align*}
Define $a_k \triangleq 8kr$ for $k = 0, \dots, K$, where $K \triangleq \lfloor 1/(8r) \rfloor > 1$. The base regression function $f_0 : [0, 1]^d \to \mathbb{R}$ is defined by:
\begin{align*}
    f_0(x) = \begin{cases}
        B_s (8r)^s \phi_0 \left( \frac{x_1 - a_{k-1}}{8r} \right) & a_{k-1} \le x_1 < a_k, \quad \text{for } k = 1, \dots, K, \\
        0 & a_K \le x_1 \le 1
    \end{cases}. \tag{4.5}
\end{align*}
\paragraph{Case 2: $s > 1$.}
In this regime, we utilize a smooth bump function. Define $\psi_0$ as:
\begin{align*}
    \psi_0(x) \triangleq \exp\left(-\frac{1}{1-(x-1)^2}\right) \mathbb{I}_{\{0 \le x \le 2\}}.
\end{align*}
We define the base function $g_0$ (distinguished from $f_0$ for clarity) as:
\begin{align*}
    g_0(x) = \begin{cases}
        0 & 0 \le x < 2r, \\
        B_s \psi_0\left(\frac{x - 2r}{1 - 4r}\right) & 2r \le x < 1 - 2r, \\
        0 & 1 - 2r \le x \le 1
    \end{cases}.
\end{align*}
\begin{theorem}
We can choose $B_s$ sufficiently small such that $\|f_0\|_{H^s}\leq 1$ and $\|g_0\|_{H^s}\leq 1$ for every $r>0$.
\end{theorem}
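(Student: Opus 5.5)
The plan is to estimate both norms through explicit extensions to $\mathbb{R}^d$. Since $\|\cdot\|_{H^s([0,1]^d)}$ is a quotient norm, any extension gives an upper bound. Both $f_0$ and $g_0$ depend only on $x_1$. Fix once and for all $\eta\in C_c^\infty(\mathbb{R}^{d-1})$ with $\eta\equiv1$ on $[0,1]^{d-1}$, and extend $h(x_1)$ (where $h=f_0$ or $g_0$, extended by zero outside $[0,1]$) to $H(x)=h(x_1)\eta(x_2,\dots,x_d)$. The Fourier transform factorizes, and $(1+|\xi|^2)^s\le C_s(1+\xi_1^2)^s(1+|\xi'|^2)^s$. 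Hence $\|H\|_{H^s(\mathbb{R}^d)}\le C_{s,d}\|h\|_{H^s(\mathbb{R})}\|\eta\|_{H^s(\mathbb{R}^{d-1})}$. It therefore suffices to bound $\|h\|_{H^s(\mathbb{R})}$ by a constant independent of $r$ and then choose $B_s$ small. Throughout I use $r\le 1/16$, which is forced by $K=\lfloor 1/(8r)\rfloor>1$.

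\textbf{Case $s>1$ ($g_0$).} Here $g_0(x)=B_s\psi_0((x-2r)/(1-4r))$, and $\psi_0$ is a $C^\infty$ bump that is flat to all orders at the endpoints of its support, so the zero extension is smooth. The dilation factor satisfies $(1-4r)^{-1}\le 2$. For integer $m\ge s$, each derivative $\partial^k g_0$ with $k\le m$ is bounded in $L^2$ by $B_s 2^{k}\|\psi_0^{(k)}\|_{L^2}\cdot O(1)$. This gives $\|g_0\|_{H^s}\le\|g_0\|_{H^m}\le B_s C_m$ uniformly in $r$, and we take $B_s\le 1/C_m$. This step is routine.

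\textbf{Case $0<s<1$ ($f_0$).} I would use the equivalent Gagliardo norm
\[
\|h\|_{L^2}^2+\iint\frac{|h(x)-h(y)|^2}{|x-y|^{1+2s}}\,dx\,dy .
\]
The $L^2$ part is at most $B_s^2(8r)^{2s}\|\phi_0\|_\infty^2\le B_s^2$. For the seminorm, write $f_0=\sum_k\beta_k$, where $\beta_k(x)=B_s(8r)^s\phi_0((x-a_{k-1})/(8r))$ has support in an interval $I_k$ of length $8r$. Split the double integral into near pairs ($x,y$ in the same or adjacent cells) and far pairs.

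For a single cell, the substitution $x=a_{k-1}+8ru$ gives
\[
[\beta_k]^2=B_s^2(8r)^{2s}(8r)^{2}(8r)^{-1-2s}[\phi_0]^2=8r\,B_s^2[\phi_0]^2 .
\]
Summing over $K\approx 1/(8r)$ cells, and treating adjacent-cell and boundary interactions the same way, gives $O(B_s^2[\tilde\phi]^2)$. Here $\tilde\phi$ is $\phi_0$ glued to its neighbour.

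For cells $j\ge2$ apart, use $|x-y|\gtrsim 8rj$ and $|h|\le B_s(8r)^s$. Each pair contributes at most $B_s^2(8r)^{2s}(8r)^2(8rj)^{-1-2s}=B_s^2\,8r\,j^{-1-2s}$. Summing over $K$ cells and over $j$ gives $B_s^2\sum_j j^{-1-2s}<\infty$. The contribution of $x\in[0,1]$, $y\notin[0,1]$ is handled identically. So $\|f_0\|_{H^s}\le B_sC$ uniformly in $r$, provided $[\phi_0]_{H^s}<\infty$.

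\textbf{Main obstacle.} The step I expect to fail as written is the finiteness of $[\phi_0]_{H^s}$. The profile $(x-1/4)_+^s$ is $C^{0,s}$ but lies only in $B^s_{2,\infty}$, not in $H^s$: its Gagliardo integral diverges logarithmically near $x=1/4$, and likewise near $x=1$. I would first attempt to verify this integral directly. If it indeed diverges, I would replace $\phi_0$ by a Lipschitz (or smooth) profile with the same plateau structure, for example piecewise linear ramps. Such a profile lies in $H^1\subset H^s$, and the scaling computation above is unchanged. The amplitude $B_s(8r)^s$, and hence the lower bound $G_A(f_0)\gtrsim r^s$ needed later, is preserved. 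The $\phi_0$ used should be adjusted accordingly.
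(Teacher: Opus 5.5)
\textbf{The gap: the ``main obstacle'' is a false alarm.} Your last paragraph claims that $(x-\tfrac14)_+^s$ is not in $H^s$ and that its Gagliardo integral diverges logarithmically. That claim is false. As a result, your proposal leaves the one nontrivial input, $[\phi_0]_{H^s}<\infty$, unproved. It then proposes replacing $\phi_0$, which would prove a statement about a different $f_0$, not the theorem as stated.

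A one-variable singularity of homogeneity $s$ is half a derivative better in $L^2$ than its H\"older exponent suggests. A cut-off $x_+^s$ has Fourier transform decaying like $|\xi|^{-1-s}$, so it lies in $H^{s'}$ for every $s'<s+\tfrac12$. The logarithmic divergence you have in mind belongs to homogeneity $s-\tfrac12$, not $s$.

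The direct check you say you would attempt does succeed. The cross term ($x>0>y$) is at most $\delta/(2s)$; this is the only piece the paper checks. For the same-side term, substitute $y=tx$:
\[
\int_0^\delta\!\!\int_0^\delta \frac{|x^s-y^s|^2}{|x-y|^{1+2s}}\,dy\,dx = 2\delta\int_0^1 \frac{(1-t^s)^2}{(1-t)^{1+2s}}\,dt<\infty .
\]
This is finite because the integrand behaves like $s^2(1-t)^{1-2s}$ near $t=1$ and $s<1$. The same computation applies at $u=1$ and to your glued profile $\tilde\phi$, and $\phi_0$ is Lipschitz elsewhere.

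With this computation inserted, your argument for $f_0$ is complete and follows the paper's scaling strategy. It is tighter than the paper's in two places:
\begin{itemize}
\item The paper's off-diagonal bound $|x-y|\ge 8r(|k-j|-1)$ is vacuous for adjacent cells; your gluing handles them.
\item Your product extension, with $(1+|\xi|^2)^s\le(1+\xi_1^2)^s(1+|\xi'|^2)^s$, gives a rigorous reduction to one variable for the quotient norm. The paper only says the transverse integration is absorbed into a constant.
\end{itemize}

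\textbf{A separate problem with $g_0$, shared with the paper.} Your claim that the zero extension of $g_0$ is smooth does not hold for $g_0$ as literally defined, and the paper's proof makes the same slip. $\psi_0$ is supported on $[0,2]$ and peaks at $\psi_0(1)=e^{-1}$. As $x$ runs over $[2r,1-2r)$, the argument $(x-2r)/(1-4r)$ only sweeps $[0,1)$. So $g_0$ jumps from $B_s e^{-1}$ to $0$ at $x_1=1-2r$. A jump in $x_1$ rules out membership in $H^s$ for every $s\ge\tfrac12$, so for $s>1$ no choice of $B_s$ gives $\|g_0\|_{H^s}\le 1$.

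The evident intended definition is $B_s\psi_0\bigl(2(x-2r)/(1-4r)\bigr)$, or a bump supported on $[0,1]$. With that correction the dilation factor stays bounded (by $8/3$ for $r\le 1/16$). Your derivative-counting argument, which is the same as the paper's, then goes through.
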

\begin{proof}
    We establish the uniform boundedness of the Sobolev norms for both constructions. By definition, the fractional Sobolev norm is given by $\|f\|_{H^s}^2 = \|f\|_{L^2}^2 + |f|_{H^s}^2$, where the Slobodeckij seminorm is $|f|_{H^s}^2 = \iint \frac{|f(x)-f(y)|^2}{|x-y|^{d+2s}} \mathrm{d}x \mathrm{d}y$. Since our functions depend only on the first coordinate, we can restrict our analysis to the 1D norm on $\mathbb{R}$, absorbing the transverse integration into a universal constant.

    \paragraph{Case 1: $0 < s < 1$.}
    First, we verify that the base function $\phi_0 \in H^s([0,1])$. Note that $\phi_0$ is continuous, bounded, and uniformly Lipschitz except at the points $x = 1/4$ and $x = 1$, where it exhibits a singularity of the form $x^s$. The local integrability of the seminorm around $x = 1/4$ is guaranteed since for a small $\delta > 0$,
    \begin{align*}
        \int_0^\delta \int_{-\delta}^0 \frac{x^{2s}}{|x-y|^{1+2s}} \mathrm{d}y \mathrm{d}x 
        = \int_0^\delta x^{2s} \left[ \frac{1}{2s(x-y)^{2s}} \right]_{y=-\delta}^{y=0} \mathrm{d}x 
        \le \int_0^\delta \frac{x^{2s}}{2s x^{2s}} \mathrm{d}x 
        = \frac{\delta}{2s} < \infty.
    \end{align*}
    Thus, $\|\phi_0\|_{H^s} \triangleq C_\phi < \infty$. 

    Next, we bound $\|f_0\|_{L^2}^2$. The function $f_0$ consists of $K$ disjoint copies of $\phi_0$ supported on intervals $I_k = [a_{k-1}, a_k]$ of length $8r$. We have
    \begin{align*}
        \|f_0\|_{L^2}^2 
        &= \sum_{k=1}^K \int_{I_k} \left( B_s (8r)^s \phi_0\left( \frac{x - a_{k-1}}{8r} \right) \right)^2 \mathrm{d}x \\
        &= K \cdot B_s^2 (8r)^{2s} \cdot (8r) \|\phi_0\|_{L^2}^2.
    \end{align*}
    Since $K = \lfloor 1/(8r) \rfloor \le 1/(8r)$, we have $K(8r) \le 1$. Because $8r \le 1$ and $s > 0$, $(8r)^{2s} \le 1$. Thus, $\|f_0\|_{L^2}^2 \le B_s^2 \|\phi_0\|_{L^2}^2$.

    For the seminorm $|f_0|_{H^s}^2$, we decompose the integration domain into diagonal and off-diagonal parts:
    \begin{equation*}
        |f_0|_{H^s}^2 = \sum_{k=1}^K \int_{I_k} \int_{I_k} \frac{|f_0(x)-f_0(y)|^2}{|x-y|^{1+2s}} \mathrm{d}x \mathrm{d}y + \sum_{k \neq j} \int_{I_k} \int_{I_j} \frac{|f_0(x)-f_0(y)|^2}{|x-y|^{1+2s}} \mathrm{d}x \mathrm{d}y.
    \end{equation*}
    For the diagonal terms, changing variables $u = \frac{x-a_{k-1}}{8r}$ and $v = \frac{y-a_{k-1}}{8r}$ yields:
    \begin{align*}
        \int_{I_k} \int_{I_k} \frac{|f_0(x)-f_0(y)|^2}{|x-y|^{1+2s}} \mathrm{d}x \mathrm{d}y 
        &= \int_0^1 \int_0^1 \frac{B_s^2 (8r)^{2s} |\phi_0(u)-\phi_0(v)|^2}{(8r)^{1+2s} |u-v|^{1+2s}} (8r)^2 \mathrm{d}u \mathrm{d}v \\
        &= B_s^2 (8r) |\phi_0|_{H^s}^2.
    \end{align*}
    Summing over $k=1, \dots, K$ gives strictly $K(8r) B_s^2 |\phi_0|_{H^s}^2 \le B_s^2 |\phi_0|_{H^s}^2$.

    For the off-diagonal terms, let $x \in I_k$ and $y \in I_j$ with $k \neq j$. The distance is bounded below by $|x-y| \ge 8r(|k-j|-1)$. Using $|f_0(x) - f_0(y)|^2 \le 2(f_0(x)^2 + f_0(y)^2)$ and $\|f_0\|_{L^\infty} \le B_s (8r)^s \|\phi_0\|_{L^\infty}$, we obtain:
    \begin{align*}
        \int_{I_k} \int_{I_j} \frac{|f_0(x)-f_0(y)|^2}{|x-y|^{1+2s}} \mathrm{d}x \mathrm{d}y 
        &\le \frac{2 \cdot (8r)^2 \cdot 2 B_s^2 (8r)^{2s} \|\phi_0\|_{L^\infty}^2}{(8r(|k-j|-1))^{1+2s}} \\
        &= 4 B_s^2 \|\phi_0\|_{L^\infty}^2 (8r) \frac{1}{(|k-j|-1)^{1+2s}}.
    \end{align*}
    Summing over $j \neq k$ introduces the series $\sum_{m=1}^\infty m^{-(1+2s)}$, which converges to a constant $C_s < \infty$ since $1+2s > 1$. Summing over $k$ yields $K$ identical bounds. Since $K(8r) \le 1$, the total off-diagonal sum is bounded by $4 C_s B_s^2 \|\phi_0\|_{L^\infty}^2$. Combining these, $\|f_0\|_{H^s}^2 \le C_1 B_s^2$ for some universal constant $C_1 > 0$ independent of $r$. Choosing $B_s \le 1/\sqrt{C_1}$ guarantees $\|f_0\|_{H^s} \le 1$.

    \paragraph{Case 2: $s > 1$.}
    The function $g_0$ is defined using a smooth bump function $\psi_0 \in C_c^\infty([0,2])$, specifically $g_0(x) = B_s \psi_0\left(\frac{x - 2r}{1 - 4r}\right)$. 
    Since $K = \lfloor 1/(8r) \rfloor > 1$, it is guaranteed that $8r < 1/2$, meaning $r \in (0, 1/16)$. Thus, the scaling factor $\lambda = 1 - 4r$ satisfies $\lambda \in (3/4, 1)$.
    
    Because $\psi_0$ is smooth and compactly supported, $\psi_0 \in H^s(\mathbb{R})$. The $H^s$ norm of the affinely scaled function $h(x) = \psi_0(x/\lambda)$ depends continuously on $\lambda$. Since $\lambda$ is strictly bounded away from zero and infinity (contained in the compact interval $[3/4, 1]$), there exists a uniform constant $C_2$ depending only on $s$ and $\psi_0$ such that:
    \begin{equation*}
        \left\| \psi_0\left(\frac{\cdot - 2r}{1 - 4r}\right) \right\|_{H^s(\mathbb{R})} \le C_2 \|\psi_0\|_{H^s(\mathbb{R})}.
    \end{equation*}
    Consequently, $\|g_0\|_{H^s} \le B_s C_2 \|\psi_0\|_{H^s}$. Choosing $B_s \le 1 / (C_2 \|\psi_0\|_{H^s})$ uniformly guarantees $\|g_0\|_{H^s} \le 1$ for all valid $r > 0$.
\end{proof}
The following lemma in \cite{peng2024adversarial} establishes a lower bound on the adversarial functional $G_A(\cdot)$ for any function approximating these base functions.

\begin{lemma} \label{lem:lower_bound_G}
    Consider an adversarial attack $A \in \mathcal{T}(r)$ with $0 \le r < 1/8$ and principal direction $v=(1, 0, \dots, 0)^\top$. Let $h_0$ denote either $f_0$ (if $0 < s \le 1$) or $g_0$ (if $s > 1$). 
    
    There exist constants $C_1, C_2 > 0$ such that for any estimator $\hat{f}$ satisfying
    \begin{align*}
        \|\hat{f} - h_0\|_2 \le C_1 r^{1 \wedge s},
    \end{align*}
    the functional $G_{A}(\hat{f})$ is lower bounded by:
    \begin{align*}
        G_{A}(\hat{f}) \ge C_2 r^{1 \wedge s}.
    \end{align*}
\end{lemma}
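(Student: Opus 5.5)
The plan is to show that $h_0$ has a uniform oscillation of order $r^{1\wedge s}$ on a set of positive measure, and that any $\hat f$ close to $h_0$ in $L^2$ inherits this oscillation in the sense of $G_A$. Because everything depends only on $x_1$, we work on $[0,1]$ and integrate out the transverse variables at the cost of constants. For the neighbourhoods we use segments along $v=e_1$: for $x$ with $x_1\in[r,1-r]$, the set $B(x,r)\cap[0,1]^d$ contains $\{x+te_1: |t|\le r\}$. Hence $\sup_{B(x,r)}\hat f-\inf_{B(x,r)}\hat f \ge |\hat f(x+te_1)-\hat f(x+t'e_1)|$ for any such $t,t'$.

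\textbf{Step 1 (oscillation of $h_0$).} In Case 1 ($0<s<1$), each cell $I_k$ of length $8r$ contains a plateau $[a_{k-1}+4r,a_{k-1}+6r)$ where $f_0=B_s(8r)^s4^{-s}=:\eta$. It also contains a zero region $[a_{k-1},a_{k-1}+2r)$, and the function vanishes at the right endpoint. So for $x_1$ in a set $E$ of measure $\gtrsim 1$ (e.g. $x_1\in[a_{k-1}+2r,a_{k-1}+4r]$ for every $k$, a fraction $1/4$ of the cube), the window $[x_1-r',x_1+r']$ with $r'=r$ can be adjusted to contain a subinterval $U_x$ of length $\asymp r$ where $h_0\le \eta/4$ and another $V_x$ of length $\asymp r$ where $h_0\ge 3\eta/4$. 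To guarantee both fit inside a single window of radius $r$, I would shift the choice of $E$ suitably, using the explicit profile of $\phi_0$ near $x=1/4$ and $x=1/2$; this only changes constants. In Case 2 ($s>1$), $g_0$ is a fixed smooth bump with $g_0'\ne0$ on some interval $J$ of length $\asymp1$ and $|g_0'|\ge c$ there. Thus for $x_1\in J$, $g_0$ differs by at least $cr$ between $U_x=[x_1-r,x_1-r/2]$ and $V_x=[x_1+r/2,x_1+r]$.

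\textbf{Step 2 (transfer to $\hat f$).} Put $e=\hat f-h_0$. For $x\in E$, if $\sup_B\hat f-\inf_B\hat f<\eta/4$, then either $|e|\ge \eta/8$ on all of $U_x$ or $|e|\ge\eta/8$ on all of $V_x$ (after averaging over the transverse slab of the ball, which has measure $\asymp r^{d-1}$). We work with averages rather than pointwise values:
\[
\sup\hat f-\inf\hat f\ \ge\ \frac{1}{|V_x|}\int_{V_x}\hat f-\frac{1}{|U_x|}\int_{U_x}\hat f\ \ge\ \frac{\eta}{2}-\frac{1}{|U_x|}\int_{U_x}|e|-\frac{1}{|V_x|}\int_{V_x}|e| .
\]
Let $Me(x)$ denote the local average $r^{-d}\int_{B(x,2r)}|e|$. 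By Cauchy–Schwarz and Fubini, $\int_E (Me)^2\,dx\lesssim\|e\|_2^2\le C_1^2r^{2(1\wedge s)}$. By Chebyshev, the set where $Me>\eta/8$ has measure $\lesssim C_1^2r^{2(1\wedge s)}/\eta^2$, which is small since $\eta\asymp r^{1\wedge s}$ once $C_1$ is chosen small relative to $B_s$. On the remaining part of $E$, which has measure $\gtrsim1$, the oscillation is $\ge\eta/4$.

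\textbf{Step 3.} Integrating the squared oscillation over this set gives $G_A(\hat f)^2\gtrsim\eta^2\asymp r^{2(1\wedge s)}$, which yields $C_2$. The main obstacle is Step 1 in Case 1: the geometry of fitting both a low set and a high set of length $\asymp r$ inside one window of radius $r$ for a positive-fraction set of $x$, given cells of width $8r$. If the fit fails, I would instead use sub-windows of length $r/4$ near the steep part of $(x-\tfrac14)^s$, where the variation over length $r$ is still $\asymp r^s$ after scaling.
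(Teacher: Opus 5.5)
Your proposal is correct and follows a different route from the paper, because the paper gives no proof of this lemma: it imports it from \cite{peng2024adversarial}. There it is proved for a general attack class $\mathcal{T}(r)$ with a principal direction $v$, and this paper only uses the fact that $f_0$ and $g_0$ have the same profiles as in that reference. Your argument is self-contained. It isolates the one property of $h_0$ that matters: inside each window of half-length $r$ along $e_1$, $h_0$ has a low piece and a high piece, each of length $\asymp r$, separated by a gap $\asymp r^{1\wedge s}$. You then replace pointwise values of $\hat f$ by local averages, which is legitimate because $\sup \ge \text{average} \ge \inf$. Finally, you control the averaged error by an $L^2$ bound on a local average of $|e|$ plus Chebyshev, choosing $C_1$ small relative to $B_s$. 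What your route buys is transparency: it shows the lemma depends only on this window structure and not on any Sobolev or H\"older information. What the citation buys is brevity and the generality of $\mathcal{T}(r)$.

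Two concrete points are worth fixing.

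\textbf{Choice of $E$ in Case 1.} As written, your set $E=[a_{k-1}+2r,a_{k-1}+4r]$ with thresholds $\eta/4$ and $3\eta/4$ fails near $a_{k-1}+4r$. There the window $[a_{k-1}+3r,a_{k-1}+5r]$ only sees values in $[2^{-s}\eta,\eta]$, so there is no set where $h_0\le\eta/4$. A choice that works for every $s\in(0,1]$, with no fallback needed, is:
\begin{itemize}
\item take $x_1-a_{k-1}\in[2r,\tfrac52 r]$;
\item take the low set $[a_{k-1}+\tfrac32 r,a_{k-1}+2r]$, where $f_0=0$;
\item take the high set $[a_{k-1}+\tfrac52 r,a_{k-1}+3r]$, where $f_0\ge B_s(r/2)^s$.
\end{itemize}
Both sets lie in $[x_1-r,x_1+r]$, and the admissible $x$ form a set of measure at least $Kr/2\ge 1/32$.

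\textbf{Averaging along the segment.} Instead of averaging over transverse slabs, average only along the segment $x+te_1$. Slab averaging forces you to shrink $U_x,V_x$ away from the window edges and to keep $x$ away from the lateral faces of the cube. With
\[
m(x)=r^{-1}\int_{-r}^{r}|e(x+te_1)|\,dt,
\]
Cauchy--Schwarz and Fubini give $\int m^2\le 4\|e\|_2^2$ directly. The oscillation is then at least $B_s(r/2)^s-2m(x)$, and the rest of your Step~2 goes through unchanged. This version is also what you would need if the attack sets in $\mathcal{T}(r)$ are only assumed to contain a segment in the direction $v$ rather than a full ball. Case~2 works as you describe, with the same one-dimensional averaging.
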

\begin{proof}[Proof of Theorem 3.1]
    Since $R_A(\hat{f},f)\geq G_A(\hat{f})^2$ and $R_A(\hat{f},f)\geq |f-\hat{f}|_{L^2}^2$, we establish the lower bound $r^{2(1\wedge s)}$, and the standard term comes from the fact that $R_A(\hat{f},f)\geq \E\|f-\hat{f}\|_{L^2}^2$.
\end{proof}
\subsection{Proof of the Impossibility of Adaptivity to $r$}
We establish this via proof by contradiction. Suppose there exists an adaptive estimator $\hat{f}_n$ (whose construction is independent of $r$) that achieves the minimax optimal rate $R_{A}(\hat{f}_n, f^*) \le C(r^{2s} + n^{-\frac{2s}{2s+d}})$ for all $r \in (0, r_0]$ and $f^* \in \mathcal{H}^{d/2}(L)$. For $s \le d/2$, the Sobolev space admits essentially unbounded functions. Fix an interior point $x_0 \in (0,1)^d$ and construct the true target function $f^*(x) = \phi(x) \log(-\log|x - x_0|)$, where $\phi$ is a smooth cutoff localized at $x_0$. Since the adversarial risk is lower bounded by the standard $L^2$ risk, taking $r \to 0$ implies $\mathbb{E}\|\hat{f}_n - f^*\|_{L^2}^2 \le C n^{-\frac{2s}{2s+d}}$.

Let $U = B(x_0, r/2)$ and define the local essential supremum $S_n = \sup_{x' \in U} |\hat{f}_n(x')|$. Since $f^*$ is essentially unbounded on $U$, for any arbitrarily large constant $M > 0$, the truncation error lower bound $C_M = \inf_{\|g\|_{L^\infty(U)} \le M} \|g - f^*\|_{L^2(U)}^2$ is strictly positive. By Markov's inequality, $\mathbb{P}(S_n \le M) \le \mathbb{P}(\|\hat{f}_n - f^*\|_{L^2(U)}^2 \ge C_M) \le (C n^{-\frac{2s}{2s+d}}) / C_M \to 0$ as $n \to \infty$. Thus, $S_n \xrightarrow{\mathbb{P}} \infty$.

To evaluate the adversarial risk, note that for any $X \in U$, the adversarial neighborhood $B(X, r)$ strictly contains $U$, giving $\sup_{x' \in B(X,r)} |\hat{f}_n(x')| \ge S_n$. By the triangle inequality, $R_{A}(\hat{f}_n, f^*) \ge \mathbb{E}_{D_n} [\int_U (S_n - |f^*(X)|)_+^2 dX]$. To rigorously handle the expectation without assuming the integrability of $S_n$, we introduce a spatial truncation $U_M = \{X \in U : |f^*(X)| \le M/2\}$ with Lebesgue measure $V_M$. Restricting the integral to $U_M$ and the probability space to the event $\{S_n > M\}$, we obtain:
\begin{align*}
    R_{A}(\hat{f}_n, f^*) \ge \mathbb{E}_{D_n} \left[ \mathbf{1}_{\{S_n > M\}} \int_{U_M} \left( M - \frac{M}{2} \right)^2 dX \right] = \frac{M^2}{4} V_M \mathbb{P}(S_n > M).
\end{align*}
Fixing $r$ and taking $n \to \infty$ yields $\liminf_{n \to \infty} R_{A}(\hat{f}_n, f^*) \ge \frac{M^2}{4} V_M$. Since $f^* \in L^2$, we have $V_M \to \text{Vol}(U) > 0$ as $M \to \infty$. As $M$ can be chosen arbitrarily large, we conclude $\liminf_{n \to \infty} R_{A}(\hat{f}_n, f^*) = \infty$. This contradicts our initial assumption that $\lim_{n \to \infty} R_{A}(\hat{f}_n, f^*) \le C r^{2s} < \infty$, demonstrating that adaptivity to $r$ is unachievable.

\section{Proof of section 4}

\subsection{Proof of lemma 4.1}

According to Lemma C.8 and the proof of Lemma C.10 in \cite{chen2024impacts}, for any non-empty bounded domain $\mathcal{X} \subset \mathbb{R}^d$ with a $C^\infty$ smooth boundary, the following isomorphisms are valid.

First, let $\phi(x) = \frac{(x,1)}{\|(x,1)\|}$ be the mapping from $\mathcal{X}$ to the upper hemisphere, defining the subdomain $S = \phi(\mathcal{X}) \subset \mathbb{S}_+^d$. Let $\phi_1$ be the stereographic projection and $\mathcal{X}_1 = \phi_1(S)$. By Lemma C.8, we have the first isomorphism regarding the homogeneous NTK on the spherical subdomain:
\[
    \mathcal{H}_0^{NT}(S) \cong H^{\frac{d+1}{2}}(\mathcal{X}_1) \circ \phi_1.
\]

Second, the proof of Lemma C.10 explicitly constructs an isometric isomorphism between the RKHS on the original domain $\mathcal{X}$ and the RKHS on the spherical subdomain $S$. By defining the multiplier $\rho(x) = \|(x,1)\|$, the mapping $I: f \mapsto \rho \cdot (f \circ \phi)$ guarantees the second isomorphism:
\[
    \mathcal{H}^{NT}(\mathcal{X}) \cong \mathcal{H}_0^{NT}(S).
\]

By composing these mappings, we obtain the explicit structural equivalence:
\[
    \mathcal{H}^{NT}(\mathcal{X}) \cong \left( H^{\frac{d+1}{2}}(\mathcal{X}_1) \circ \phi_1 \right) \circ \phi \cdot \rho.
\]

To rigorously establish that this equates to the standard fractional Sobolev space $H^{\frac{d+1}{2}}(\mathcal{X})$, we must verify the boundedness of the induced operators and their inverses. Since $\mathcal{X}$ is a bounded domain with a $C^\infty$ smooth boundary, its closure $\overline{\mathcal{X}}$ is compact. The composite mapping $\Phi = \phi_1 \circ \phi : \mathcal{X} \to \mathcal{X}_1$ is a smooth diffeomorphism between bounded domains. Consequently, all its derivatives of any order are uniformly bounded on the compact closure $\overline{\mathcal{X}}$, and its Jacobian determinant is strictly bounded away from zero and infinity (i.e., $c \le |J\Phi| \le C$).

Furthermore, the multiplier $\rho(x)$ is strictly positive, bounded, and $C^\infty$ smooth with all partial derivatives of each order bounded on $\overline{\mathcal{X}}$. In the theory of Sobolev spaces, the pullback via a diffeomorphism with uniformly bounded derivatives and a non-degenerate Jacobian, along with multiplication by a smooth, strictly positive function with all partial derivatives of each order bounded, constitutes a bounded invertible operator. Therefore, these operations preserve the Sobolev norm equivalence, which establishes:
\[
    \mathcal{H}^{NT}(\mathcal{X}) \cong H^{\frac{d+1}{2}}(\mathcal{X}).
\]

Since the $K^{NTK}$ in \cite{chen2024impacts} and the NTK kernel in this paper are different up to adding 1, and noticing that 1 lies in the Sobolev space, for the NTK kernel in our setting, we also have that the RKHS of NTK in a bounded domain with smooth boundary is a Sobolev class. Thus, the NTK kernel and the exponential kernel $k(x,y)=e^{-|x-y|}$ are equivalent in a bounded smooth boundary domain, also in its subdomain. For $[0,1]^d$, it can be contained in a larger bounded smooth boundary domain, and by Example 2.6 in \cite{kanagawa2018gaussian}, the RKHS of the exponential kernel is Sobolev $H^{(d+1)/2}([0,1]^d)$ (since it is Lipschitz), thus the RKHS of the NTK kernel is also a Sobolev class.

\subsection{Proof of theorem 4.2}

Leveraging the result for the gradient flow spectral algorithm in \cite{zhang2024optimality}, $\E\|f_{t^*}^{NN}\|_H = O(1)$. By Theorem A.10 and Theorem A.7, we have
\begin{align*}
    \E[\sup_{x'\in B(X,r)\cap [0,1]^d}|\hat{f}(x')-f(X)|^2]\lesssim r^{2\min(1,s)}.
\end{align*}
Combining this with Proposition 13 on $L^2$ loss in \cite{li2024eigenvalue} implies that $\hat{f}^{NN}$ attains the minimax optimal adversarial risk.

\section{Proof of section 5}
Since the true function lies in the Sobolev ellipsoid, its $L^{\infty}$ norm is bounded by a constant $C$.
\begin{definition}[Local Conditional Range]
    Fix $x \in [r, 1-r]^d$. Let $I_x \coloneqq B(x,r)$. Define the index set of data points in $I_x$ as $\mathcal{I}(x) \coloneqq \{ i : x_i \in I_x \}$ and the count $N(x) \coloneqq |\mathcal{I}(x)|$. Define the squared conditional range $V(x)$ as:
    \begin{align*}
        V(x) \coloneqq \begin{cases}
            \left( \max_{i \in \mathcal{I}(x)} \xi_i - \min_{i \in \mathcal{I}(x)} \xi_i - 2C \right)_+^2 & \text{if } N(x) \ge 2, \\
            0 & \text{otherwise}.
        \end{cases}
    \end{align*}
\end{definition}

\begin{lemma}[Gaussian Range Lower Bound] \label{lemma:gaussian_range}
    Let $Z_1, \dots, Z_k$ be i.i.d. $\mathcal{N}(0, \sigma^2)$. Let $R_k \coloneqq \max_{1\le j \le k} Z_j - \min_{1\le j \le k} Z_j - 2C$. There exist absolute constants $c_1, c_2 > 0$ such that for all $k \ge 2$:
    \begin{align*}
        \mathbb{E}[(R_k)_+^2] \ge c_1 \sigma^2 \log k - c_2.
    \end{align*}
\end{lemma}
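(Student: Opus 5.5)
The plan is to reduce to the standard fact that the expected range of $k$ i.i.d.\ Gaussians grows like $\sigma\sqrt{\log k}$, and then absorb the shift $-2C$ into the additive constant $c_2$. By scaling, write $Z_j=\sigma W_j$ with $W_j$ i.i.d.\ $\mathcal N(0,1)$. Since $\max_j Z_j-\min_j Z_j\ge \max_j Z_j - Z_1$ and $(\cdot)_+^2$ is nondecreasing, it suffices to lower bound $\mathbb E[(\max_j Z_j - Z_1-2C)_+^2]$. We use $(a)_+^2\ge a^2\mathbf 1\{a\ge0\}$, or more simply the elementary inequality $(a-b)_+^2\ge \tfrac12 a_+^2 - b^2$ for $b\ge0$. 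This follows from $a_+\le (a-b)_+ + b$ and $(u+v)^2\le 2u^2+2v^2$. It gives
\begin{align*}
\mathbb E[(R_k)_+^2]\ \ge\ \tfrac12\,\mathbb E\big[(\max_j Z_j - \min_j Z_j)^2\big] - 4C^2 .
\end{align*}
The range is nonnegative, so no positive part is needed there. The whole problem thus reduces to showing $\mathbb E[(\max_j W_j-\min_j W_j)^2]\ge c\log k$ for $k\ge2$, after which we set $c_1=c/2$ and $c_2=4C^2$. Note that $c_2$ then depends on $C$ rather than being absolute, which is consistent with how the lemma is used, since $C$ is the fixed sup-norm bound.

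For the Gaussian step, by Jensen $\mathbb E[\text{range}^2]\ge(\mathbb E\,\text{range})^2$. By symmetry $\mathbb E\,\text{range}=2\,\mathbb E\max_j W_j$. It therefore suffices to show $\mathbb E\max_{j\le k}W_j\ge c'\sqrt{\log k}$ for all $k\ge2$. For large $k$ we use the standard tail argument. Set $t_k=\sqrt{\log k}$. By the Mills-ratio lower bound $\mathbb P(W>t)\ge \frac{t}{1+t^2}\varphi(t)$, we have $\mathbb P(W>t_k)\ge c''/(\sqrt{\log k}\,\sqrt k)$. Hence
\begin{align*}
\mathbb P(\max_j W_j\le t_k)\le (1-c''k^{-1/2}(\log k)^{-1/2})^k\le \exp(-c''\sqrt{k/\log k}),
\end{align*}
which tends to $0$. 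Then
\begin{align*}
\mathbb E\max_j W_j\ \ge\ t_k\,\mathbb P(\max_j W_j>t_k)+\mathbb E[\max_j W_j\,\mathbf 1\{\max_j W_j\le t_k\}].
\end{align*}
The second term is bounded below by $\mathbb E[W_1\mathbf 1\{\max_j W_j\le t_k\}]\ge -\mathbb E|W_1|$, since $\max_j W_j\ge W_1$. This gives $\mathbb E\max_j W_j\ge \tfrac12\sqrt{\log k}-1$ for $k\ge k_0$.

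To obtain a clean bound for all $k\ge2$, we do not need the small-$k$ range to be large, because we are allowed the additive slack $c_2$. We simply enlarge $c_2$ so that $c_1\sigma^2\log k - c_2\le 0$ for $k<k_0$. This is awkward, since $\sigma$ is a parameter and the slack would then depend on $\sigma$. The alternative is to use $\mathbb E\max_j W_j\ge \mathbb E\max(W_1,W_2)=1/\sqrt\pi>0$ for all $k\ge2$, together with monotonicity in $k$. This gives $\mathbb E\max_j W_j\ge c\sqrt{\log k}$ uniformly for $k\ge2$, with $c=\min\big(1/(\sqrt\pi\sqrt{\log k_0}),\,1/4\big)$ after adjusting $k_0$ so that $\tfrac12\sqrt{\log k}-1\ge\tfrac14\sqrt{\log k}$. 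The constants $c_1$ and $c_2=4C^2$ then do not depend on $\sigma$.

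The only delicate point is this bookkeeping: keeping $c_1$ free of $\sigma$ and $k$, and making the bound hold for every $k\ge2$ rather than only asymptotically. The uniform positivity of $\mathbb E\max_j W_j$ for $k\ge2$ handles it. The Mills-ratio tail bound is routine.
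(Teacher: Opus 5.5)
Your proposal is correct and follows the paper's proof step for step: the inequality $(W_k-2C)_+^2\ge \tfrac12 W_k^2-4C^2$, Jensen's inequality, and symmetry reduce everything to $\E\max_j Z_j\ge c_0\sigma\sqrt{\log k}$, with $c_2=4C^2$. The only difference is that you actually prove this Gaussian-maximum bound uniformly for $k\ge 2$ (a Mills-ratio argument for large $k$, and $\E\max(W_1,W_2)=1/\sqrt{\pi}$ together with monotonicity in $k$ for small $k$), which the paper simply cites as standard; your remark that $c_2=4C^2$ depends on $C$ rather than being absolute applies equally to the paper's version.
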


\begin{proof}
    Let $W_k \coloneqq \max_{1\le j \le k} Z_j - \min_{1\le j \le k} Z_j$ denote the range of the Gaussian variables. 
    By the elementary inequality $(x-y)_+^2 \ge \frac{1}{2}x^2 - y^2$, we have:
    \begin{align*}
        (R_k)_+^2 = (W_k - 2C)_+^2 \ge \frac{1}{2}W_k^2 - 4C^2.
    \end{align*}
    Taking the expectation and applying Jensen's inequality ($\mathbb{E}[W_k^2] \ge (\mathbb{E}[W_k])^2$), we obtain:
    \begin{align*}
        \mathbb{E}[(R_k)_+^2] \ge \frac{1}{2}\mathbb{E}[W_k^2] - 4C^2 \ge \frac{1}{2}(\mathbb{E}[W_k])^2 - 4C^2.
    \end{align*}
    By the symmetry of the centered Gaussian distribution, the expectation of the range is $\mathbb{E}[W_k] = 2\mathbb{E}[\max_{1\le j \le k} Z_j]$. 
    A standard result for the maximum of i.i.d. Gaussian variables states that there exists an absolute constant $c_0 > 0$ such that for all $k \ge 2$:
    \begin{align*}
        \mathbb{E}\left[\max_{1\le j \le k} Z_j\right] \ge c_0 \sigma \sqrt{\log k}.
    \end{align*}
    Substituting this into our bound yields:
    \begin{align*}
        \mathbb{E}[(R_k)_+^2] \ge \frac{1}{2} \left( 2 c_0 \sigma \sqrt{\log k} \right)^2 - 4C^2 = 2 c_0^2 \sigma^2 \log k - 4C^2.
    \end{align*}
    Setting $c_1 = 2c_0^2$ (which is strictly an absolute constant) and defining $c_2 = 4C^2$, we conclude:
    \begin{align*}
        \mathbb{E}[(R_k)_+^2] \ge c_1 \sigma^2 \log k - c_2.
    \end{align*}
    This completes the proof.
\end{proof}

\begin{lemma}[Binomial Concentration] \label{lemma:chernoff}
    Let $N(x) \sim \mathrm{Bin}(n, s)$. Define the event $E_x \coloneqq \{ N(x) \ge ns/2 \}$. For sufficiently large $n$ such that $e^{-ns/8} \le 1/2$, we have $\mathbb{P}(E_x) \ge 1/2$.
\end{lemma}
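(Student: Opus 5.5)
We reduce the claim to a lower-tail Chernoff bound for the binomial distribution. Since $N(x)\sim \mathrm{Bin}(n,s)$ has mean $\mu = ns$, the complement of $E_x$ is the event $\{N(x) < \mu/2\}$, a downward deviation by a factor $1-\delta$ with $\delta = 1/2$. So it suffices to show that $\mathbb{P}(N(x) < ns/2) \le e^{-ns/8}$. The hypothesis $e^{-ns/8}\le 1/2$ then gives $\mathbb{P}(E_x) \ge 1 - e^{-ns/8} \ge 1/2$.

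To prove the tail bound, we apply the exponential Markov inequality to $-N(x)$. For any $\lambda>0$,
\[
\mathbb{P}(N(x) \le (1-\delta)\mu) \le e^{\lambda(1-\delta)\mu}\,\mathbb{E}[e^{-\lambda N(x)}].
\]
Writing $N(x)$ as a sum of $n$ independent Bernoulli$(s)$ indicators $\mathds{1}\{X_i\in I_x\}$ (the $X_i$ are i.i.d., so this is legitimate), we get
\[
\mathbb{E}[e^{-\lambda N(x)}] = (1 - s(1-e^{-\lambda}))^n \le \exp\bigl(-\mu(1-e^{-\lambda})\bigr),
\]
using $1+u\le e^u$. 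Choosing $\lambda = -\log(1-\delta)$ yields the standard bound
\[
\mathbb{P}(N(x) \le (1-\delta)\mu) \le \left(\frac{e^{-\delta}}{(1-\delta)^{1-\delta}}\right)^{\mu}.
\]
We then check that this is at most $e^{-\delta^2\mu/2}$, via the elementary inequality
\[
(1-\delta)\log(1-\delta) \ge -\delta + \delta^2/2 \quad \text{for } \delta\in(0,1),
\]
which follows from the Taylor expansion of $\log(1-\delta)$. At $\delta=1/2$ this gives exactly $e^{-\mu/8} = e^{-ns/8}$.

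The only point needing care is the elementary logarithmic inequality and the bookkeeping of strict versus non-strict inequalities. The event $\{N(x) < ns/2\}$ is contained in $\{N(x)\le ns/2\}$, so bounding the latter suffices. Alternatively, one can cite the textbook multiplicative Chernoff bound directly. We note that $s$ here denotes $\mathbb{P}(X\in I_x)$, which by Assumption 1 is of order $r_n^d$. This is what makes $ns\to\infty$ under the density condition $nr_n^d\to\infty$, so the hypothesis $e^{-ns/8}\le 1/2$ holds for large $n$.
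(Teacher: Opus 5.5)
Your proposal is correct and follows the same route as the paper: the lower-tail multiplicative Chernoff bound with $\delta = 1/2$ gives $\mathbb{P}(N(x) \le ns/2) \le e^{-ns/8} \le 1/2$, and passing to the complement yields $\mathbb{P}(E_x) \ge 1/2$. The only difference is that you derive the Chernoff bound yourself, via the exponential Markov inequality and $(1-\delta)\log(1-\delta) \ge -\delta + \delta^2/2$, whereas the paper simply cites it.
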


\begin{proof}
    Let $X = N(x)$. The expectation is $\mu = \mathbb{E}[X] = ns$. 
    By the standard multiplicative Chernoff bound for the lower tail, for any $\delta \in (0, 1)$, we have:
    \begin{align*}
        \mathbb{P}(X \le (1-\delta)\mu) \le \exp\left(-\frac{\delta^2 \mu}{2}\right).
    \end{align*}
    Setting $\delta = 1/2$, the inequality simplifies to:
    \begin{align*}
        \mathbb{P}\left(X \le \frac{ns}{2}\right) \le \exp\left(-\frac{(1/2)^2 ns}{2}\right) = \exp\left(-\frac{ns}{8}\right).
    \end{align*}
    We are given the condition $e^{-ns/8} \le 1/2$. Consequently, $\mathbb{P}(X \le ns/2) \le 1/2$.
    
    For the event $E_x = \{ X \ge ns/2 \}$, we pass to its complement:
    \begin{align*}
        \mathbb{P}(E_x) = 1 - \mathbb{P}\left(X < \frac{ns}{2}\right) \ge 1 - \mathbb{P}\left(X \le \frac{ns}{2}\right) \ge 1 - \frac{1}{2} = \frac{1}{2}.
    \end{align*}
    This completes the proof.
\end{proof}

\begin{proof}[Proof of Theorem 5.1]
    For a given $x\in [r,1-r]^d$, noticing $P_x=\int_{y\in B(x,r)}f_X(y)dy\gtrsim r^d$.
    \begin{align*}
        \sup_{t \in I_x} \hat{f}(t) \ge \max_{i \in \mathcal{I}(x)} \xi_i-C \quad \text{and} \quad \inf_{t \in I_x} \hat{f}(t) \le \min_{i \in \mathcal{I}(x)} \xi_i+C.
    \end{align*}
    This implies the pointwise lower bound on the local oscillation:
    \begin{align*}
        \left( \sup_{t \in I_x} \hat{f}(t) - \inf_{t \in I_x} \hat{f}(t) \right)^2 \ge V(x).
    \end{align*}
    Applying Fubini's theorem:
    \begin{align}\label{eq:fubini_bound}
        \mathbb{E} [G_{A}^2(\hat{f})] \ge \frac{1}{2} \int_{[r,1-r]^d} \mathbb{E}[V(x)] \, \mathrm{d} x.
    \end{align}
    
    We evaluate $\mathbb{E}[V(x)]$ using the Law of Total Expectation, conditioning on the design $X$. Conditioned on $X$, $N(x)$ is deterministic and $\xi_i$ are i.i.d. Gaussian. Applying Lemma \ref{lemma:gaussian_range}:
    \begin{align*}
        \mathbb{E} [V(x) \mid X] \ge (c_1 \sigma^2 \log(N(x))-c_2) \cdot \mathbb{I}_{\{N(x) \ge 2\}}.
    \end{align*}
    Taking the expectation with respect to $X$ and restricting to the event $E_x$ (where $N(x) \ge 
    \frac{nP_x}{2}\ge 2$ for large $n$):
    \begin{align*}
        \mathbb{E} [V(x)] &= \mathbb{E}_X \left[ \mathbb{E}_\xi [V(x) \mid X] \right] \nonumber \\
        &\ge \mathbb{E}_X \left[ (c_1 \sigma^2-c_2) \log(N(x)) \cdot \mathbb{I}_{E_x} \right] \nonumber \\
        &\ge (c_1 \sigma^2 \log(nr^d)-c_2) \cdot \mathbb{P}(E_x).
    \end{align*}
    Using Lemma \ref{lemma:chernoff}, $\mathbb{P}(E_x) \ge 1/2$. Therefore, $\mathbb{E} [V(x)] \ge \frac{c_1}{2} \sigma^2 \log(nr^d)$. 
    Substituting this into the inequality:
    \begin{align*}
        \mathbb{E} [G_{A,2}^2(\hat{f})] &\ge \frac{1}{2} \int_{[r,1-r]^d} \frac{c_1}{2} \sigma^2 \log(nr^d)-\frac{c_2}{2} \, \mathrm{d} x \nonumber \\
        &= (\frac{c_1}{4} \sigma^2 \log(nr^d)-\frac{c_2}{4}) (1 - 2r)^d.
    \end{align*}
    Since $r \to 0$, $1-2r$ is bounded below by a constant (e.g., $1/2$). Thus, there exists $C' > 0$ such that $\mathbb{E} [G_{A,2}^2(\hat{f})] \ge C' \sigma^2 \log(nr^d)$. Taking the square root yields the result.
\end{proof}

\begin{proof}[Proof of Lemma 5.2]
    Since the RKHS is $H^{\frac{d+1}{2}}$, which is norm-equivalent to the RKHS of the exponential kernel $k(x,y)=e^{-\|x-y\|}$, we restrict our analysis to the kernel matrix of the exponential kernel without loss of generality.
    
    We aim to establish the upper bound for the quadratic form. We want to show that there exists a constant $A=A(d)$ such that for any $y \in S^{n-1}$ (i.e., $\|y\|_2 = 1$),
    \begin{align*}
        y^T \mathbf{K}^{-1} y \leq A q_X^{-1},
    \end{align*}
    where $q_X = \frac{1}{2} \min_{i \neq j} \|x_i - x_j\|$ is the separation distance (packing radius).
    
    Recall the variational characterization of the quadratic form in RKHS:
    \begin{align*}
        y^T \mathbf{K}^{-1} y = \inf \left\{ \|f\|_{\mathcal{H}_K}^2 : f \in \mathcal{H}_K, f(x_i) = y_i, \forall i=1,\dots,n \right\}.
    \end{align*}
    Since the infimum is taken over all interpolating functions, the norm of \textit{any} specific interpolating function $f_{construct}$ provides an upper bound:
    \begin{align*}
        y^T \mathbf{K}^{-1} y \leq \|f_{construct}\|_{\mathcal{H}_K}^2.
    \end{align*}
    
    \textbf{Construction of the Interpolant:}
    We construct $f_{construct}$ using localized bump functions. Let $\eta \in C_0^\infty(\mathbb{R}^d)$ be a fixed smooth cutoff function satisfying:
    \begin{itemize}
        \item $\text{supp}(\eta) \subset B(0, 1)$ (supported in the unit ball),
        \item $\eta(0) = 1$,
        \item $0 \leq \eta(x) \leq 1$.
    \end{itemize}
    For each data point $x_i$, define the scaled bump function $\psi_i$ as:
    \begin{align*}
        \psi_i(x) := \eta\left( \frac{x - x_i}{q_X} \right).
    \end{align*}
    By the definition of the separation distance $q_X$, the supports of $\psi_i$ are mutually disjoint:
    \begin{align*}
        \text{supp}(\psi_i) \cap \text{supp}(\psi_j) = \emptyset, \quad \forall i \neq j.
    \end{align*}
    This implies $\psi_i(x_j) = \delta_{ij}$. We now define the interpolating function as:
    \begin{align*}
        f_{construct}(x) := \sum_{i=1}^n y_i \psi_i(x).
    \end{align*}
    Clearly, $f_{construct}(x_j) = \sum_i y_i \delta_{ij} = y_j$, satisfying the interpolation condition.
    
    \textbf{Norm Estimation via Scaling:}
    The norm in $\mathcal{H}_K$ is equivalent to the Sobolev norm $\|\cdot\|_{H^s}$ with $s = \frac{d+1}{2}$. Since the supports of $\psi_i$ are disjoint, the $L^2$-mass and the dominant derivative energy (for integer $s$ or locally dominant fractional parts) add up. Thus, we have the bound:
    \begin{align*}
        \|f_{construct}\|_{H^s}^2 = \left\| \sum_{i=1}^n y_i \psi_i \right\|_{H^s}^2 \le C \sum_{i=1}^n y_i^2 \|\psi_i\|_{H^s}^2.
    \end{align*}
    Now we compute the Sobolev norm of a single scaled bump function $\psi_i$. Using the change of variables $z = (x-x_i)/q_X$, we have $dx = q_X^d dz$. The $s$-th order derivative operator $\nabla^s$ introduces a scaling factor of $q_X^{-s}$.
    \begin{align*}
        \|\psi_i\|_{H^s}^2 &\asymp \int_{\mathbb{R}^d} |\nabla^s \psi_i(x)|^2 dx \nonumber \\
        &= \int_{\mathbb{R}^d} \left| q_X^{-s} (\nabla^s \eta)\left(\frac{x-x_i}{q_X}\right) \right|^2 dx \nonumber \\
        &= q_X^{-2s} \cdot q_X^d \int_{\mathbb{R}^d} |\nabla^s \eta(z)|^2 dz \nonumber \\
        &= C_\eta \cdot q_X^{d-2s}.
    \end{align*}
    Substituting $s = \frac{d+1}{2}$, the exponent becomes:
    \begin{align*}
        d - 2s = d - (d+1) = -1.
    \end{align*}
    Thus, $\|\psi_i\|_{H^s}^2 \le C' q_X^{-1}$ for all $i$.
    
    \textbf{Conclusion:}
    Combining the results and using the fact that $y \in S^{n-1}$ (i.e., $\sum_{i=1}^n y_i^2 = 1$):
    \begin{align*}
        y^T \mathbf{K}^{-1} y &\leq \|f_{construct}\|_{H^s}^2 \nonumber \\
        &\leq C \sum_{i=1}^n y_i^2 \cdot (C' q_X^{-1}) \nonumber \\
        &= C \cdot C' \cdot q_X^{-1} \underbrace{\sum_{i=1}^n y_i^2}_{=1} \nonumber \\
        &= A q_X^{-1}.
    \end{align*}
    This completes the proof of the lower bound for the eigenvalue (or equivalently, the upper bound for the quadratic form).
    
    Now we prove the upper bound. Since $K^{NT}$ is equivalent to the exponential kernel, we only have to prove it for the exponential kernel. Using Cauchy's interlace theorem, taking the $i,j$ for which $x_i,x_j$ are the closest and taking the corresponding order 2 principal minor of the kernel matrix of the exponential kernel, its smallest eigenvalue is $1-e^{-q_X}\leq q_X$, thus we obtain the result.
\end{proof}

\begin{lemma}[Exponential Moment Bound for Minimum Separation]
    Let $X_1, \dots, X_n$ be independent and identically distributed random variables with probability density lower and upper bounded on the unit hypercube $[0,1]^d$. Let $q_X = \min_{1 \le i < j \le n} \|X_i - X_j\|_2$ denote the minimum pairwise distance. For any parameter $t > 0$, the following bound holds:
    \begin{align*}
        \mathbb{E}\left[ e^{-t q_X} \right] \le \frac{V_d \Gamma(d+1)}{2} \cdot \frac{n^2}{t^d},
    \end{align*}
    where $V_d = \frac{\pi^{d/2}}{\Gamma(d/2+1)}$ is the volume of the $d$-dimensional unit ball.
    
    Consequently, if $t = t(n)$ is chosen such that $t(n) \ge C n^{6/d}$ for a sufficiently large constant $C$, we obtain the convergence rate:
    \begin{align*}
        \mathbb{E}\left[ e^{-t(n) q_X} \right] = O(n^{-4}).
    \end{align*}
\end{lemma}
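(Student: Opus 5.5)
We bound the expectation through the tail of $q_X$: write $\mathbb{E}[e^{-tq_X}] = \int_0^\infty t e^{-tu}\,\mathbb{P}(q_X \le u)\,du$. This follows from integration by parts, or from Fubini applied to $e^{-tq_X} = \int_{q_X}^\infty t e^{-tu}\,du$. The task then reduces to a small-ball estimate for the minimum pairwise distance.

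For the tail we use a union bound over pairs:
\begin{align*}
\mathbb{P}(q_X \le u) \le \binom{n}{2} \max_{i<j}\mathbb{P}(\|X_i - X_j\| \le u) \le \frac{n^2}{2}\,\sup_{y}\mathbb{P}(X_1 \in B(y,u)).
\end{align*}
Conditioning on $X_j$ and using the upper density bound $f_X \le C$ gives $\mathbb{P}(\|X_i-X_j\|\le u) \le C V_d u^d$. Hence
\begin{align*}
\mathbb{P}(q_X \le u) \le \frac{C V_d}{2} n^2 u^d.
\end{align*}
We keep this polynomial bound for all $u$ rather than truncating at $1$; it only enlarges the right-hand side.

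Plugging into the integral gives $\int_0^\infty t e^{-tu} u^d\,du = \Gamma(d+1)\,t^{-d}$. Therefore
\begin{align*}
\mathbb{E}[e^{-tq_X}] \le \frac{C V_d \Gamma(d+1)}{2}\,\frac{n^2}{t^d}.
\end{align*}
This matches the stated bound, up to the density constant $C$, which the statement implicitly absorbs or normalizes; in the uniform case $C=1$. For the consequence, take $t(n) \ge C' n^{6/d}$, so that $t^{-d} \le C'^{-d} n^{-6}$ and the bound becomes $O(n^{2-6}) = O(n^{-4})$.

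The only delicate point is bookkeeping rather than difficulty. First, the density constant has to be tracked, since the displayed bound omits it. Second, the tail-integral identity should be justified for a random variable $q_X$ that is bounded by $\sqrt d$. Nothing in the argument requires independence beyond the pairwise conditioning, so we expect no real obstacle.
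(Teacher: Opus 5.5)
Your proof is correct and follows the paper's argument: the tail-integral representation of $\mathbb{E}[e^{-tq_X}]$, a union bound over the $\binom{n}{2}$ pairs, and the Gamma integral $\int_0^\infty t e^{-tu}u^d\,du=\Gamma(d+1)t^{-d}$. The factor $C=\sup f_X$ that you carry is genuinely needed, not something the statement can absorb, because the paper's step $\mathbb{P}(\|X_i-X_j\|<u)\le V_d u^d$ holds only for the uniform density (for $n=2$ and $t\to\infty$ the left side behaves like $(\int f_X^2)\,V_d\Gamma(d+1)t^{-d}$, which exceeds the displayed bound whenever $\int f_X^2>2$); this does not affect the $O(n^{-4})$ consequence.
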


\begin{proof}
    We express the expectation of the non-negative random variable $Z = e^{-t q_X}$ using the integral of its tail probability. Since $q_X \ge 0$, $Z \in (0, 1]$. We have:
    \begin{align*}
        \mathbb{E}[e^{-t q_X}] &= \int_0^1 \mathbb{P}(e^{-t q_X} > y) \, dy.
    \end{align*}
    Apply the change of variable $y = e^{-tu}$. Then $dy = -t e^{-tu} du$, and the integration limits change from $(0, 1]$ to $(\infty, 0]$. This yields:
    \begin{align*}
        \mathbb{E}[e^{-t q_X}] &= \int_\infty^0 \mathbb{P}(e^{-t q_X} > e^{-tu}) (-t e^{-tu}) \, du \nonumber \\
        &= \int_0^\infty t e^{-tu} \mathbb{P}(q_X < u) \, du. 
    \end{align*}
    
    Next, we derive an upper bound for the cumulative distribution function $F(u) = \mathbb{P}(q_X < u)$. By definition, the event $\{q_X < u\}$ is the union of events where any pair of points is closer than $u$. Applying the Union Bound (Boole's inequality):
    \begin{align*}
        \mathbb{P}(q_X < u) &= \mathbb{P}\left( \bigcup_{1 \le i < j \le n} \{ \|X_i - X_j\| < u \} \right) \\
        &\le \sum_{1 \le i < j \le n} \mathbb{P}(\|X_i - X_j\| < u).
    \end{align*}
    Since the samples are i.i.d., there are $\binom{n}{2}$ identical terms. For any $u > 0$, the probability that two points in $[0,1]^d$ are within distance $u$ is bounded by the volume of a ball of radius $u$, denoted as $V_d u^d$. Note that boundary effects in the unit hypercube only reduce the intersection volume, so the full ball volume remains a strictly valid upper bound:
    \begin{align*}
        \mathbb{P}(q_X < u) \le \frac{n(n-1)}{2} V_d u^d \le \frac{n^2 V_d}{2} u^d. 
    \end{align*}
    Substituting back into the inequality:
    \begin{align*}
        \mathbb{E}[e^{-t q_X}] &\le \int_0^\infty t e^{-tu} \left( \frac{n^2 V_d}{2} u^d \right) du \\
        &= \frac{n^2 V_d}{2} t \int_0^\infty e^{-tu} u^d \, du.
    \end{align*}
    Let $x = tu$, so $u = x/t$ and $du = dx/t$. The integral becomes the Gamma function definition:
    \begin{align*}
        \int_0^\infty e^{-tu} u^d \, du &= \int_0^\infty e^{-x} \left(\frac{x}{t}\right)^d \frac{dx}{t} \\
        &= \frac{1}{t^{d+1}} \int_0^\infty e^{-x} x^d \, dx \\
        &= \frac{\Gamma(d+1)}{t^{d+1}}.
    \end{align*}
    Combining terms, we obtain the general bound:
    \begin{align*}
        \mathbb{E}[e^{-t q_X}] &\le \frac{n^2 V_d}{2} t \cdot \frac{\Gamma(d+1)}{t^{d+1}} \\
        &= \frac{V_d \Gamma(d+1)}{2} \frac{n^2}{t^d}.
    \end{align*}
    Finally, to satisfy the condition that the expectation is $O(n^{-4})$, we require:
    \begin{align*}
        \frac{n^2}{t(n)^d} \lesssim n^{-4} \implies t(n)^d \gtrsim n^6 \implies t(n) \gtrsim n^{6/d}.
    \end{align*}
    Thus, choosing $t(n) \ge C n^{6/d}$ for an appropriate constant suffices.
\end{proof}

\begin{proof}[Proof of Theorem 5.3]
    We deduce a supnorm bound of $\hat{f}_t$ to $\hat{f}_{\infty}$:
    \begin{align*}
        |\hat{f}_t(x)-\hat{f}_{\infty}(x)| &= |k(x,X)e^{-\frac{t}{n}k(X,X)}k(X,X)^{-1}y| \\
        &\leq |k(x,X)||y|e^{-\frac{t}{n} \frac{1}{C}q_X}.
    \end{align*}
    By Lemma 5.2.
    Thus, taking expectations:
    \begin{align*}
        \mathbb{E}[\sup_{x}|\hat{f}_t(x)-\hat{f}_{\infty}(x)|^2]
        &\leq \mathbb{E}[\sup_{x}|k(x,X)|^2|y|^2e^{-\frac{2t}{n}\frac{1}{C}q_X}] \\
        &\lesssim n^2 \mathbb{E}[e^{-\frac{2t}{nC}q_X}].
    \end{align*}
    Let $\lambda_n = \frac{2t}{nC}$. Utilizing the bound on the moment of the minimum separation distance, we have $\mathbb{E}[e^{-\lambda_n q_X}] \lesssim n^2 \lambda_n^{-d}$.
    Substituting $t=Cn^{6/d+1}$, the exponent coefficient becomes:
    \begin{equation*}
        \lambda_n = \frac{2Cn^{6/d+1}}{nC} = 2n^{6/d}.
    \end{equation*}
    Plugging this into the upper bound:
    \begin{align*}
        \mathbb{E}[\sup|\hat{f}_t-\hat{f}_{\infty}(x)|^2] 
        &\lesssim n^2 \cdot \left( \frac{n^2}{(2n^{6/d})^d} \right) \\
        &= n^2 \cdot \frac{n^2}{2^d n^6} \\
        &= \frac{1}{2^d} n^{-2}.
    \end{align*}
    Therefore, we conclude that $\mathbb{E}[\sup|\hat{f}_t-\hat{f}_{\infty}(x)|^2] \lesssim n^{-2}$.
    
    Now, for two functions $f$ and $g$, if $\sup|f-g|\leq a$ and $G_A^2(f)=b$:
    \begin{align*}
        2G_A^2(g)
        &\geq \int_{[0,1]^d}\sup_{x'\in B(x,r)}|f(x')-f(x)-2a|_+^2 \\
        &\geq \int_{[0,1]^d}\sup_{x'\in B(x,r)}\frac{1}{2}|f(x')-f(x)|^2-4a^2
        =2b-4a^2.
    \end{align*}
    Thus, 
    \begin{align*}
        R_A(\hat{f}_t, f^*) 
        &\geq \mathbb{E} [G_A^2(\hat{f}_t)] \notag \\
        &\geq \mathbb{E} \left[ G_A^2(\hat{f}_\infty, f^*) - 2 \sup_{x \in [0,1]^d} |\hat{f}_t(x) - \hat{f}_\infty(x)|^2 \right] \notag \\
        &\gtrsim \log(nr_n^d)-n^{-2}\gtrsim \log(nr_n^d). \notag
    \end{align*}
    
\end{proof}

\begin{proof}[Proof of Theorem 5.5]
    Once we establish the smallest eigenvalue of the kernel matrix $k_n$, the argument from the proof of Theorem 5.3 applies directly.
    
    \begin{lemma}
        Let $A$ and $B$ be two positive definite matrices. Then the smallest eigenvalue of $A+B$ is strictly greater than the smallest eigenvalue of $A$.
    \end{lemma}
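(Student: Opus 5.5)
The plan is to use the Rayleigh quotient (Courant--Fischer) characterization of the smallest eigenvalue of a symmetric matrix,
\[
\lambda_{\min}(M) = \min_{\|v\|_2 = 1} v^T M v,
\]
which holds for any real symmetric $M$. The minimum is attained because the unit sphere in $\mathbb{R}^n$ is compact and $v \mapsto v^T M v$ is continuous. Since $A$ and $B$ are symmetric positive definite, $A+B$ is symmetric as well, so this characterization applies to $A$, $B$ and $A+B$.

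First, pick a unit vector $v_\ast$ attaining the minimum for $A+B$, i.e.\ an eigenvector for $\lambda_{\min}(A+B)$. Second, split the quadratic form and bound each piece by its own Rayleigh minimum:
\begin{align*}
\lambda_{\min}(A+B) = v_\ast^T A v_\ast + v_\ast^T B v_\ast \ge \lambda_{\min}(A) + \lambda_{\min}(B).
\end{align*}
This is Weyl's inequality in its simplest form. Third, positive definiteness of $B$ gives $\lambda_{\min}(B) = \min_{\|v\|=1} v^T B v > 0$, again using attainment on the compact sphere. Hence $\lambda_{\min}(A+B) \ge \lambda_{\min}(A) + \lambda_{\min}(B) > \lambda_{\min}(A)$.

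There is no real obstacle; the only point needing care is that strictness comes from the minimum being attained. An infimum over $v$ of $v^T B v > 0$ could a priori be $0$, which is why compactness is invoked. For the application to $k_n(X,X) = K^{NT}(X,X) + n^{-1/5}E$, with $E$ the exponential-kernel Gram matrix, only the non-strict consequence is needed:
\[
\lambda_{\min}(k_n(X,X)) \ge \lambda_{\min}(K^{NT}(X,X)) \ge \tfrac{1}{C}\min_{i\neq j}|x_i-x_j|
\]
by Lemma~5.2. With this bound the exponential-decay estimate from the proof of Theorem~5.3 transfers verbatim to $\hat f_n^t$.
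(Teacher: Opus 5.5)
Your proof is correct and follows essentially the same route as the paper: both take a unit eigenvector $v$ for $\lambda_{\min}(A+B)$ and split $v^T(A+B)v = v^TAv + v^TBv \ge \lambda_{\min}(A) + v^TBv$. The only difference is that the paper gets strictness directly from $v^TBv > 0$ for this specific nonzero $v$. Your detour through $\lambda_{\min}(B)$, with the accompanying compactness and attainment remark, is harmless but not needed.
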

    
    \begin{proof}
        By the Rayleigh quotient characterization, the smallest eigenvalue of a symmetric (or Hermitian) matrix $M$ is $\lambda_{\min}(M) = \min_{\lVert x \rVert = 1} x^T M x$. 
        Let $v$ be a normalized eigenvector corresponding to $\lambda_{\min}(A+B)$. We have:
        \[
        \lambda_{\min}(A+B) = v^T (A+B) v = v^T A v + v^T B v.
        \]
        By definition, $v^T A v \ge \lambda_{\min}(A)$. Furthermore, since $B$ is positive definite and $v \neq 0$, we strictly have $v^T B v > 0$. Therefore:
        \[
        \lambda_{\min}(A+B) \ge \lambda_{\min}(A) + v^T B v > \lambda_{\min}(A).
        \]
        This completes the proof of the lemma.
    \end{proof}
    
    Applying this lemma to the two components of $k_n$—the exponential scaling term and $K^{NT}$—completes the proof.
\end{proof}
\end{document}